\providecommand{\printnomenclature}{\printglossary}
\providecommand{\makenomenclature}{\makeglossary}
\providecommand{\tabularnewline}{\\}
\providecommand{\algorithmname}{Algorithm}
\theoremstyle{plain}
\newtheorem{thm}{\protect\theoremname}
\theoremstyle{remark}
\newtheorem{rem}[thm]{\protect\remarkname}
\newenvironment{proof}[1][\protect\proofname]{\par
\normalfont\topsep6\p@\@plus6\p@\relax
\trivlist
\itemindent\parindent
\item[\hskip\labelsep\scshape #1]\ignorespaces
}{%
\endtrivlist\@endpefalse
}
\providecommand{\proofname}{Proof}
\theoremstyle{plain}
\newtheorem{lem}[thm]{\protect\lemmaname}
\newcommand{\argmax}{\mathop{\mathrm{argmax}}}
\newcommand{\MMMC}{M\textsuperscript 3C }
\newcommand{\ubar}[1]{\underaccent{\bar}{#1}}
\providecommand{\lemmaname}{Lemma}
\providecommand{\remarkname}{Remark}
\providecommand{\theoremname}{Theorem}
\begin{document}

\begin{frontmatter}{}

\title{Maximum Margin Clustering for State Decomposition of Metastable Systems\tnoteref{t1}}

\tnotetext[t1]{Some preliminary results have been reported in \citep{wu2013iwann}.}

\author{Hao Wu\corref{cor1}}

\ead{hwu@zedat.fu-berlin.de}

\cortext[cor1]{Corresponding author. Tel.: +49 (0)30 838 75774.}

\address{Department of Mathematics \& Computer Science, Free University of
Berlin, Arnimallee 6, 14195 Berlin, Germany}
\begin{abstract}
When studying a metastable dynamical system, a prime concern is how
to decompose the phase space into a set of metastable states. Unfortunately,
the metastable state decomposition based on simulation or experimental
data is still a challenge. The most popular and simplest approach
is geometric clustering which is developed based on the classical
clustering technique. However, the prerequisites of this approach
are: (1) data are obtained from simulations or experiments which are
in global equilibrium and (2) the coordinate system is appropriately
selected. Recently, the kinetic clustering approach based on phase
space discretization and transition probability estimation has drawn
much attention due to its applicability to more general cases, but
the choice of discretization policy is a difficult task. In this paper,
a new decomposition method designated as \emph{maximum margin metastable
clustering} is proposed, which converts the problem of metastable
state decomposition to a semi-supervised learning problem so that
the large margin technique can be utilized to search for the optimal
decomposition without phase space discretization. Moreover, several
simulation examples are given to illustrate the effectiveness of the
proposed method.\end{abstract}
\begin{keyword}
metastable states \sep large margin methods \sep clustering analysis
\sep semi-supervised learning
\end{keyword}

\end{frontmatter}{}

\settowidth{\nomlabelwidth}{$\mathrm{vec}\left(\mathbf G\right)$}
\printnomenclature{}

\nomenclature{$\mathbf G\succeq 0$}{matrix $\mathbf G$ is symmetric and positive-semidefinite}\nomenclature{$\mathbf G\succeq \mathbf H$}{$\mathbf G-\mathbf H\succeq 0$}\nomenclature{$a\le \mathbf G\le b$}{all elements of $\mathbf G$ belongs to $[a,b]$}\nomenclature{$\mathrm{vec}\left(\mathbf G\right)$}{vector $(G_{11},G_{21},\ldots,G_{mn})^\intercal$ which consists of elements of $\mathbf G=[G_{ij}]\in\mathbb{R}^{m\times n}$}\nomenclature{${\left\Vert\mathbf G\right\Vert}$}{Frobenius norm of $\mathbf G$}\nomenclature{$\mathrm{tr}\left(\mathbf G\right)$}{trace of square matrix $\mathbf G$}\nomenclature{$1_\omega$}{indicator function of event $\omega$, taking value $1$ if $\omega$ holds and $0$ otherwise}\nomenclature{$\mathbf 0,\mathbf 1$}{vectors of zeros and ones in appropriate dimensions}\nomenclature{$\mathbf I$}{identity matrix}\nomenclature{$\mathbf G\otimes \mathbf H$}{Kronecker product of $\mathbf G$ and $\mathbf H$}\nomenclature{$\mathrm{atan2}(y,x)$}{the angle $\theta\in(-\pi,\pi]$ which satisfies $\cos\theta=x/\sqrt{(x^2+y^2)}$ and $\sin\theta=y/\sqrt{(x^2+y^2)}$}

\section{Introduction\label{sec:Introduction}}

Metastability is an ubiquitous phenomenon which occurs in many complex
systems in nature, including conformational transitions in macromolecules
\citep{noe2008transition}, autocatalytic chemical reactions \citep{biancalani2012noise}
and climate changes \citep{berglund2002metastability}. Generally
speaking, the metastability of a dynamical system means that the phase
space of the system consists of multiple macrostates called \emph{metastable
states}, in which the system tends to persist for a long time before
transiting rapidly to another metastable state. In this paper, we
focus on the problem of metastable state decomposition, i.e., how
to partition the phase space of a given system into a set of ideal
metastable states so that transitions between different metastable
states are rare events, meanwhile, there is a considerable gap between
the timescales of movements within and between metastable states.
This is an important problem for analysis of metastable systems because:
\begin{enumerate}
\item Metastable state itself is often physically interesting. For instance,
for biomolecules, the metastable states are related to the basins
of the free energy surface, and the metastable state decomposition
is helpful for seeking the native states with lowest energy and kinetic
trap states with locally minimal energy which can represent the large
scale geometric structures of conformations (see e.g., \citep{noe2007hierarchical,schwantes2013improvements}).
\item The dynamical behavior of a metastable system can be approximately
described as a transition network of metastable states on a slow timescale,
and such approximate description is able to provide a simple and global
picture of the system dynamics which preserves the essential dynamical
properties. Fig.~\ref{fig:Illustration-of-metastability} gives an
example of metastable system and the its dynamical approximation.
Note that the metastable state based dynamical approximation is closely
related with the Markov state models (MSMs) \citep{prinz2011markov},
where transitions between discrete states obtained from phase space
discretization are assumed to be Markovian and the corresponding transition
probability matrices can be used to characterize the system dynamics.
A large number of theoretical and experimental studies (see e.g.,
\citep{aldhaheri1989aggregation,chodera2006long,noe2007hierarchical,sarich2010approximation,prinz2011markov})
demonstrated that a small-sized and satisfying MSM can be obtained
by treating each metastable state as a Markovian state if the transition
rates between different metastable states are sufficiently small compared
to lifetimes of metastable states. Although the research focus of
MSMs has shifted from metastability to spectral approximation and
maximizing metastability has been shown to be not the optimal way
to improve the accuracy of MSMs in recent years \citep{chodera2014markov},
the metastable state decomposition is still commonly used to construct
MSMs in applications due to its ease-of-use. Moreover, it is required
for some more advanced analysis and modeling techniques of metastable
systems. For example, it was proved in \citep{sarich2010approximation,prinz2011markov}
that the quality of an MSM can be improved through fine discretization
around boundaries between metastable states, and the hidden Markov
model analysis approach of metastable systems proposed in \citep{noe2013projected}
needs the metastable state decomposition for parameter initialization.
\end{enumerate}
Moreover, the metastable state decomposition is also applicable to
unsupervised classification problems when data are sequentially sampled
and the data categories are rarely changed during sampling (e.g.,
classification of sensor data).

\begin{figure}
\noindent \subfloat[Potential energy landscape]{%
\begin{minipage}[t][0.45\textwidth]{0.5\textwidth}%
\noindent \begin{center}
\includegraphics[width=0.8\textwidth]{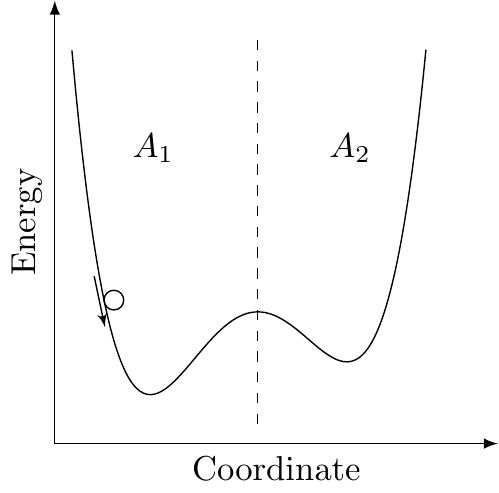}
\par\end{center}%
\end{minipage}

}\subfloat[Transition network of metastable state]{%
\begin{minipage}[t][0.45\textwidth]{0.5\textwidth}%
\noindent \begin{center}
\vspace{0.25\textwidth}
\includegraphics[width=0.8\textwidth]{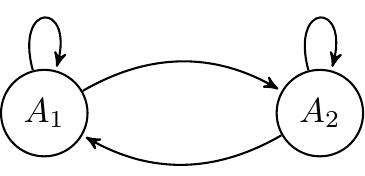}
\par\end{center}%
\end{minipage}

}

\protect\caption{Illustration of dynamical approximation based on metastability decomposition.
The left panel (a) shows the energy landscape of a stochastic diffusion
system, which has two potential wells and can be decomposed into two
metastable states $A_{1}$ and $A_{2}$ according to the energy barrier.
The right panel (b) shows the corresponding transition network obtained
from the metastable decomposition.\label{fig:Illustration-of-metastability}}
\end{figure}

For some low-dimensional systems, metastable states can be found manually
by exploring their energy landscapes (see e.g., \citep{groningen2001essential,swope2004describing,sorin2005exploring,elmer2005foldamerII}).
But for high-dimensional and complex systems, the intuitive partitioning
of the whole phase space is generally infeasible, and the metastable
state decomposition can only be performed through statistical analysis
of the simulation or experimental data.

The simplest approach for data based metastable state decomposition
is geometric clustering, which detects the metastable states through
grouping data points in phase space which are geometrically similar,
and are usually implemented by classical clustering algorithms \citep{becker1997geometric,daura1999folding,chema2003nearest,glattli2004valine,shao2007clustering}
including $k$-means, $k$-medoids, Bayesian clustering and self-organizing
maps. The theory basis of this approach is that metastable states
generally correspond to ``energy basins'' in the phase space and the
local maxima of the density function of the equilibrium state distribution
are then centers of metastable states. The application of the geometric
clustering has two key limitations. First, it requires that the global
equilibrium is reached in simulations or experiments so that the empirical
distribution of the data set is consistent with the equilibrium distribution.
Second, it is sensitive to the selection of the coordinates or projected
coordinates of the phase space because a ``bad'' coordinate system
may destroy the geometric structure of the energy landscape. (Note
that for analysis of experimental data, the coordinate system is determined
by the observation model of experiments and cannot be arbitrarily
selected.) An alternative to the geometrical cluster approach is the
density based clustering approach \citep{yao2009topological,keller2010comparing},
which tries to measure the data point density directly and cut out
regions of high data point density from data sets. It is, however,
rather susceptible to statistical uncertainty and noise.

A more general approach is kinetic clustering, which involves two
steps: (1) discretize the phase space into small bins, and estimate
the transition probabilities between bins from simulation or experimental
trajectories, (2) lump the discrete bins to metastable states through
minimizing the transition probabilities between different metastable
states. In contrast to the geometric clustering approach, this approach
implements clustering based on ``kinetic similarity'' rather than
geometric similarity, and can effectively utilize the information
of the system dynamics contained in data to improve the accuracy of
the decomposition. Furthermore, since the kinetic clustering method
is implemented based on transition probability distributions, it only
requires that simulations or experiments are in local equilibrium
instead of global equilibrium \citep{noe2009constructing}. From the
viewpoint of Markov chain theory, the kinetic clustering for metastable
state decomposition is in fact a Markov chain compression problem
\citep{aldhaheri1989aggregation}.

The most popular representatives of this approach are Perron cluster
cluster analysis (PCCA) and PCCA+ \citep{deuflhard2000identification,deuflhard2005robust,noe2007hierarchical},
which were developed based on the principle of spectral clustering
by using transition probabilities to define similarities between bins.
PCCA(+) methods have been widely used in applications due to their
computational efficiency and the acceptable quality, but they are
only applicable to time reversible systems. In \citep{mehrmann2008svd},
a singular value decomposition based lumping algorithm for nonreversible
systems was proposed. Furthermore, Jain and Stock \citep{jain2012identifying}
presented a ``most probable path algorithm'' for bin lumping in order
to avoid the computation of large matrix decompositions, and a Bayesian
lumping method was proposed in \citep{bowman2012improved} which considers
the statistical uncertainty in transition probabilities. The main
difficulty of kinetic clustering comes from the choice of bins, and
the boundaries of metastable states are unable to be accurately captured
with a poor choice of bins. Generally speaking, the discrete bins
used in kinetic clustering are still given by some geometric clustering
algorithm where the system dynamics is not considered, and one can
only improve the decomposition accuracy by adding more bins. But a
large number of bins may cause overfitting problem in the transition
probability estimation. Some recent publications \citep{kellogg2012evaluation,mcgibbon2014statistical}
investigated the choice of bin number from the perspective of model
comparison, but how to adjust the shape of bins to meet the requirement
of metastable state decomposition is still an open problem. In \citep{chodera2007automatic},
an adaptive decomposition method was proposed, which iteratively implements
the space discretization and lumping to modify boundaries of metastable
states. However, the convergence of this method cannot be guaranteed.

The objective of this paper is to propose a new clustering method
for meta\-stable state decomposition based on the large margin principle.
Recently, large margin techniques have received increasing attention
in the machine learning community \citep{crammer2001algorithmic,xu2007convex}.
For a given supervised or unsupervised classification problem, large
margin techniques can improve the robustness and generalization capability
of the classifier through maximizing the margin between training data
and classification boundaries. In this paper, we propose an optimization
model called \emph{maximum margin metastable clustering} for metastable
state decomposition by combining the large margin criterion and metastability
criterion, which can effectively utilize both the geometric and the
dynamical information contained in data, and develop a two-stage algorithm
for searching the optimal decomposition by combining global search
and local search techniques. In comparison with the previous decomposition
approaches, this decomposition method directly constructs the boundaries
of metastable states in the continuous phase space without any pre-discretization,
and can achieve a reliable and robust decomposition based on the the
large margin criterion even for small-scale data sets.

\section{Preliminaries\label{sec:Preliminaries}}

In this section, we review briefly some of the relevant background
on large margin learning. Given a set of training data $\{\mathbf{x}_{n}\}_{n=1}^{N}$
and their class labels $\{y_{n}\}_{n=1}^{N}$, where each $\mathbf{x}_{n}$
is a data point sampled from a domain $\mathcal{X}$ and $y_{n}\in\{1,\ldots,\kappa\}$,
the support vector machine (SVM) finds a hyperplane classifier defined
by the decision rule
\begin{equation}
y=\argmax_{1\le k\le\kappa}\left(\mathbf{w}_{k}^{\intercal}\bm{\phi}\left(\mathbf{x}\right)+b_{k}\right)\label{eq:decision-rule}
\end{equation}
which can map the training data to correct labels and achieve strong
generalization performance through maximizing the classification margin
between training data and decision boundaries. Here $\bm{\phi}$ is
a mapping function from $\mathcal{X}$ to a feature space $\mathbb{R}^{d}$,
and is generally induced by a Mercer kernel function $\mathrm{Ker}\left(\cdot,\cdot\right)$
with 
\begin{equation}
\mathrm{Ker}\left(\mathbf{x},\mathbf{x}'\right)=\bm{\phi}\left(\mathbf{x}\right)^{\intercal}\bm{\phi}\left(\mathbf{x}'\right)
\end{equation}
The optimal decision rule of SVM can be obtained by solving the following
optimization problem \citep{crammer2001algorithmic}:
\begin{equation}
\begin{array}{cl}
\min\limits _{\mathbf{W},\mathbf{b}} & \frac{1}{2}\left\Vert \mathbf{W}\right\Vert ^{2}\\
\mathrm{s.t.} & \forall n=1,\ldots,N,\quad k=1,\ldots,\kappa,\\
 & \left(\mathbf{w}_{y_{n}}^{\intercal}-\mathbf{w}_{k}^{\intercal}\right)\bm{\phi}\left(\mathbf{x}_{n}\right)+\left(b_{y_{n}}-b_{k}\right)+1_{y_{n}=k}\ge1.
\end{array}\label{eq:svm-no-slack}
\end{equation}
where $\mathbf{W}=(\mathbf{w}_{1}^{\intercal},\ldots,\mathbf{w}_{\kappa}^{\intercal})^{\intercal}$
and $\mathbf{b}=(b_{1},\ldots,b_{\kappa})^{\intercal}$. The constraint
of \eqref{eq:svm-no-slack}, called \emph{large margin constraint},
states that for each training data $\mathbf{x}_{n}$, the decision
function value of $\mathbf{x}_{n}$ for the true class $y_{n}$ exceeds
the decision function value for any other competing class $k\neq y_{n}$
by at least one, and $\left\Vert \mathbf{W}\right\Vert $ can be shown
to be inversely proportional to the classification margin in the feature
space under the large margin constraint. Note that \eqref{eq:svm-no-slack}
is a quadratic programming and the global optimum can be efficiently
found by convex optimization methods. However, in many practical cases,
there is no feasible solution which satisfies the large margin constraint
exactly due to the linear non-separability of training data, and we
can only seek a large margin classifier which separates the data ``as
much as possible''. In order to achieve this, we can introduce a slack
variable $\xi_{n}$ for each $\mathbf{x}_{n}$ and modify the optimization
problem \eqref{eq:svm-no-slack} as 
\begin{equation}
\begin{array}{cl}
\min\limits _{\mathbf{W},\mathbf{b},\bm{\xi}} & \frac{1}{2}\beta\left\Vert \mathbf{W}\right\Vert ^{2}+\frac{1}{N}\mathbf{1}^{\intercal}\bm{\xi}\\
\mathrm{s.t.} & \forall n=1,\ldots,N,\quad k=1,\ldots,\kappa,\\
 & \left(\mathbf{w}_{y_{n}}^{\intercal}-\mathbf{w}_{k}^{\intercal}\right)\bm{\phi}\left(\mathbf{x}_{n}\right)+\left(b_{y_{n}}-b_{k}\right)+1_{y_{n}=k}\ge1-\xi_{n}.
\end{array}\label{eq:svm}
\end{equation}
where $\mathbf{\mathbf{\bm{\xi}}}=(\xi_{1},\ldots,\xi_{N})^{\intercal}$
and $\beta>0$ is the regularization parameter. It can be observed
from constraints of \eqref{eq:svm} that $\xi_{n}\ge0$ for all $n$,
and $\xi_{n}>0$ if and only if $\left(\mathbf{w}_{y_{n}}^{\intercal}\bm{\phi}\left(\mathbf{x}_{n}\right)+b_{y_{n}}\right)-\left(\mathbf{w}_{k}^{\intercal}\bm{\phi}\left(\mathbf{x}_{n}\right)+b_{k}\right)\ge1$
does not hold for some $k\neq n$. Therefore we can conclude that
$\xi_{n}$ represents the misclassification loss of $\mathbf{x}_{n}$,
and the objective function of \eqref{eq:svm} balances the empirical
risk on the training data versus the classification margin.
\begin{rem}
It is worth pointing out that SVM was originally designed for two-class
classification \citep{vapnik1998statistical}. A variety of strategies
have been proposed to extend the SVM for multi-class problems, such
as one-against-all \citep{bottou1994comparison}, one-against-one
\citep{friedman1996another} and error-correcting output coding \citep{allwein2001reducing}.
Here we select the multi-class SVM proposed in \citep{crammer2001algorithmic}
which minimizes the the multi-class margin loss directly, because
it is more ``logical'' than the other strategies from the perspective
of large margin learning and can be easily applied to unsupervised
learning problems (see below).
\end{rem}
Maximum margin clustering (MMC) \citep{xu2004maximum,xu2005unsupervised}
extends the maximum margin principle to unsupervised learning, i.e.,
data clustering. For a set of unlabeled data $\{\mathbf{x}_{n}\}$,
MMC targets to construct a maximum margin decision rule by optimizing
\eqref{eq:svm} with both $(\mathbf{W},\mathbf{b})$ and data labels
$\{y_{n}\}$ being decision variables. But in this unsupervised case,
the optimization problem \eqref{eq:svm} has a trivially optimal solution
with $y_{n}\equiv1$ and $\left\Vert \mathbf{W}\right\Vert =0$, which
implies that all data are assigned to the same class and an infinite
classification margin is obtained. To prevent such physically meaningless
solutions with empty classes, the following class balance constraint
is required for MMC:
\begin{equation}
\varrho_{l}N\le\sum_{m=1}^{N}1_{y_{m}=k}\le\varrho_{u}N,\quad\forall k=1,\ldots,\kappa\label{eq:class-balance-constraint}
\end{equation}
where $\varrho_{l},\varrho_{u}$ denote the lower and upper bounds
of the proportion of each class size, and satisfy $0<\varrho_{l}<1/\kappa<\varrho_{u}<1$.
Then, the complete MMC problem (with slack variables) can be formulated
as

\begin{equation}
\begin{array}{cl}
\min\limits _{\mathbf{y},\mathbf{W},\mathbf{b},\bm{\xi}} & \frac{1}{2}\beta\left\Vert \mathbf{W}\right\Vert ^{2}+\frac{1}{N}\mathbf{1}^{\mathrm{T}}\mathbf{\bm{\xi}}\\
\mathrm{s.t.} & \forall n=1,\ldots,N,\quad k=1,\ldots,\kappa,\\
 & \left(\mathbf{w}_{y_{n}}^{\intercal}-\mathbf{w}_{k}^{\intercal}\right)\bm{\phi}\left(\mathbf{x}_{n}\right)+\left(b_{y_{n}}-b_{k}\right)+1_{y_{n}=k}\ge1-\xi_{n},\\
 & \varrho_{l}N\le\sum_{m=1}^{N}1_{y_{m}=k}\le\varrho_{u}N.
\end{array}\label{eq:mmc}
\end{equation}
with $\mathbf{y}=(y_{1},\ldots,y_{N})\in\left\{ 1,\ldots,\kappa\right\} ^{N}$.
In contrast with the SVM problem \eqref{eq:svm}, \eqref{eq:mmc}
is a nonconvex mixed-integer problem and much more difficult to solve.
The existing optimization methods for the MMC problem can be roughly
categorized into two types. The first type of method relaxes the MMC
problem into a semidefinite programming (SDP) problem that can be
globally optimized by standard SDP solvers \citep{xu2005unsupervised,valizadegan2006generalized}.
However, this type of method is very computationally expensive and
can only be applied to small data sets. The other type of method utilizes
some local search algorithms, such as concave-convex procedure \citep{zhao2008efficient}
and alternating optimization \citep{zhang2009maximum}, to solve the
MMC problem directly, which is more efficient than the relaxation
method but usually suffers from undesired local minima.
\begin{rem}
The balance constraint \eqref{eq:class-balance-constraint} is also
useful to avoid the problem of separating a single outlier (or very
small group of outliers) from the rest of the data, and allows one
to incorporate the prior knowledge on the balance of data distribution.
The theoretical analysis and empirical research on the balance constraint
is still very limited and the parameters $\varrho_{l},\varrho_{u}$
are usually selected by trial and error. According to our numerical
experience, the parameters can be simply set as $[\varrho_{l},\varrho_{u}]=[\epsilon,1-\epsilon]$
with $\epsilon=10^{-3}\sim10^{-1}$, and the value of $\epsilon$
influences only a little on clustering results. Moreover, it is worth
pointing out that the balance constraints $[\varrho_{l},\varrho_{u}]=[0,1]$
and $[\varrho_{l},\varrho_{u}]=[\epsilon,1-\epsilon]$ with $\epsilon$
being positive and sufficiently small are completely different in
essence for MMC. The former one is always an ``inactive'' constraint
and the optimal decision boundary is infinitely far away from all
the data, and the latter one enforces that the optimal decision boundary
of MMC must pass through the data set.
\end{rem}

\begin{rem}
For some commonly used Mercer kernel functions (e.g., Gaussian kernels),
the dimensions of the corresponding feature spaces are extremely high
or infinite, and the direct feature mappings are impossible. In such
a case, the kernel trick can be used to solve the SVM or MMC problem
by performing the feature mapping implicitly based on the kernel matrix
$\mathbf{K}=\left[K_{ij}\right]=\left[\mathrm{Ker}\left(\mathbf{x}_{i},\mathbf{x}_{j}\right)\right]\in\mathbb{R}^{N\times N}$
(see \citep{crammer2001algorithmic,xu2005unsupervised} for details).
While the kernel trick has been widely and successfully applied in
large margin learning, the calculation of kernel matrices is a bottleneck
of the kernel trick for large-scale data sets. In recent years, a
lot of alternatives to the kernel trick have been proposed to reduce
the computational and storage costs (see, e.g., \citep{Rahimi2007Random,zhang2009maximum,Pham2013fast,Kwak2013nonlinear}),
which can approximate the induced feature mapping $\bm{\phi}$ by
a low dimensional function $\hat{\bm{\phi}}\left(\mathbf{x}\right)$
such that
\begin{equation}
\bm{\phi}\left(\mathbf{x}\right)^{\intercal}\bm{\phi}\left(\mathbf{x}'\right)=\mathrm{Ker}\left(\mathbf{x},\mathbf{x}'\right)\approx\hat{\bm{\phi}}\left(\mathbf{x}\right)^{\intercal}\hat{\bm{\phi}}\left(\mathbf{x}'\right),\quad\forall\mathbf{x},\mathbf{x}'\in\mathcal{X}
\end{equation}
Therefore, in this paper, we only consider the case that the feature
mapping $\bm{\phi}$ can be explicitly defined and computed.
\end{rem}

\section{Maximum margin metastable clustering}

In this section, we apply the framework of large margin learning to
metastability analysis. Suppose that we have $L$ trajectories $\{\mathbf{x}_{1}^{1},\ldots,\mathbf{x}_{M_{1}}^{1}\},\allowbreak\{\mathbf{x}_{1}^{2},\ldots,\mathbf{x}_{M_{2}}^{2}\},\allowbreak\ldots,\allowbreak\{\mathbf{x}_{1}^{L},\ldots,\mathbf{x}_{M_{L}}^{L}\}$
in a phase space $\mathcal{X}$ which are generated by simulations
or experiments of a dynamical system, where the point $\mathbf{x}_{t}^{l}$
represents the system state at time $t$ in the $l$-th run, and $M_{l}$
denotes the time length of the $l$-th run. Furthermore, for convenience
of notation, we define
\begin{equation}
\mathcal{P}=\{(\bar{\mathbf{x}}_{n},\ubar{\mathbf{x}}_{n})\}_{n=1}^{N}=\{(\mathbf{x}_{t}^{l},\mathbf{x}_{t+1}^{l})|1\le l\le L,1\le t<M_{l}\}
\end{equation}
as the set of all transition pairs that appear in trajectories. The
purpose of metastable state decomposition is to partition $\mathcal{X}$
into $\kappa$ subspaces (\emph{metastable states}), so that it is
a rare event -- not only for the given $L$ trajectories but also
for new trajectories generated by the same system -- that $\mathbf{x}_{t}$
and $\mathbf{x}_{t+1}$ belong to different metastable states.
\begin{rem}
An important issue for metastable state decomposition is determining
the value of $\kappa$. But this issue is beyond the scope of this
paper and we simply assume that $\kappa$ is given. In practical applications,
$\kappa$ can be obtained by analyzing of life times of metastable
states (see \citep{noe2007hierarchical} for details).
\end{rem}
Analogous to learning in SVM and MMC, we can construct a large margin
decision rule for metastable state decomposition based on the following
criteria:
\begin{enumerate}
\item \textbf{\textit{Metastability criterion.}} For most of state transition
pairs $(\bar{\mathbf{x}}_{n},\ubar{\mathbf{x}}_{n})\in\mathcal{P}$,
$\bar{\mathbf{x}}_{n}$ and $\ubar{\mathbf{x}}_{n}$ should be classified
to the same metastable state, which implies that the boundaries between
metastable states are rarely crossed in runs of the system.
\item \textbf{\textit{Large margin criterion.}} The metastable state boundaries
should be placed as far away from the trajectory data as possible
in order to improve the generalization performance of the decomposition
result for unknown trajectories.
\end{enumerate}
The two criteria leads to a maximum margin metastable clustering (M\textsuperscript{3}C)
method that minimizes $\left\Vert \mathbf{W}\right\Vert $ under the
following \emph{large margin metastable constraint}:
\begin{eqnarray}
\left(\mathbf{w}_{y_{n}}^{\intercal}-\mathbf{w}_{k}^{\intercal}\right)\bm{\phi}\left(\bar{\mathbf{x}}_{n}\right)+\left(b_{y_{n}}-b_{k}\right)+1_{y_{n}=k} & \ge & 1\nonumber \\
\left(\mathbf{w}_{y_{n}}^{\intercal}-\mathbf{w}_{k}^{\intercal}\right)\bm{\phi}\left(\ubar{\mathbf{x}}_{n}\right)+\left(b_{y_{n}}-b_{k}\right)+1_{y_{n}=k} & \ge & 1,\quad\forall n,k\label{eq:metastable-largin-margin-constraint}
\end{eqnarray}
where $y_{n}\in\left\{ 1,\ldots,\kappa\right\} $ denotes the metastable
state label of $\bar{\mathbf{x}}_{n}$ and $\ubar{\mathbf{x}}_{n}$.
It can be seen that M\textsuperscript{3}C is in fact a semi-supervised
learning method which can be interpreted as follows:
\begin{enumerate}
\item The constraint enforces $\bar{\mathbf{x}}_{n}$ and $\ubar{\mathbf{x}}_{n}$
being assigned to the same metastable state for each state transition
pair $(\bar{\mathbf{x}}_{n},\ubar{\mathbf{x}}_{n})$.
\item For any $\mathbf{x}\in\bigcup_{n=1}^{N}\{\bar{\mathbf{x}}_{n},\ubar{\mathbf{x}}_{n}\}_{n=1}^{N}$,
if it is assigned to the metastable state $y$, the the distances
between $\mathbf{x}$ and the boundaries of $y$ are
\begin{equation}
\frac{\left(\mathbf{w}_{y}^{\intercal}-\mathbf{w}_{k}^{\intercal}\right)\bm{\phi}\left(\mathbf{x}\right)+\left(b_{y}-b_{k}\right)}{\left\Vert \mathbf{w}_{y}-\mathbf{w}_{k}\right\Vert }\ge\frac{1}{\left\Vert \mathbf{w}_{y}-\mathbf{w}_{k}\right\Vert },\quad\text{for }k\neq y
\end{equation}
Then the margin between data and decision boundaries can be increased
by minimizing the objective function $\left\Vert \mathbf{W}\right\Vert $
of M\textsuperscript{3}C.
\end{enumerate}
Moreover, it is interesting to note that the M\textsuperscript{3}C
method can be expressed as a specific MMC method in the space of transition
pairs $(\bar{\mathbf{x}}_{n},\ubar{\mathbf{x}}_{n})$, which seeks
a decision rule
\begin{equation}
\left(\bar{y},\ubar y\right)=\argmax_{\bar{k},\ubar k}\mathbf{w}_{\bar{k}\ubar k}^{\intercal}\bm{\phi}\left(\bar{\mathbf{x}},\ubar{\mathbf{x}}\right)+b_{\bar{k}\ubar k}\label{eq:decision-rule-pair}
\end{equation}
under a large margin constraint
\begin{equation}
\left(\mathbf{w}_{y_{n}y_{n}}^{\intercal}-\mathbf{w}_{\bar{k}\ubar k}^{\intercal}\right)\bm{\phi}\left(\bar{\mathbf{x}}_{n},\ubar{\mathbf{x}}_{n}\right)+\left(b_{y_{n}y_{n}}-b_{\bar{k}\ubar k}\right)+1_{y_{n}=\bar{k}=\ubar k}\ge1\label{eq:large-margin-constraint-pair}
\end{equation}
where $\mathbf{w}_{\bar{k}\ubar k}=(\mathbf{w}_{\bar{k}}^{\intercal},w_{\ubar k}^{\intercal})^{\intercal}$,
$b_{\bar{k}\ubar k}=b_{\bar{k}}+b_{\ubar k}$ and $\bm{\phi}(\bar{\mathbf{x}}_{n},\ubar{\mathbf{x}}_{n})=(\bm{\phi}(\bar{\mathbf{x}}_{n})^{\intercal},\bm{\phi}(\ubar{\mathbf{x}}_{n})^{\intercal})^{\intercal}$
denotes the extended feature function. (The proof of the equivalence
between \eqref{eq:metastable-largin-margin-constraint} and \eqref{eq:large-margin-constraint-pair}
is given in Appendix \pageref{sec:Proof-of-equivalence-margin-constraint}.)
The decision rule \eqref{eq:decision-rule-pair} can map a transition
pair in the phase space to a pair of metastable state labels, and
the constraint \eqref{eq:large-margin-constraint-pair} restricts
that all transition pairs in $\mathcal{P}$ are not only far away
from the decision boundaries in the product feature space $\bm{\phi}\left(\mathcal{X}\right)\times\bm{\phi}\left(\mathcal{X}\right)$
but also enclosed in the area $\cup_{k=1}^{\kappa}\{(\bar{\mathbf{x}},\ubar{\mathbf{x}})|(k,k)=\argmax_{\bar{k},\ubar k}w_{\bar{k}\ubar k}^{\intercal}\bm{\phi}(\bar{\mathbf{x}},\ubar{\mathbf{x}})+b_{\bar{k}\ubar k}\}$.

Based on the above discussion, as well as by introducing slack variables
and the class balance constraint as in MMC, we can formulate the M\textsuperscript{3}C
problem as
\begin{equation}
\begin{array}{cl}
\min\limits _{\mathbf{y},\mathbf{W},\mathbf{b},\bm{\xi}} & \frac{1}{2}\beta\left\Vert \mathbf{W}\right\Vert ^{2}+\frac{1}{N}\mathbf{1}^{\intercal}\bm{\xi}\\
\mathrm{s.t.} & \forall n=1,\ldots,N,\quad\forall\bar{k},\ubar k=1,\ldots,\kappa,\\
 & \left(\mathbf{w}_{y_{n}y_{n}}^{\intercal}-\mathbf{w}_{\bar{k}\ubar k}^{\intercal}\right)\bm{\phi}\left(\bar{\mathbf{x}}_{n},\ubar{\mathbf{x}}_{n}\right)\\
 & +\left(b_{y_{n}y_{n}}-b_{\bar{k}\ubar k}\right)+1_{y_{n}=\bar{k}=\ubar k}\ge1-\xi_{n},\\
 & \varrho_{l}N\le\sum_{m=1}^{N}1_{y_{m}=\bar{k}}\le\varrho_{u}N.
\end{array}\label{eq:m3c}
\end{equation}

\section{Comparison of maximum margin metastable clustering with relative
methods}

Fig.~\ref{fig:Illustration-of-decision-boundary} shows a two dimensional
and two-class example to illustrate the difference between the ideas
of MMC and M\textsuperscript{3}C (or the other geometric clustering
methods). In this example, all the trajectory data are distributed
in areas $A_{1}$, $A_{2}$ and $A_{3}$, and $A_{1}$ is far away
from $A_{2}$ and $A_{3}$. As shown in Fig.~\ref{fig:MMCIllustration},
the optimal decision boundary founded by MMC is a horizontal line
for which can achieve the maximum classification margin, and then
MMC decomposes the three areas as $\{A_{1}\}\cup\{A_{2},A_{3}\}$.
Fig.~\ref{fig:M3CIllustration} displays both data points and trajectories.
As can be seen, the MMC decomposition severely violates the large
margin metastable constraint because of the existence of frequent
switches between $A_{1}$ and $A_{2}$ in trajectories, and the optimal
decomposition provided by M\textsuperscript{3}C (with an appropriate
class balance constraint) is then $\{A_{1},A_{2}\}\cup\{A_{3}\}$.
Obviously, the decomposition result of M\textsuperscript{3}C is more
reasonable from the view of metastability analysis. From this example
we can observe that the M\textsuperscript{3}C method is able to exploit
the information on both phase space distribution and metastable dynamics
contained in the trajectory data by using the large margin metastable
constraint.

\begin{figure}
\subfloat[MMC\label{fig:MMCIllustration}]{\includegraphics[width=0.5\textwidth]{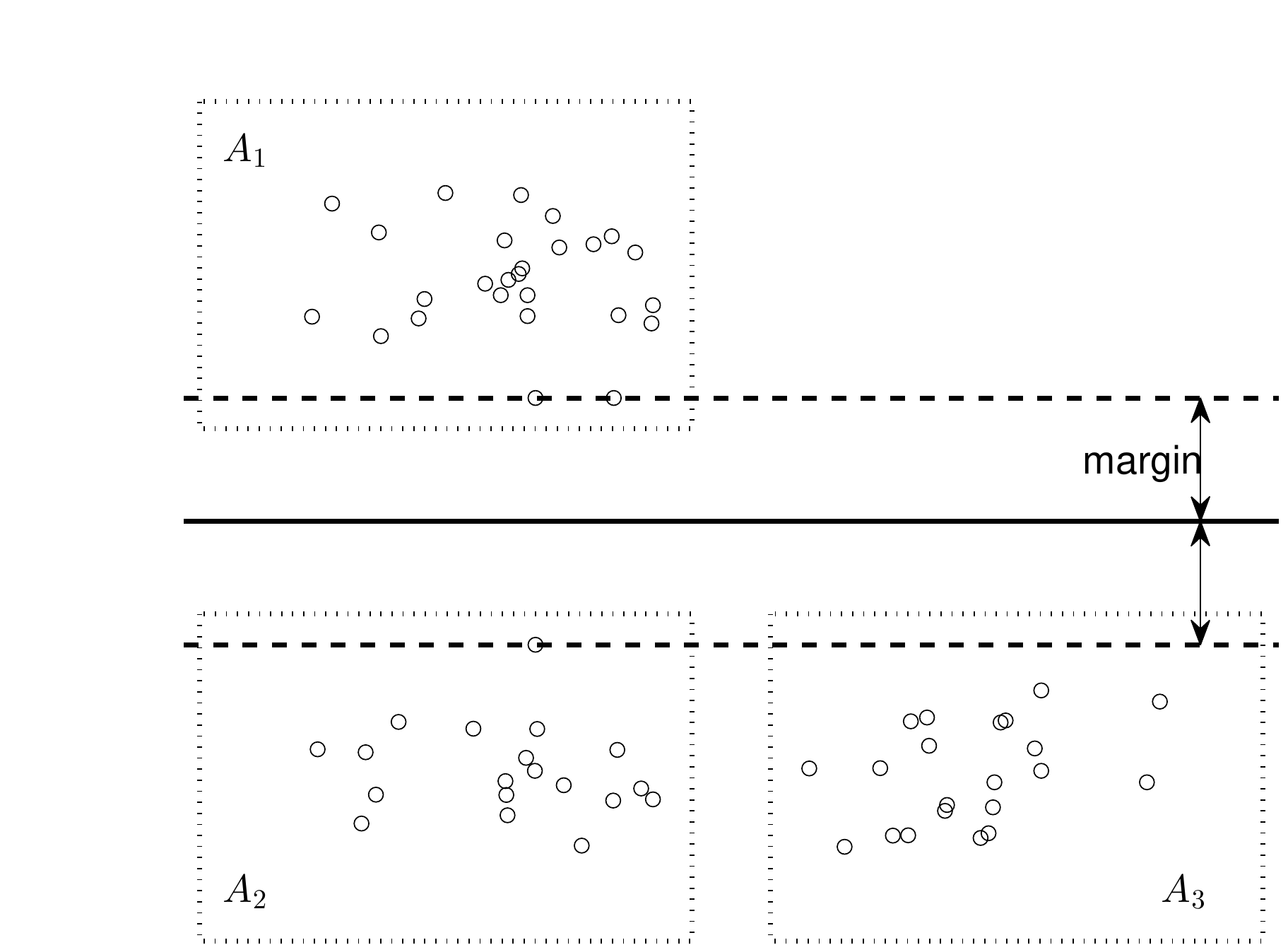}

}\hfill{}\subfloat[M\textsuperscript{3}C\label{fig:M3CIllustration}]{\includegraphics[width=0.5\textwidth]{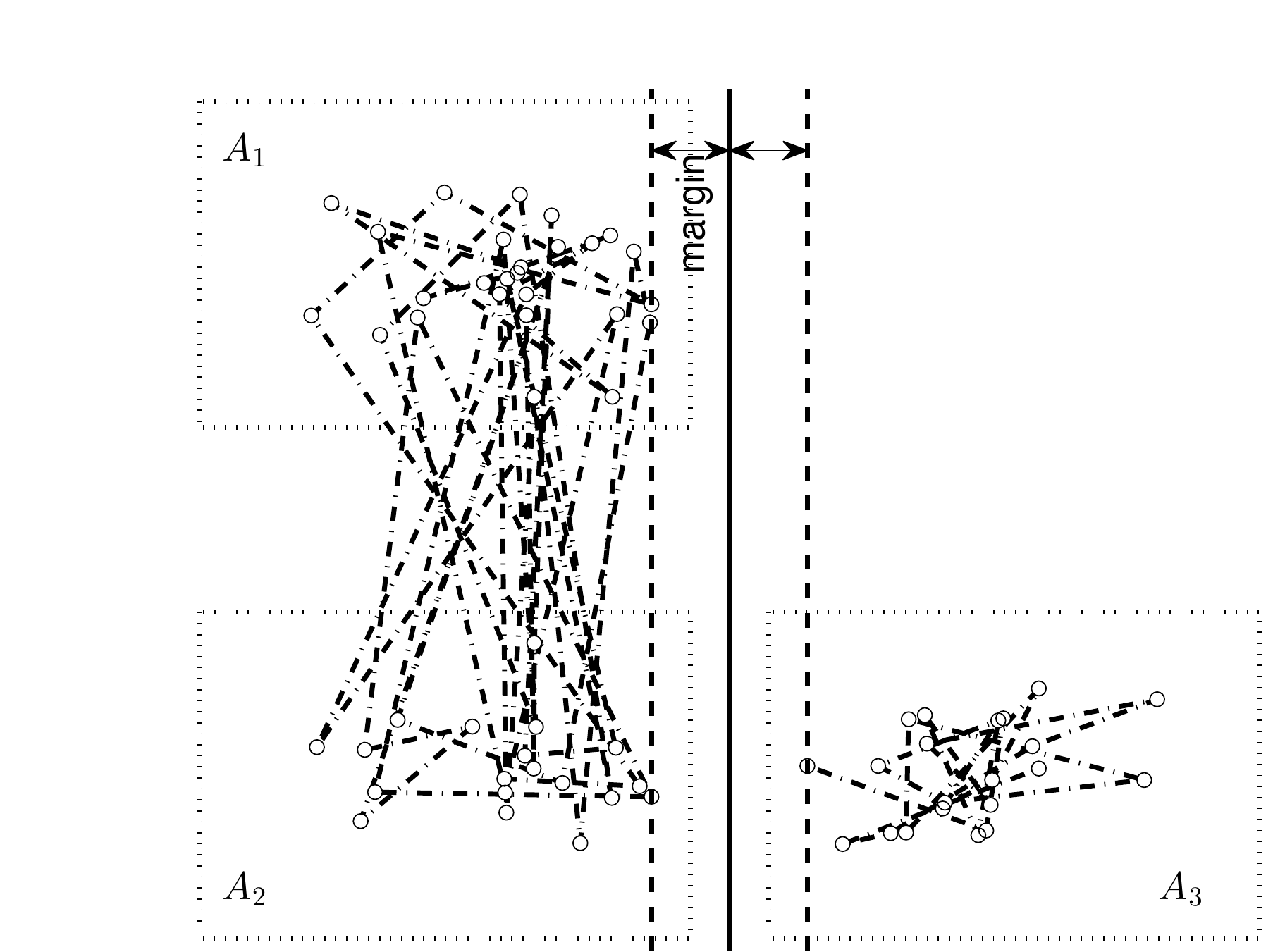}

}

\protect\caption{Illustration of decision boundaries obtained by MMC and M\protect\textsuperscript{3}C,
where solid lines are decision boundaries, dashed lines show the margins,
dash-dot lines represent phase space trajectories, and circles denote
data points sampled from the trajectories.\label{fig:Illustration-of-decision-boundary}}
\end{figure}

Compared with the kinetic clustering methods of metastable state decomposition
(e.g., PCCA+), the M\textsuperscript{3}C method directly minimizes
the ``loss'' caused by boundary crossings between metastable states
without an explicit model of the system dynamics. Therefore, plenty
of statistical problems which are essential and difficult for kinetic
clustering methods, including the choice of space discretization and
the estimation of transition probabilities, are not present in M\textsuperscript{3}C.
Furthermore, the large margin criterion used in M\textsuperscript{3}C
provides a convenient and powerful way to reduce the influence of
statistical noise. Although some kinetic clustering methods can also
handle the statistical noise in a Bayesian manner (see e.g., \citep{bowman2012improved}),
M\textsuperscript{3}C offers the advantage of naturally achieving
good performance of generalization and robustness without assuming
any prior distribution.

The major challenge of M\textsuperscript{3}C arises from \eqref{eq:m3c},
which is a nonconvex mixed-integer optimization problem as the MMC
problem \eqref{eq:mmc}. Due to the similarity between the MMC and
M\textsuperscript{3}C problems, solving of the latter encounters
similar difficulties: the SDP relaxation based global search requires
a high computational cost and the local search easily gets stuck in
poor local optima.

\section{Optimization method for metastable maximum margin clustering}

In this section, we will propose a coarse graining based strategy
to solve the M\textsuperscript{3}C problem \eqref{eq:m3c} that takes
advantages of both global search and local search methods, and is
applicable to large-scale data sets. The proposed optimization procedure
can be sketched in the following steps:
\begin{enumerate}
\item \textbf{\textit{Coarse graining.}} Discretize the space $\mathcal{X}\times\mathcal{X}$
of transition pairs into $N^{c}$ ($\kappa<N^{c}\ll N$) bins and
approximate $\mathcal{P}$ by a coarse-grained data set $\mathcal{P}^{c}=\{(\bar{\mathbf{x}}_{n}^{c},\ubar{\mathbf{x}}_{n}^{c})\}_{n=1}^{N^{c}}$
with normalized weights $\{c_{n}\}$, where $(\bar{\mathbf{x}}_{n}^{c},\ubar{\mathbf{x}}_{n}^{c})$
denotes the center of the $n$-th bin and $c_{n}$ is proportional
to the number of transition pairs contained in the $n$-th bin. This
step can be done by $k$-means, $k$-medoids or any other traditional
clustering algorithm, and the value of $N^{c}$ is chosen based on
the limitation of computational resources.
\item \textbf{\textit{Global search.}} Perform M\textsuperscript{3}C on
the coarse-grained data set $\mathcal{P}^{c}$ by using the SDP relaxation
method. Note that $\mathcal{P}^{c}$ only has a small number of distinct
elements, then a satisfactory solution can be achieved in this step
without too much computing burden.
\item \textbf{\textit{Local search.}} Refine the solution obtained in the
global search step by applying some local search algorithm to the
M\textsuperscript{3}C problem on $\mathcal{P}$. Here we select the
alternating optimization presented in \citep{zhang2009maximum} to
perform the local search because it is computationally efficient and
can handle the class balance constraint easily.
\end{enumerate}
Below we will discuss the last two steps in more detail.

\subsection{Global search}

For convenience of analysis and computation, we confine that all decision
hyperplanes pass through the origin in the feature space, i.e., $b_{1}=\ldots=b_{\kappa}=0$.
(This is a mild restriction especially for high dimensional feature
spaces, since it reduces the number of freedom degrees of decision
hyperplanes only by one.) Then the M\textsuperscript{3}C problem
on $\mathcal{P}^{c}$ can be expressed as
\begin{equation}
\begin{array}{cl}
\min\limits _{\mathbf{y}^{c},\mathbf{W},\bm{\xi}^{c}} & \frac{1}{2}\beta\left\Vert \mathbf{W}\right\Vert ^{2}+\mathbf{c}^{\intercal}\bm{\xi}^{c}\\
\mathrm{s.t.} & \forall n=1,\ldots,N^{c},\quad\forall\bar{k},\ubar k=1,\ldots,\kappa,\\
 & \left(\mathbf{w}_{y_{n}^{c}y_{n}^{c}}^{\intercal}-\mathbf{w}_{\bar{k}\ubar k}^{\intercal}\right)\bm{\phi}\left(\bar{\mathbf{x}}_{n}^{c},\ubar{\mathbf{x}}_{n}^{c}\right)+1_{y_{n}^{c}=\bar{k}=\ubar k}\ge1-\xi_{n}^{c},\\
 & \varrho_{l}\le\sum_{m=1}^{N^{c}}1_{y_{m}^{c}=\bar{k}}\cdot c_{m}\le\varrho_{u}.
\end{array}\label{eq:m3c-coarse-grained}
\end{equation}
where $y_{n}^{c}\in\{1,\ldots,\kappa\}$ and $\xi_{n}^{c}$ denotes
the label and slack variable of the $n$-th coarse-grained transition
pair $(\bar{\mathbf{x}}_{n}^{c},\ubar{\mathbf{x}}_{n}^{c})$, $\mathbf{y}^{c}=(y_{1}^{c},\ldots,y_{N^{c}}^{c})$,
$\bm{\xi}^{c}=(\xi_{1}^{c},\ldots,\xi_{N^{c}}^{c})^{\intercal}$,
and $\mathbf{c}=(c_{1},\ldots,c_{N^{c}})^{\intercal}$ denotes the
vector of weights of coarse-grained transition pairs. It is clear
that the assignment $\mathbf{y}^{c}$ in \eqref{eq:m3c-coarse-grained}
can be represented by the equivalence relation matrices $\mathbf{D}\in\mathbb{R}^{N^{c}\times\kappa}$
and $\mathbf{M}\in\mathbb{R}^{N^{c}\times N^{c}}$ which are defined
by
\begin{equation}
\mathbf{D}=[D_{ij}]=[1_{y_{i}^{c}=j}],\quad\mathbf{M}=[M_{ij}]=[1_{y_{i}^{c}=y_{j}^{c}}]\label{eq:MD-definition}
\end{equation}
i.e., the $(i,j)$-th element of $\mathbf{D}$ indicates if the label
of $(\bar{\mathbf{x}}_{i}^{c},\ubar{\mathbf{x}}_{i}^{c})$ is $j$,
and the $(i,j)$-th element of $\mathbf{M}$ indicates if $(\bar{\mathbf{x}}_{i}^{c},\ubar{\mathbf{x}}_{i}^{c})$
and $(\bar{\mathbf{x}}_{j}^{c},\ubar{\mathbf{x}}_{j}^{c})$ are assigned
to the same label.

Replacing $\mathbf{y}^{c}$ by $(\mathbf{M},\mathbf{D})$ as assignment
variables in \eqref{eq:m3c-coarse-grained} and using the strong duality
theorem, we can get an equivalent form of the coarse-grained M\textsuperscript{3}C
problem.
\begin{thm}
\label{thm:cg-m3c-equivalence}Solving \eqref{eq:m3c-coarse-grained}
is equivalent to solving the optimization problem
\begin{equation}
\begin{array}{cl}
\min\limits _{\mathbf{M},\mathbf{D},\bm{\alpha},\bm{\theta}} & \frac{1}{2}\bm{\theta}^{\intercal}\bm{\theta}-\mathbf{c}^{\intercal}\bm{\alpha}-\frac{1}{2\beta}\mathrm{tr}\left(\mathbf{M}\mathbf{C}\mathbf{K}^{s}\mathbf{C}\right)+\mathbf{1}^{\intercal}\mathbf{c}\\
\mathrm{s.t.} & \mathbf{q}\left(\mathbf{D}\right)+\left(\mathbf{1}\otimes\mathbf{I}\right)\bm{\alpha}+\mathbf{R}\bm{\theta}\le0,\\
 & \varrho_{l}\le\mathbf{M}\mathbf{c}\le\varrho_{u},\\
 & \mathrm{diag}\left(\mathbf{M}\right)=\mathbf{1},\\
 & \mathbf{M}=\mathbf{D}\mathbf{D}^{\intercal},\\
 & \mathbf{M}\in\left\{ 0,1\right\} ^{N^{c}\times N^{c}},\mathbf{D}\in\left\{ 0,1\right\} ^{N^{c}\times\kappa}.
\end{array}\label{eq:m3c-coarse-grained-equivalent}
\end{equation}
where $\mathbf{C}$ is a diagonal matrix with the elements of $\mathbf{c}$
on the diagonal, $\mathbf{q}\left(\mathbf{D}\right)$ is a linear
function of $\mathbf{D}$ defined in \eqref{eq:q}, and definitions
of $\mathbf{K}^{s}$ and $\mathbf{R}$ are given by \eqref{eq:Ks}
and \eqref{eq:R}.\end{thm}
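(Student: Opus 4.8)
The plan is to exploit the block structure of \eqref{eq:m3c-coarse-grained}, in which the discrete assignment $\mathbf{y}^{c}$ couples with the continuous variables $(\mathbf{W},\bm{\xi}^{c})$ only through which ``correct'' margin terms $1_{y_{n}^{c}=\bar{k}=\ubar k}$ are switched on. First I would peel off the outer minimization over $\mathbf{y}^{c}$ and re-express it in terms of the equivalence-relation matrices $(\mathbf{M},\mathbf{D})$ of \eqref{eq:MD-definition}. It is elementary that assignments $\mathbf{y}^{c}\in\{1,\dots,\kappa\}^{N^{c}}$ are in one-to-one correspondence with $0/1$ matrices $\mathbf{D}$ whose rows are standard basis vectors, and that imposing $\mathbf{M}=\mathbf{D}\mathbf{D}^{\intercal}$ together with $\mathrm{diag}(\mathbf{M})=\mathbf{1}$ forces $\mathbf{M}$ to be exactly the co-membership matrix; moreover $(\mathbf{M}\mathbf{c})_{i}$ equals the total weight of the cluster containing the $i$-th coarse-grained pair, so the class-balance constraint $\varrho_{l}\le\sum_{m}1_{y_{m}^{c}=\bar{k}}c_{m}\le\varrho_{u}$ for every $\bar{k}$ becomes $\varrho_{l}\le\mathbf{M}\mathbf{c}\le\varrho_{u}$. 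This reproduces the last four constraints of \eqref{eq:m3c-coarse-grained-equivalent}.

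With $(\mathbf{M},\mathbf{D})$ fixed, the residual problem in $(\mathbf{W},\bm{\xi}^{c})$ is a convex quadratic program with linear constraints; it is feasible (take $\mathbf{W}=\mathbf{0}$, $\bm{\xi}^{c}=\mathbf{1}$) and bounded below (feasibility forces $\bm{\xi}^{c}\ge\mathbf{0}$, as already observed for the SVM slacks after \eqref{eq:svm}), so strong duality holds and primal and dual optima are both attained. I would then form its Lagrangian, introducing one multiplier for each of the $\kappa^{2}$ margin constraints attached to every coarse-grained pair (equivalently, treating it as a Crammer--Singer multi-class SVM on the pair space with $\kappa^{2}$ labels, the correct label of pair $n$ being $(y_{n}^{c},y_{n}^{c})$). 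Stationarity in $\mathbf{W}$ yields the usual representer expression of each $\mathbf{w}_{k}$ as a linear combination of the mapped points $\bm{\phi}(\bar{\mathbf{x}}_{n}^{c}),\bm{\phi}(\ubar{\mathbf{x}}_{n}^{c})$ with coefficients built from the multipliers and from the indicators $1_{y_{n}^{c}=k}$; the first-order/complementarity conditions in $\bm{\xi}^{c}$ force, for each $n$, the multipliers of the $\kappa^{2}$ constraints of pair $n$ to sum to $c_{n}$, which after substitution produces the linear term $-\mathbf{c}^{\intercal}\bm{\alpha}$ in the dual objective.

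The substantive part is the bookkeeping of the resulting dual quadratic form. Plugging the representer expression back into $\tfrac{1}{2}\beta\|\mathbf{W}\|^{2}$ and expanding, the terms split into three groups according to whether they pair a correct-class coefficient $1_{y_{n}^{c}=k}$ with another correct one, a correct one with a free multiplier, or two free multipliers. The first group depends on the assignment only through $\mathbf{M}$ and, after collecting the inner products of $\bm{\phi}(\bar{\mathbf{x}}_{i}^{c})+\bm{\phi}(\ubar{\mathbf{x}}_{i}^{c})$, equals $\tfrac{1}{2\beta}\mathrm{tr}(\mathbf{M}\mathbf{C}\mathbf{K}^{s}\mathbf{C})$ with $\mathbf{K}^{s}$ as in \eqref{eq:Ks}; since the dual of the inner minimization is a maximization, negating turns this into the $-\tfrac{1}{2\beta}\mathrm{tr}(\mathbf{M}\mathbf{C}\mathbf{K}^{s}\mathbf{C})$ of \eqref{eq:m3c-coarse-grained-equivalent}, while the constant $\mathbf{1}^{\intercal}\mathbf{c}$ is the aggregated contribution of the $1-\xi_{n}^{c}$ right-hand sides. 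The remaining quadratic-plus-bilinear part in the multipliers is what I would repackage: split the free multipliers into an aggregate block $\bm{\alpha}$ and an auxiliary block, complete the square in the latter, and name the completed vector $\bm{\theta}$, so that its contribution collapses to the clean $\tfrac{1}{2}\bm{\theta}^{\intercal}\bm{\theta}$; the affine relation tying $\bm{\theta}$ to $\bm{\alpha}$ and to $\mathbf{D}$ through the correct-class terms, together with the nonnegativity of the original multipliers, becomes precisely the stated inequality $\mathbf{q}(\mathbf{D})+(\mathbf{1}\otimes\mathbf{I})\bm{\alpha}+\mathbf{R}\bm{\theta}\le0$ with $\mathbf{q}$ and $\mathbf{R}$ as in \eqref{eq:q}--\eqref{eq:R}.

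I expect this last repackaging --- choosing the change of variables so that the kernel quadratic collapses onto $\mathrm{tr}(\mathbf{M}\mathbf{C}\mathbf{K}^{s}\mathbf{C})$ and the residual becomes an identity quadratic form under a single linear inequality, with $\mathbf{R}$ and $\mathbf{q}$ matching exactly --- to be the main obstacle; everything else is a direct, if lengthy, Lagrangian computation. The equivalence of the two problems then follows by composing the outer reformulation of the first step with the strong-duality replacement of the inner problem: optimal $\mathbf{y}^{c}$ correspond to optimal $(\mathbf{M},\mathbf{D})$, the objective values coincide, and the optimal $(\mathbf{W},\bm{\xi}^{c})$ is recovered from the optimal dual pair $(\bm{\alpha},\bm{\theta})$ via the representer formula.
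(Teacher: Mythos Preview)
Your outer reformulation --- replacing $\mathbf{y}^{c}$ by $(\mathbf{M},\mathbf{D})$ subject to the binary/diagonal/factorization constraints, and rewriting the balance constraint as $\varrho_{l}\le\mathbf{M}\mathbf{c}\le\varrho_{u}$ --- is exactly what the paper does, and your first dualization of the inner $(\mathbf{W},\bm{\xi}^{c})$-problem is also on track: one obtains a concave quadratic maximization in a multiplier array $\bm{\Lambda}\in\mathbb{R}^{N^{c}\times\kappa^{2}}$, with the constant offset $-\tfrac{1}{2\beta}\mathrm{tr}(\mathbf{M}\mathbf{C}\mathbf{K}^{s}\mathbf{C})+\mathbf{1}^{\intercal}\mathbf{c}$ and the simplex-type constraint that the $\kappa^{2}$ multipliers attached to pair $n$ sum to $c_{n}$.

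The gap is in your last step. You describe $(\bm{\alpha},\bm{\theta})$ as a \emph{reparametrization} of the first-dual multipliers $\bm{\Lambda}$, obtained by splitting and completing the square. But a change of variables cannot turn the inner $\max_{\bm{\Lambda}}$ into the inner $\min_{\bm{\alpha},\bm{\theta}}$ that appears in \eqref{eq:m3c-coarse-grained-equivalent}; after completing the square you would still be maximizing, and the sign of $\tfrac{1}{2}\bm{\theta}^{\intercal}\bm{\theta}$ would be wrong. In the paper, $(\bm{\alpha},\bm{\theta})$ are \emph{not} pieces of $\bm{\Lambda}$: they are the Lagrange multipliers of a \emph{second} dualization. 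Concretely, the paper rewrites the first dual as the standard QP
\[
\min_{\mathrm{vec}(\bm{\Lambda})}\ \tfrac{1}{2\beta}\,\mathrm{vec}(\bm{\Lambda})^{\intercal}\mathbf{P}\,\mathrm{vec}(\bm{\Lambda})-\mathbf{q}^{\intercal}\mathrm{vec}(\bm{\Lambda})\quad\text{s.t.}\quad(\mathbf{1}^{\intercal}\otimes\mathbf{I})\,\mathrm{vec}(\bm{\Lambda})=\mathbf{c},\ \mathrm{vec}(\bm{\Lambda})\ge0,
\]
factorizes $\tfrac{1}{\beta}\mathbf{P}=\mathbf{R}\mathbf{R}^{\intercal}$, and then applies a small lemma (the paper's Lemma~\ref{lem:qp}) giving the dual of such a QP as
\[
\max_{\bm{\alpha},\mathbf{v},\bm{\theta}}\ -\tfrac{1}{2}\bm{\theta}^{\intercal}\bm{\theta}+\mathbf{c}^{\intercal}\bm{\alpha}\quad\text{s.t.}\quad -\mathbf{q}-(\mathbf{1}\otimes\mathbf{I})\bm{\alpha}-\mathbf{v}=\mathbf{R}\bm{\theta},\ \mathbf{v}\ge0.
\]
Negating (since the original first dual was a max) yields exactly the inner minimization over $(\bm{\alpha},\bm{\theta})$ with the inequality $\mathbf{q}(\mathbf{D})+(\mathbf{1}\otimes\mathbf{I})\bm{\alpha}+\mathbf{R}\bm{\theta}\le0$ after eliminating the slack $\mathbf{v}\ge0$. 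Thus $\bm{\alpha}$ is the multiplier of the equality $\bm{\Lambda}\mathbf{1}=\mathbf{c}$, $\mathbf{v}$ that of $\bm{\Lambda}\ge0$, and $\bm{\theta}$ is the auxiliary variable coming from the factorization of the possibly singular $\mathbf{P}$; none of them is a block of $\bm{\Lambda}$ itself. Once you insert this second dualization, the rest of your outline (combining with the $(\mathbf{M},\mathbf{D})$ constraints) goes through.
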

\begin{proof}
See \ref{sec:Proof-of-Theorem-cg-m3c-equivalence}.\end{proof}
\begin{rem}
\label{rem:M-D}The last four constraints on $\mathbf{M}$ and $\mathbf{D}$
of \eqref{eq:m3c-coarse-grained-equivalent} come from the fact that
\emph{there is a $\mathbf{y}^{c}$ such that \eqref{eq:MD-definition}
holds if and only if $\mathbf{M}$ and $\mathbf{D}$ are both binary
matrices and satisfy $\mathrm{diag}\left(\mathbf{M}\right)=\mathbf{1}$
and $\mathbf{M}=\mathbf{D}\mathbf{D}^{\intercal}$} \citep{xu2007convex}.
\end{rem}
It is easy to see that \eqref{eq:m3c-coarse-grained-equivalent} is
a convex optimization problem if we drop the constraints that $\mathbf{M}$
and $\mathbf{D}$ are binary matrices and the nonlinear constraint
$\mathbf{M}=\mathbf{D}\mathbf{D}^{\intercal}$. This motivates an
approximation method for solving the coarse-grained M\textsuperscript{3}C
problem based on two relaxations as in \citep{xu2007convex}. The
first relaxation allows elements of $\mathbf{M}$ and $\mathbf{D}$
to take values in $[0,1]$ instead of $\{0,1\}$ and the second relaxation
is to replace $\mathbf{M}=\mathbf{D}\mathbf{D}^{\intercal}$ with
a convex inequality $\mathbf{M}\succeq\mathbf{D}\mathbf{D}^{\intercal}$.
By using these two relaxations, \eqref{eq:m3c-coarse-grained-equivalent}
can be relaxed to a convex optimization problem:

\begin{equation}
\begin{array}{cl}
\min\limits _{\mathbf{M},\mathbf{D},\bm{\alpha},\bm{\theta}} & \frac{1}{2}\bm{\theta}^{\intercal}\bm{\theta}-\mathbf{c}^{\intercal}\bm{\alpha}-\frac{1}{2\beta}\mathrm{tr}\left(\mathbf{M}\mathbf{C}\mathbf{K}^{s}\mathbf{C}\right)+\mathbf{1}^{\intercal}\mathbf{c}\\
\mathrm{s.t.} & \mathbf{q}\left(\mathbf{D}\right)+\left(\mathbf{1}\otimes\mathbf{I}\right)\bm{\alpha}+\mathbf{R}\bm{\theta}\le0,\\
 & \varrho_{l}\le\mathbf{M}\mathbf{c}\le\varrho_{u},\\
 & \mathrm{diag}\left(\mathbf{M}\right)=\mathbf{1},\\
 & \mathbf{M}\succeq\mathbf{D}\mathbf{D}^{\intercal},\\
 & 0\le\mathbf{M}\le1,0\le\mathbf{D}\le1.
\end{array}\label{eq:m3c-coarse-grained-equivalent-convex}
\end{equation}
According to the Schur complement lemma \citep{horn1988matrix}, we
can further reformulate \eqref{eq:m3c-coarse-grained-equivalent-convex}
in a more convenient form:

\begin{equation}
\begin{array}{cl}
\min\limits _{\mathbf{M},\mathbf{D},\bm{\alpha},\bm{\theta},\zeta} & \zeta\\
\mathrm{s.t.} & \left[\begin{array}{cc}
\mathbf{I} & \bm{\theta}\\
\bm{\theta}^{\intercal} & 2\left(\zeta+\mathbf{c}^{\intercal}\bm{\alpha}+\frac{1}{2\beta}\mathrm{tr}\left(\mathbf{M}\mathbf{C}\mathbf{K}^{s}\mathbf{C}\right)-\mathbf{1}^{\intercal}\mathbf{c}\right)
\end{array}\right]\succeq0,\\
 & \mathbf{q}\left(\mathbf{D}\right)+\left(\mathbf{1}\otimes\mathbf{I}\right)\bm{\alpha}+\mathbf{R}\bm{\theta}\le0,\\
 & \varrho_{l}\le\mathbf{M}\mathbf{c}\le\varrho_{u},\\
 & \mathrm{diag}\left(\mathbf{M}\right)=\mathbf{1},\\
 & \left[\begin{array}{cc}
\mathbf{I} & \mathbf{D}^{\intercal}\\
\mathbf{D} & \mathbf{M}
\end{array}\right]\succeq0,\\
 & 0\le\mathbf{M}\le1,0\le\mathbf{D}\le1.
\end{array}\label{eq:m3c-sdp}
\end{equation}
It is an SDP problem and can be solved in polynomial time.

Since the optimal $\mathbf{M}$ obtained from \eqref{eq:m3c-sdp}
is a symmetric matrix satisfying $\mathrm{diag}\left(\mathbf{M}\right)=\mathbf{1}$
and $0\le\mathbf{M}\le1$, it can be interpreted as a similarity matrix
of $\mathcal{P}^{c}$ with each element $M_{ij}$ representing some
measure of the similarity between $(\bar{\mathbf{x}}_{i}^{c},\ubar{\mathbf{x}}_{i}^{c})$
and $(\bar{\mathbf{x}}_{j}^{c},\ubar{\mathbf{x}}_{j}^{c})$. Hence,
we can utilize the spectral clustering method to recover $\mathbf{y}^{c}$
from the optimal $\mathbf{M}$ such that $M_{ij}=1_{y_{i}^{c}=y_{j}^{c}}$
approximately holds for all $i,j$.
\begin{rem}
Obviously, the relaxation method proposed in this section can be directly
applied to the original M\textsuperscript{3}C problem \eqref{eq:m3c}
without any coarse graining, and the corresponding relaxed problem
can be solved even in the case that we do not know the feature mapping
implicitly but only know the kernel function $\mathrm{Ker}\left(\cdot,\cdot\right)$.
However, this scheme is generally computationally infeasible because
the it involves an SDP problem with $O(N^{2})$ parameters.
\end{rem}

\subsection{Local search\label{sub:Local-search}}

We now investigate how to refine the clustering result obtained in
the global search by local search. A natural way to do this is to
alternatively minimize \eqref{eq:m3c} with respect to $\left(\mathbf{W},\mathbf{b}\right)$
keeping $\mathbf{y}$ fixed and vice versa until convergence, where
the initial value of $\mathbf{y}$ is given by the coarse-grained
label assignment $\mathbf{y}^{c}$ with
\begin{equation}
y_{n}=y_{i}^{c},\text{ if }(\bar{\mathbf{x}}_{n},\ubar{\mathbf{x}}_{n})\text{ is in the bin centered at }(\bar{\mathbf{x}}_{i}^{c},\ubar{\mathbf{x}}_{i}^{c})\label{eq:y-init}
\end{equation}

Note that the M\textsuperscript{3}C problem \eqref{eq:m3c} with
a fixed $\mathbf{y}$ is just a standard quadratic programming problem.
Here we only discuss the optimization problem with respect to $\mathbf{y}$.
For fixed $\mathbf{W}$ and $\mathbf{b}$, \eqref{eq:m3c} is reduced
to
\begin{equation}
\begin{array}{cl}
\min\limits _{\mathbf{y}} & \mathbf{1}^{\intercal}\bm{\xi}\\
\mathrm{s.t.} & \forall n=1,\ldots,N,\quad\forall\bar{k},\ubar k=1,\ldots,\kappa,\\
 & \left(\mathbf{w}_{y_{n}y_{n}}^{\intercal}-\mathbf{w}_{\bar{k}\ubar k}^{\intercal}\right)\bm{\phi}\left(\bar{\mathbf{x}}_{n},\ubar{\mathbf{x}}_{n}\right)\\
 & +\left(b_{y_{n}y_{n}}-b_{\bar{k}\ubar k}\right)+1_{y_{n}=\bar{k}=\ubar k}\ge1-\xi_{n},\\
 & \varrho_{l}N\le\sum_{m=1}^{N}1_{y_{m}=\bar{k}}\le\varrho_{u}N.
\end{array}\label{eq:m3c-y}
\end{equation}
It is simple to verify that \eqref{eq:m3c-y} can be transformed into
a binary linear programming problem

\begin{equation}
\begin{array}{cl}
\min\limits _{\mathbf{D}^{f}} & \mathrm{tr}\left(\mathbf{H}^{\intercal}\mathbf{D}^{f}\right)\\
\mathrm{s.t.} & \varrho_{l}N\le\mathbf{1^{\intercal}}\mathbf{D}^{f}\le\varrho_{u}N,\\
 & \mathbf{D}^{f}\in\left\{ 0,1\right\} ^{N\times\kappa}.
\end{array}\label{eq:m3c-D}
\end{equation}
where $\mathbf{D}^{f}=[D_{ij}^{f}]$ is a relation matrix with $D_{ij}^{f}=1_{y_{i}=j}$,
and $\mathbf{H}=[H_{ij}]$ is defined by
\begin{equation}
H_{ij}=\max_{\bar{k},\ubar k}1-1_{j=\bar{k}=\ubar k}-\left(\mathbf{w}_{jj}^{\intercal}-\mathbf{w}_{\bar{k}\ubar k}^{\intercal}\right)\bm{\phi}\left(\bar{\mathbf{x}}_{i},\ubar{\mathbf{x}}_{i}\right)-\left(b_{jj}-b_{\bar{k}\ubar k}\right)
\end{equation}
Although \eqref{eq:m3c-D} belongs to the class of NP-hard problems,
it can be efficiently tackled by enumeration and cutting-plane techniques
\citep{genova2011linear} in practice, and there exist a lot of software
packages for solving large-scale integer programming problems like
\eqref{eq:m3c-D}, including MOSEK \citep{mosek}, Gurobi \citep{gurobi}
and GLPK \citep{pryor2011faster}.
\begin{rem}
It was reported in \citep{zhang2009maximum} that the alternating
optimization method for MMC often suffers from premature convergence
and can only change a small proportion of data labels even with a
poor initialization, but our numerical experiments show that this
problem is not serious for the local search procedure of M\textsuperscript{3}C.
The analysis of this phenomenon requires further investigation and
we only give a rough explanation here. Generally speaking, the switching
between metastable states can be observed many times in trajectories.
Therefore, during the local search of M\textsuperscript{3}C, there
are always a number of transition pairs that are close to the decision
boundaries and violate the large margin metastable constraint with
not-so-small slack values even if $\beta\ll1$. These misclassified
transition pairs are helpful to ``push'' the decision boundaries away
from the initial positions and their labels can be easily changed
especially in early iterations of the local search.
\end{rem}

\subsection{Full description of the optimization procedure\label{sub:Full-description-of-algorithm}}

Based on the above analysis, the complete optimization method developed
for M\textsuperscript{3}C can be summarized in Algorithm \ref{alg:m3c}.

\begin{algorithm}
\begin{algorithmic}[1]

\STATE generate a coarse-grained transition pair set $\mathcal{P}^{c}$
with cardinality $N^{c}$ and normalized weights $\mathbf{c}=(c_{1},\ldots,c_{N^{c}})^{\intercal}$
from $\mathcal{P}$ by the $k$-means or $k$-medoids algorithm

\STATE find $(\mathbf{M}^{*},\mathbf{D}^{*})$ as the solution to
the SDP problem \eqref{eq:m3c-sdp}

\STATE perform the spectral clustering algorithm with similarity
matrix $\mathbf{M}^{*}$ to get class labels $\mathbf{y}^{c}=(y_{1}^{c},\ldots,y_{N^{c}}^{c})$
of coarse-grained transition pairs in $\mathcal{P}^{c}$

\STATE calculate class labels $\mathbf{y}=(y_{1},\ldots,y_{N})$
of transition pairs in $\mathcal{P}$ from $\mathbf{y}^{c}$ by \eqref{eq:y-init}

\STATE initialize with $\mathbf{y}^{(0)}=\mathbf{y}$ and $r=1$

\REPEAT

\STATE find $(\mathbf{W}^{*},\mathbf{b}^{*})$ as the solution to
the quadratic programming problem \eqref{eq:m3c} with $\mathbf{y}$
fixed to be $\mathbf{y}^{(r-1)}$ and set $(\mathbf{W}^{(r)},\mathbf{b}^{(r)})=(\mathbf{W}^{*},\mathbf{b}^{*})$

\STATE find $\mathbf{D}^{f*}=[D_{ij}^{f*}]$ as the solution to the
binary linear programming problem \eqref{eq:m3c-D} with $(\mathbf{W},\mathbf{b})$
set to be $(\mathbf{W}^{(r)},\mathbf{b}^{(r)})$.

\STATE calculate $\mathbf{y}^{*}=(y_{1}^{*},\ldots,y_{N}^{*})$ by
$y_{n}^{*}=\argmax_{j}D_{nj}^{f*}$ and set $\mathbf{y}^{(r)}=\mathbf{y}^{*}$

\STATE set $r:=r+1$

\UNTIL{the Hamming distance between $\mathbf{y}^{(r)}$ and $\mathbf{y}^{(r-1)}$
is less than or equal to a given constant $\alpha_{H}$ or $r$ is
larger than a pre-defined threshold $r_{\max}$}

\RETURN $(\mathbf{W}^{(r)},\mathbf{b}^{(r)},\mathbf{y}^{(r)})$

\end{algorithmic}

\protect\caption{Optimization procedure for M\protect\textsuperscript{3}C\label{alg:m3c}}
\end{algorithm}

\section{Experiments}

In this section, we demonstrate the performance of the proposed decomposition
method on some synthetic metastable systems and molecular dynamics
simulations.

The detailed settings of Algorithm \ref{alg:m3c} for M\textsuperscript{3}C
are as follows:
\begin{itemize}
\item In the coarse graining step, $\mathcal{P}^{c}$ is provided by the
standard $k$-medoids algorithm \citep{trevor2001elements} with $N^{c}=30$.
(Here we use $k$-medoids rather than $k$-means because $k$-medoids
is more robust to outliers and can avoid the appearance of the coarse-grained
transitions which make no physical sense.)
\item All the involved SDP problems and the binary linear programming problem
\eqref{eq:m3c-D} are solved by using the Mosek solver \citep{mosek}
through the CVX interface in Matlab \citep{cvx}.
\item The spectral clustering algorithm proposed in \citep{weber2003improved}
is utilized to extract $\mathbf{y}^{c}$ from the relation matrix
$\mathbf{M}$.
\item The feature mapping $\bm{\phi}\left(\cdot\right)$ is induced from
the Gaussian kernel
\begin{equation}
\mathrm{Ker}\left(\mathbf{x},\mathbf{x}'\right)=\exp\left(-\frac{\left\Vert \mathbf{x}-\mathbf{x}'\right\Vert ^{2}}{2\sigma^{2}}\right)
\end{equation}
and explicitly computed by the random Fourier method \citep{Rahimi2007Random}
with $d=50$. The width parameter $\sigma$ is determined by a grid
search over $\{2^{-4},2^{-3},\ldots,2^{4}\}$ and we select the value
of $\sigma$ which leads to the minimal value of the objective function
of the M\textsuperscript{3}C problem \eqref{eq:m3c}.
\item The value of regularization parameter $\beta$ does not have a significant
effect for our experiments, so it is simply fixed to $0.01$.
\item The parameters of termination condition are set to be $\alpha_{H}=0$
and $r_{\max}=100$, and the parameters of class balance constraint
are $\left(\varrho_{l},\varrho_{u}\right)=\left(0.01,0.99\right)$.
\end{itemize}
For comparison purposes, the following three decomposition methods
are also considered in our experiments:
\begin{enumerate}
\item $k$-medoids clustering, which directly decomposes the phase space
into $\kappa$ macrostates based on the set $\mathcal{S}=\{\mathbf{x}_{t}^{l}|1\le l\le L,1\le t\le M_{l}\}$.
\item MMC based on $\mathcal{S}$, where the MMC problem is solved by a
mixed algorithm similar to Algorithm \ref{alg:m3c} (see \ref{sec:Optimization-procedure-for-mmc}).
\item PCCA+ \citep{deuflhard2005robust}, where the implementation details
are given in Appendix \ref{sec:Implementation-procedure-of-PCCA}.\end{enumerate}
\begin{rem}
$k$-medoids clustering and MMC can be viewed as geometric clustering
methods in metastability analysis.
\end{rem}

\begin{rem}
Considering the inherent randomness of the $k$-medoids algorithm,
here we perform the $k$-medoids clustering (or $k$-medoids clustering
in coarse graining steps of MMC, PCCA+ and \MMMC) in the following
way: Repeat the $k$-medoids algorithm $100$ times independently
with random initialization and pick the solution with the minimal
``within-cluster point scatter''.
\end{rem}

\subsection{Sequential unlabeled data}

Here we apply M\textsuperscript{3}C and other metastable state decomposition
methods to sequential data which are generated by using data sets
\textbf{letter}, \textbf{satellite}, \textbf{spambase}, \textbf{waveform}
and \textbf{segment} from the UCI machine learning repository \citep{Asuncion+Newman:2007}.
All data sets consist of multiple classes of instances and the pattern
of each instance is represented by a multidimensional vector of real-
or integer-valued features. A summary of all the data sets is in Table
\ref{tab:datasets}.

For each data set, we construct a sequence of unlabeled patterns as
follows:
\begin{enumerate}
\item Generate a reversible Markov chain $\{\bar{y}_{n}\}_{n=1}^{N}$ in
$\{1,\ldots,\kappa\}$, where $\kappa$ denotes the class number of
the data set, and the transition matrix $\mathbf{P}=[P_{ij}]=[\Pr(\bar{y}_{n+1}=j|\bar{y}_{n}=i)]\in\mathbb{R}^{\kappa\times\kappa}$
of $\{\bar{y}_{n}\}$ is given by
\begin{eqnarray*}
\mathbf{P} & = & \left[\begin{array}{cc}
0.97 & 0.03\\
0.03 & 0.97
\end{array}\right]\\
\mathbf{P} & = & \left[\begin{array}{ccc}
0.97 & 0.015 & 0.015\\
0.025 & 0.95 & 0.025\\
0.02 & 0.02 & 0.96
\end{array}\right]\\
\mathbf{P} & = & \left[\begin{array}{cccc}
0.9517 & 0.0198 & 0.0138 & 0.0147\\
0.0198 & 0.9509 & 0.0134 & 0.0159\\
0.0138 & 0.0134 & 0.9535 & 0.0193\\
0.0147 & 0.0159 & 0.0193 & 0.9501
\end{array}\right]
\end{eqnarray*}
for $\kappa=2,3,4$ repsectively.
\item For every $n=1,\ldots,N$, randomly select a pattern $\mathbf{x}_{n}$
with class label $\bar{y}_{n}$ from the data set without repetition.
(This step cannot be implemented if the element number of $\{n|\bar{y}_{n}=i\}$
is bigger than the data size of the $i$-th class in the data set
for some $i$. For such a case, we will repeat generating $\{\bar{y}_{n}\}$
until this step is feasible.)
\end{enumerate}
Since the self-transition probabilities in transition matrices $\mathbf{P}$
(i.e., the diagonal elements of $\mathbf{P}$) are all close to $1$,
a pattern sequence $\{\mathbf{x}_{n}\}$ generated as above can be
viewed as a metastable processes with each class being a metastable
state. Then we perform clustering of $\{\mathbf{x}_{n}\}$ by metastable
state decomposition after removing the class labels of $\{\mathbf{x}_{n}\}$.
Finally, the clustering accuracies of different methods are evaluated
by calculating classification errors on the training data set $\{\mathbf{x}_{n}\}$
and the testing data set consisting of all instances not in $\{\mathbf{x}_{n}\}$:
\begin{eqnarray}
\mathrm{err}_{\mathrm{train}} & = & \frac{1}{N}\sum_{n=1}^{N}1_{\bar{y}_{n}\neq y\left(\mathbf{x}_{n}\right)}\nonumber \\
\mathrm{err}_{\mathrm{test}} & = & \frac{1}{|\{\text{testing data}\}|}\sum_{\mathbf{x}\in\{\text{testing data}\}}1_{\bar{y}\left(\mathbf{x}\right)\neq y\left(\mathbf{x}\right)}
\end{eqnarray}
where $\bar{y}\left(\mathbf{x}\right)$ denotes the true class label
of $\mathbf{x}$, and $y\left(\mathbf{x}\right)$ denotes the predicted
label given by metastable state decomposition results.

Table \ref{tab:clustering-errors} summarizes clustering errors on
the various data sets with $N=1000$. It can be observed from the
table that both PCCA+ and \MMMC outperform the geometric clustering
methods, $k$-medoids and MMC, by utilizing the metastable structure
in the sequential, and \MMMC achieves the best clustering performance
among all the four methods. Moreover, considering that the clustering
given by PCCA+ depend on the space discretization results (see Section
\ref{sec:Introduction}), we report the clustering errors of PCCA+
with different numbers of discrete bins in this table. As can be seen,
for most data sets, except \textbf{segment}, either the finest discretization
(with $400$ bins) or the coarsest discretization (with $50$ bins)
cannot lead to the best PCCA+ clustering results, which demonstrates
the sensitivity of PCCA+ to the choice of space discretization.

\begin{table}
\centering\begin{threeparttable}\protect\caption{Summary of the data sets\label{tab:datasets}}

\begin{tabular}{|c|c|c|c|}
\hline 
 & size & pattern dimension & number of classes\tnote{a)} ~($\kappa$)\tabularnewline
\hline 
\hline 
\textbf{letter} & $1555$ & $17$ & $2$\tabularnewline
\hline 
\textbf{satellite} & $2236$ & $36$ & $2$\tabularnewline
\hline 
\textbf{spambase} & $4601$ & $57$ & $2$\tabularnewline
\hline 
\textbf{waveform} & $5000$ & $21$ & $3$\tabularnewline
\hline 
\textbf{segment} & $1320$ & $19$ & $4$\tabularnewline
\hline 
\end{tabular}

\begin{tablenotes}
\item [a)] For data sets which contain more than $\kappa$ classes, we only use their subsets consising of the first $\kappa$ classes.
\end{tablenotes}

\end{threeparttable}
\end{table}

\begin{table}
\protect\caption{Means and standard deviations of clustering errors (in percent) calculated
over $20$ independent experiments of various data sets\label{tab:clustering-errors}}

\resizebox{\textwidth}{!}{

\begin{tabular}{|c|c|c|c|c|c|c|c|c|}
\hline 
\multicolumn{2}{|c|}{} & $k$-medoids & MMC & $\begin{array}{c}
\text{PCCA+}\\
\text{(50 bins)}
\end{array}$ & $\begin{array}{c}
\text{PCCA+}\\
\text{(200 bins)}
\end{array}$ & $\begin{array}{c}
\text{PCCA+}\\
\text{(300 bins)}
\end{array}$ & $\begin{array}{c}
\text{PCCA+}\\
\text{(400 bins)}
\end{array}$ & \multicolumn{1}{c|}{\MMMC}\tabularnewline
\hline 
\hline 
\multirow{2}{*}{\textbf{letter}} & $\mathrm{err}_{\mathrm{train}}$ & $8.6306\pm3.0850$ & $7.0090\pm3.2752$ & $2.2450\pm1.0128$ & $0.5450\pm0.1849$ & $0.4450\pm0.2585$ & $0.5950\pm0.2837$ & $\mathbf{0.1100\pm0.0852}$\tabularnewline
\cline{2-9} 
 & $\mathrm{err}_{\mathrm{test}}$ & $8.6400\pm3.0025$ & $7.3600\pm3.6059$ & $2.7207\pm1.4567$ & $1.1171\pm0.5746$ & $0.7658\pm0.5871$ & $1.0270\pm0.6299$ & $\mathbf{0.2342\pm0.1963}$\tabularnewline
\hline 
\multirow{2}{*}{\textbf{satellite}} & $\mathrm{err}_{\mathrm{train}}$ & $2.3503\pm0.8560$ & $2.3058\pm0.8616$ & $0.7727\pm0.2905$ & $0.6600\pm0.2741$ & $0.7600\pm0.2604$ & $0.8600\pm0.2501$ & $\mathbf{0.0900\pm0.0912}$\tabularnewline
\cline{2-9} 
 & $\mathrm{err}_{\mathrm{test}}$ & $6.7950\pm1.3069$ & $6.6450\pm1.4365$ & $1.1950\pm0.3348$ & $0.7727\pm0.3457$ & $0.9102\pm0.3803$ & $1.0720\pm0.5436$ & \multicolumn{1}{c|}{$\mathbf{0.3196\pm0.1325}$}\tabularnewline
\hline 
\multirow{2}{*}{\textbf{spambase}} & $\mathrm{err}_{\mathrm{train}}$ & $36.8773\pm1.8360$ & $22.5923\pm9.7192$ & $23.6000\pm9.2388$ & $25.7450\pm13.4768$ & $22.7950\pm12.4952$ & $28.3671\pm8.2107$ & $\mathbf{8.5296\pm1.2141}$\tabularnewline
\cline{2-9} 
 & $\mathrm{err}_{\mathrm{test}}$ & $45.0150\pm4.2245$ & $23.7650\pm7.4026$ & $24.7362\pm7.8620$ & $27.1355\pm9.7611$ & $23.9572\pm8.3935$ & $30.2900\pm15.3507$ & $\mathbf{12.2350\pm1.0338}$\tabularnewline
\hline 
\multirow{2}{*}{\textbf{waveform}} & $\mathrm{err}_{\mathrm{train}}$ & $46.6212\pm2.8506$ & $34.1600\pm16.3716$ & $23.5425\pm2.6599$ & $21.5525\pm2.9225$ & $26.1075\pm8.1388$ & $30.3725\pm13.9271$ & $\mathbf{15.0550\pm7.1295}$\tabularnewline
\cline{2-9} 
 & $\mathrm{err}_{\mathrm{test}}$ & $47.4550\pm2.0623$ & $37.4900\pm16.0950$ & $27.2483\pm3.7676$ & $27.6700\pm4.5708$ & $33.6117\pm9.91157$ & $36.5450\pm13.2638$ & $\mathbf{22.2675\pm10.4104}$\tabularnewline
\hline 
\multirow{2}{*}{\textbf{segment}} & $\mathrm{err}_{\mathrm{train}}$ & $21.2417\pm11.3240$ & $8.1050\pm3.9986$ & $5.4500\pm1.0273$ & $2.6050\pm0.4729$ & $2.4850\pm0.8845$ & $2.3400\pm0.6652$ & $\mathbf{1.3750\pm0.3059}$\tabularnewline
\cline{2-9} 
 & $\mathrm{err}_{\mathrm{test}}$ & $31.9750\pm7.9013$ & $18.3234\pm18.1635$ & $14.9662\pm12.2071$ & $10.2918\pm12.3009$ & $10.5916\pm12.2503$ & $9.1230\pm9.8576$ & $\mathbf{2.5392\pm1.6581}$\tabularnewline
\hline 
\end{tabular}

}
\end{table}

\subsection{Diffusion models}

We now consider two examples of time-reversible diffusion processes,
denoted by Model I and Model II, which can be described by two dimensional
Fokker-Planck equations (see \ref{sec:Description-of-ModelI-II} for
details).

For metastable state decomposition of Model I, we generate $10$ trajectories
by $10$ independent simulations with time length $80$, sample interval
$\Delta t=0.2$ and initial states $\mathbf{x}_{0}$ distributed according
to an uniform distribution on $\left[-1.5,1.5\right]^{2}$. Fig.~\ref{fig:Illustration-of-Model-I}
shows the potential energy and simulation trajectories, where $\mathbf{x}_{t}=(x_{t}^{(1)},x_{t}^{(2)})$
denotes the system state at time $t$, the potential function $V\left(\mathbf{x}\right)$
is defined by $\pi\left(\mathbf{x}\right)\propto\exp\left(-V\left(\mathbf{x}\right)\right)$
and $\pi\left(\mathbf{x}\right)$ is the equilibrium distribution
$\lim_{t\to\infty}p\left(\mathbf{x}_{t}=\mathbf{x}\right)$. We can
observe that Model I has $6$ potential wells, and the energy barrier
between the ``upper'' wells (with $x^{(2)}>0$) and the ``lower''
ones (with $x^{(2)}<0$) can be easily crossed. Therefore, the phase
space of Model I can be decomposed into $3$ metastable states with
each one containing an upper and a lower potential well. Fig.~\ref{fig:Decomposition-results-of-Model-I}
displays the decomposition results of all four methods with $\kappa=3$
by using the trajectory data shown in Fig.~\ref{fig:Simulation-data-modelI},
where the bin number of PCCA+ is set to be $10$. It is obvious that
\MMMC and PCCA+ accurately identify the three metastable states,
while the macrostates given by $k$-medoids and MMC do not exhibit
strong metastability although the decomposition results of the latter
two methods are ``reasonable'' in the sense of geometric clustering.
This shows the limitation of geometric clustering methods in the case
that the spatial structure of metastable states does not only depend
on the shape of equilibrium state distribution function.

Note that the decomposition result displayed in Fig.~\ref{fig:MMC-Model-I}
might not be the global optima of the MMC problem. It is natural to
ask if MMC can correctly find the metastable states by choosing a
better initial solution for the local search procedure. In order to
answer this question, we solve the MMC problem by the local search
algorithm in \citep{zhang2009maximum} starting from the decomposition
provided by \MMMC. Fig.~\ref{fig:Decomposition-results-mmc-m3cinit}
shows the corresponding decomposition result, which is very similar
to that obtained by \MMMC and PCCA+. However, the objective function
value of this decomposition in the MMC problem is $0.0652$, which
is larger than the objective function value $0.0496$ of the decomposition
shown in Fig.~\ref{fig:MMC-Model-I}. This indicates that MMC prefer
the decomposition in Fig.~\ref{fig:MMC-Model-I} to that in Fig.~\ref{fig:Decomposition-results-mmc-m3cinit},
although the latter one is better for metastability analysis.

Moreover, we also perform the local search algorithm to solve the
\MMMC problem with random initialization, and Fig.~\ref{fig:Decomposition-results-m3c-randominit-Model-I}
plots the optimization result. As observed from the figure, the randomly
generated initial solution leads to the algorithm getting stuck in
a local optimum. (The optimal objective function values of the \MMMC
problem obtained by the local search with random initialization and
the mixed-algorithm proposed in Section \ref{sub:Full-description-of-algorithm}
are $0.1553$ and $0.0967$ separately.) It shows that the local search
algorithm proposed in Section \ref{sub:Local-search} is sensitive
to initial conditions, and some heuristic method (e.g., the global
search algorithm presented in this paper) is needed to provide a satisfactory
initial solution.

\begin{figure}
\subfloat[Potential function]{\begin{centering}
\includegraphics[width=0.45\textwidth]{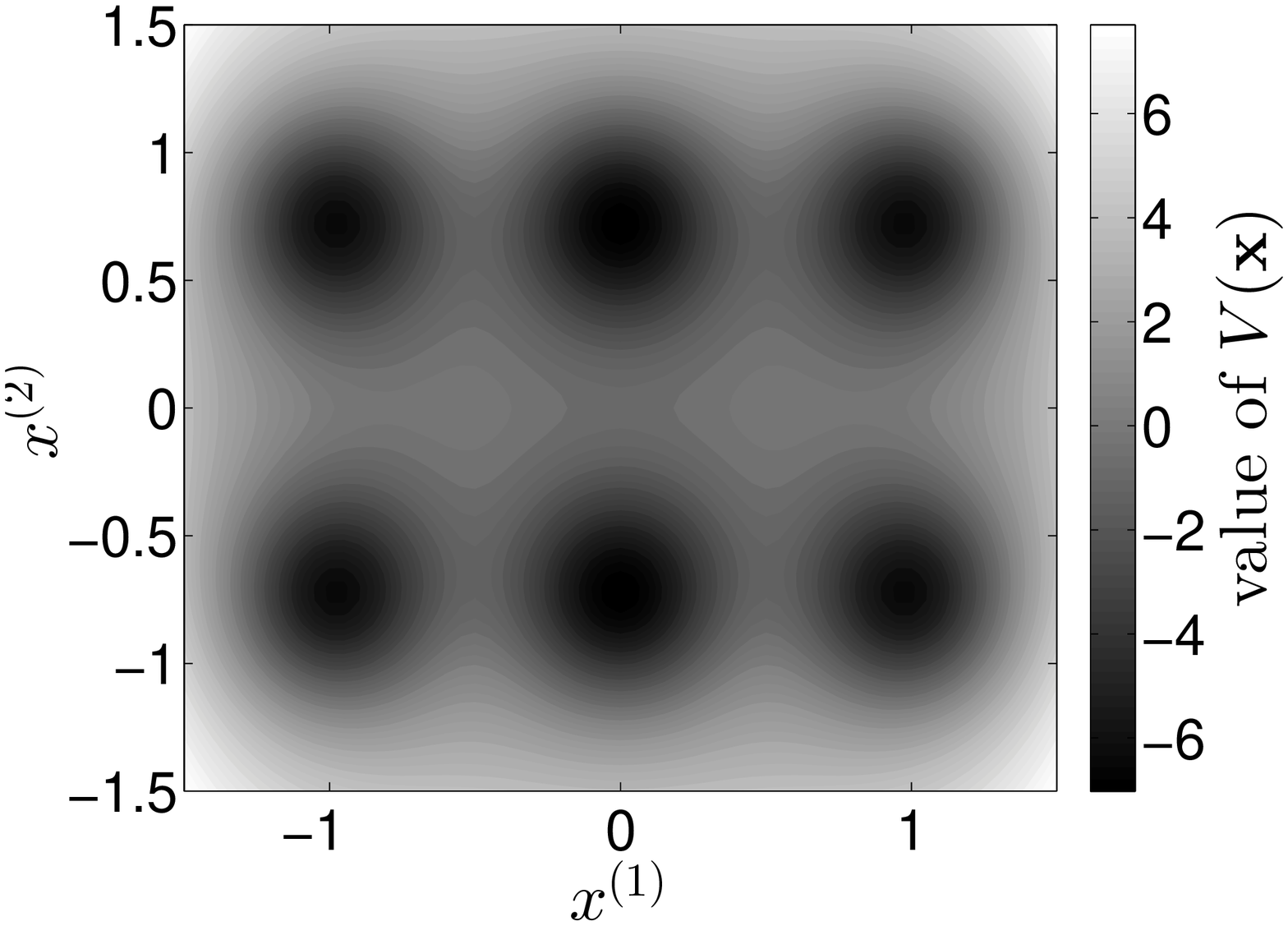}
\par\end{centering}

}\hfill{}\subfloat[Trajectory data, where dotted lines represent trajectories of $\mathbf{x}_{t}$
and circles denote data points sampled from trajectories.\label{fig:Simulation-data-modelI}]{\begin{centering}
\includegraphics[width=0.45\textwidth]{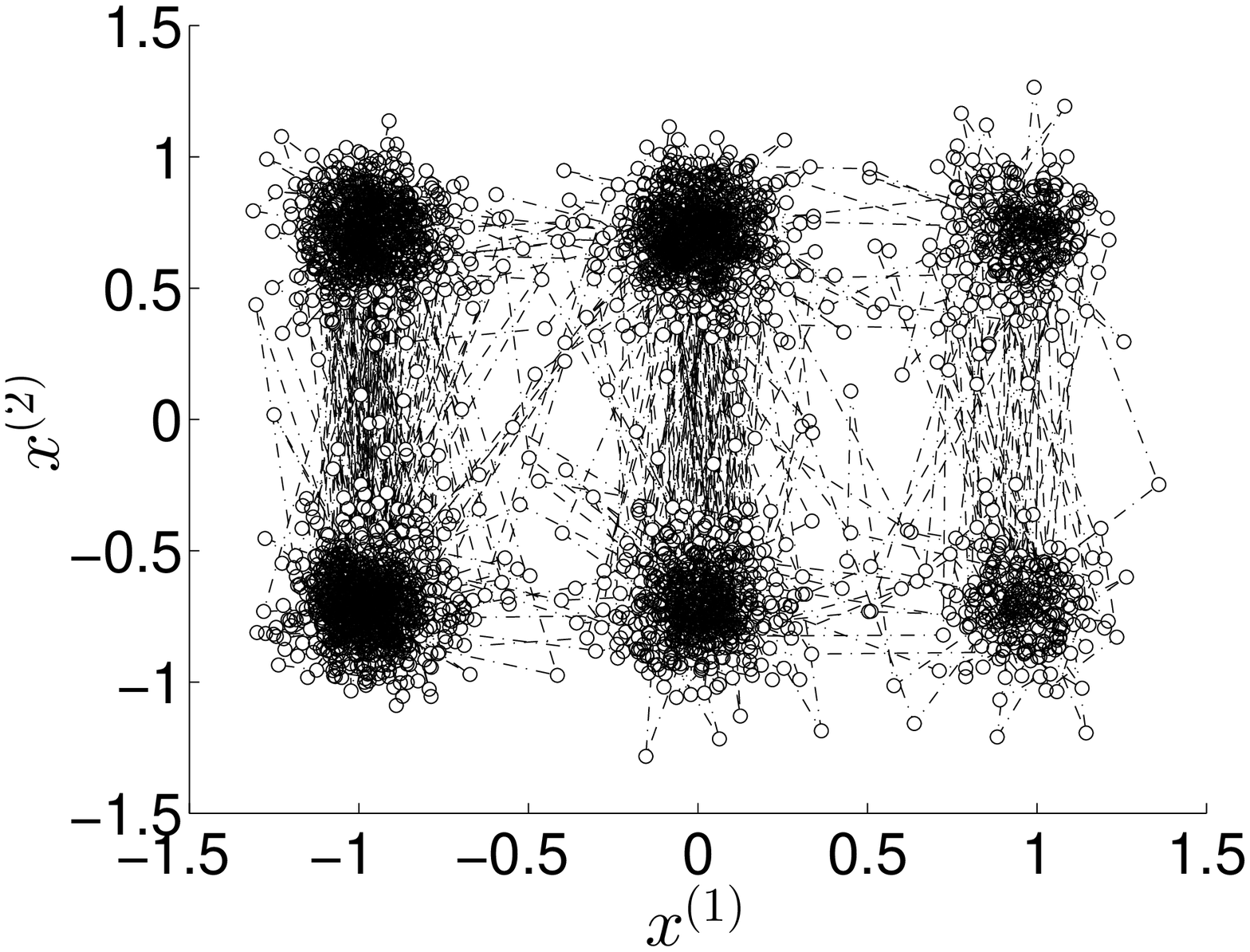}
\par\end{centering}

}

\protect\caption{Illustration of Model I.\label{fig:Illustration-of-Model-I}}
\end{figure}

\begin{figure}
\subfloat[\MMMC]{\begin{centering}
\includegraphics[width=0.45\textwidth]{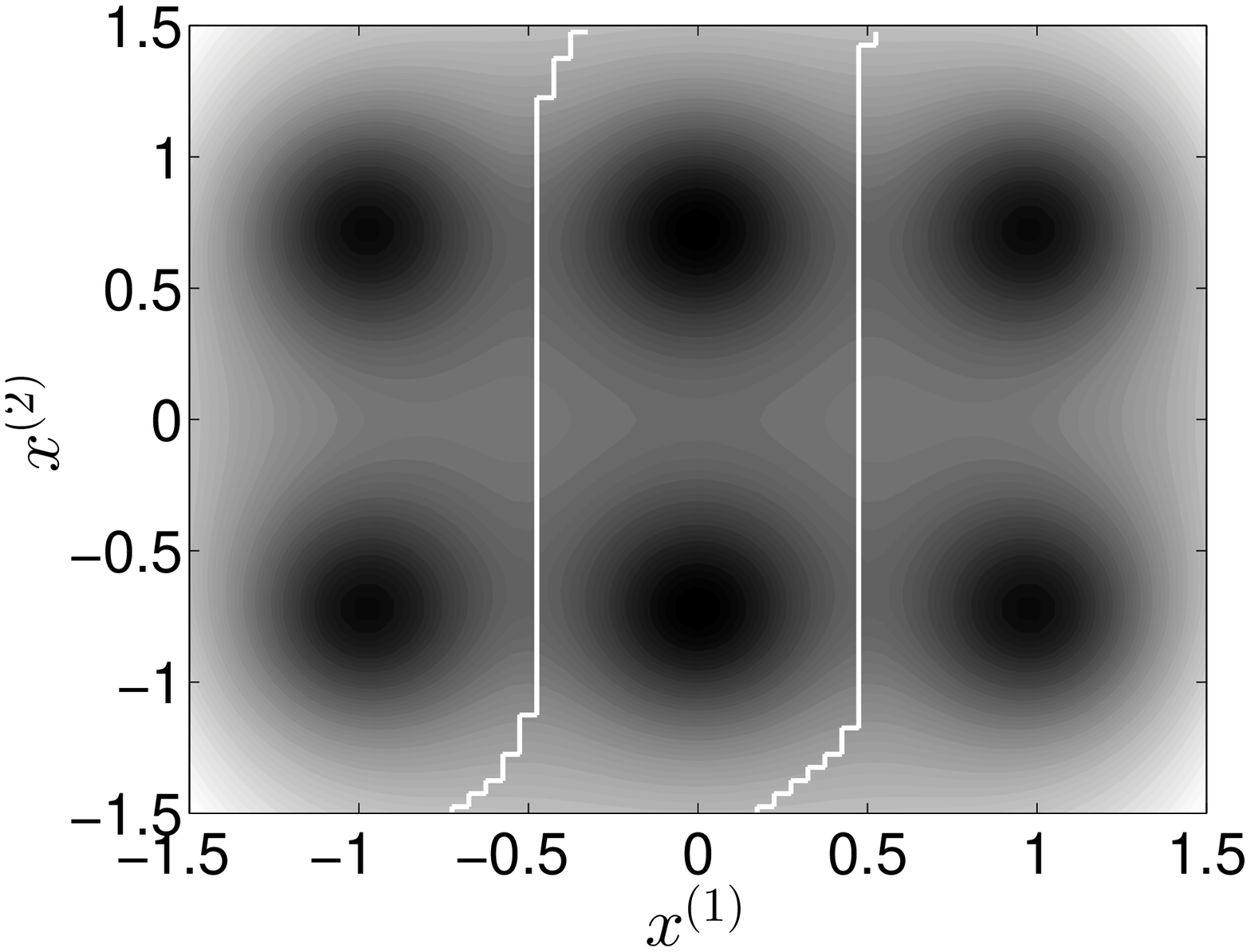}
\par\end{centering}

}\hfill{}\subfloat[$k$-medoids]{\begin{centering}
\includegraphics[width=0.45\textwidth]{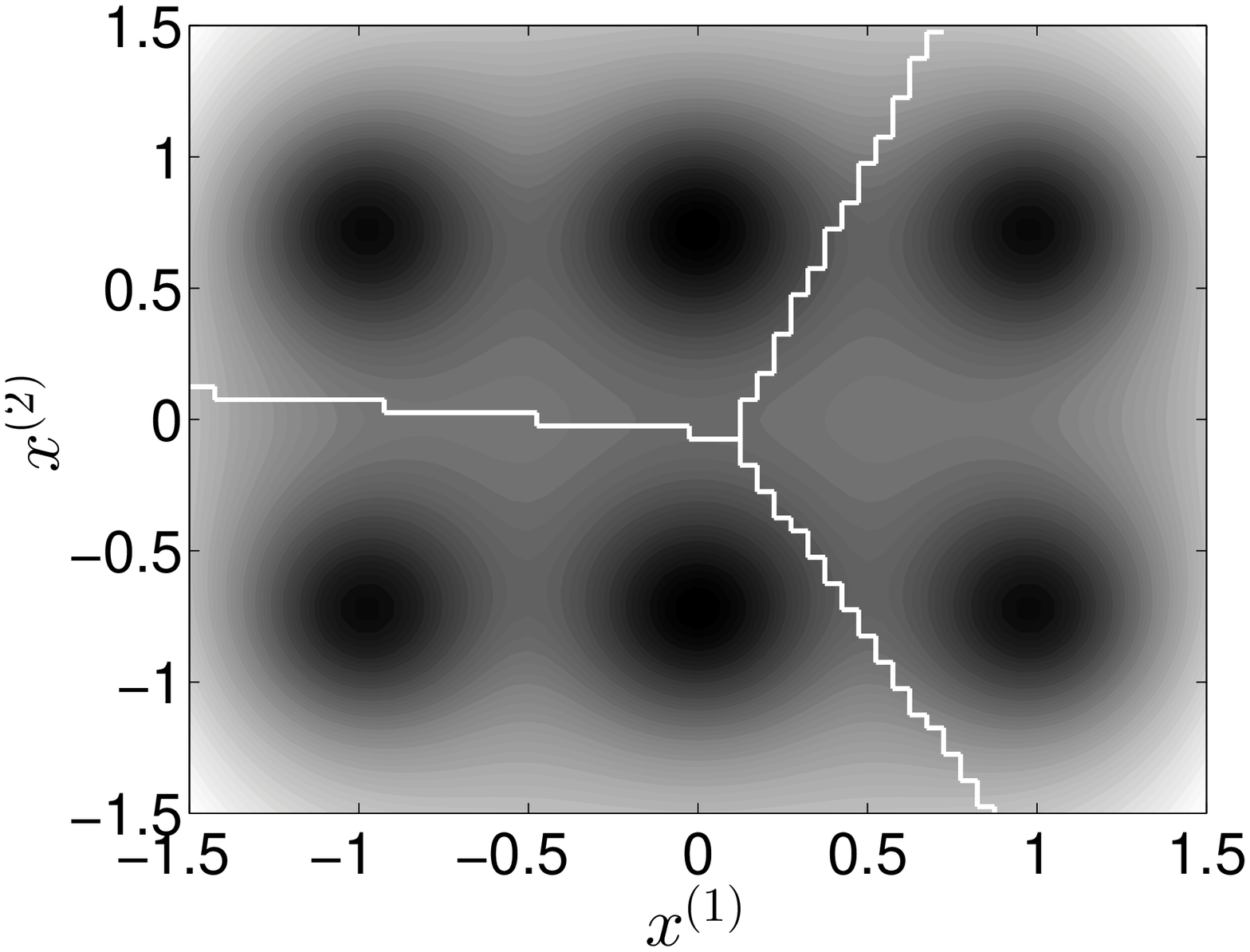}
\par\end{centering}

}

\subfloat[MMC\label{fig:MMC-Model-I}]{\begin{centering}
\includegraphics[width=0.45\textwidth]{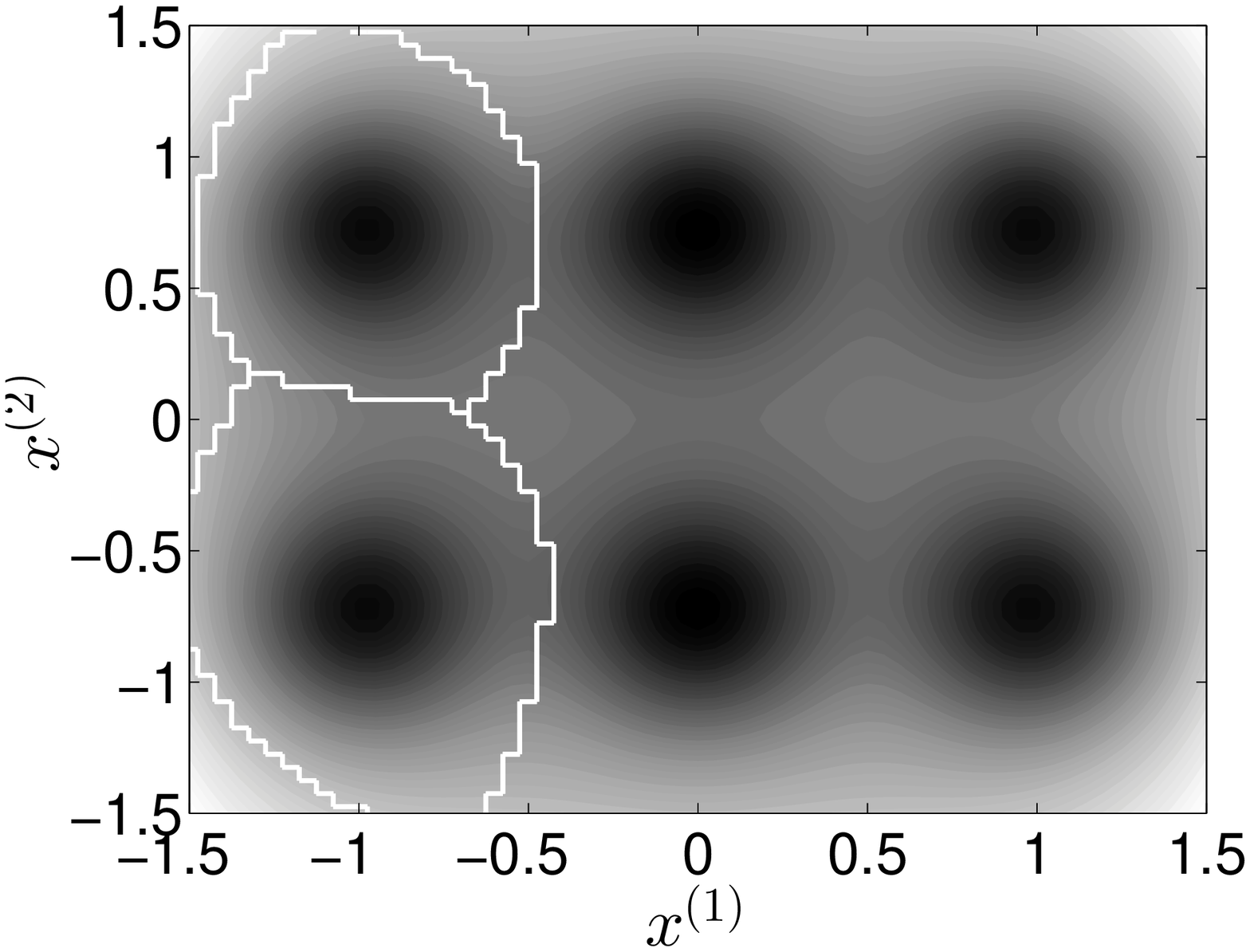}
\par\end{centering}

}\hfill{}\subfloat[PCCA+ (with 10 bins)]{\begin{centering}
\includegraphics[width=0.45\textwidth]{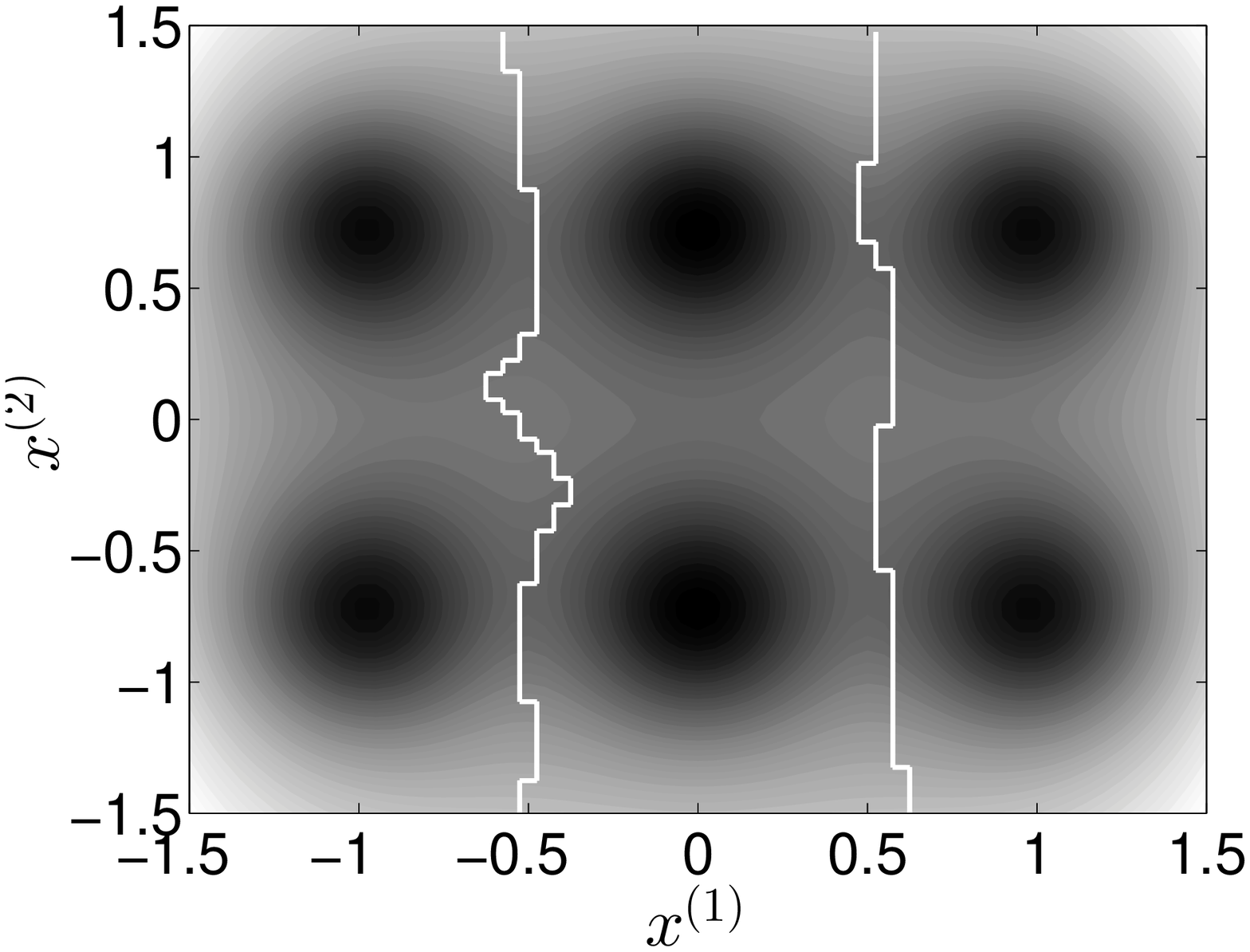}
\par\end{centering}

}

\protect\caption{Decomposition results of Model I, where white lines represent boundaries
between macrostates. The boundaries are computed by the finite element
method with mesh size $0.05\times0.05$.\label{fig:Decomposition-results-of-Model-I}}
\end{figure}

\begin{figure}
\begin{centering}
\includegraphics[width=0.45\textwidth]{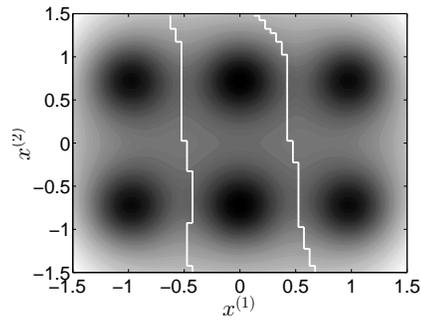}
\par\end{centering}

\protect\caption{Decomposition result of Model I obtained by the local search based
MMC with initial solution given by \MMMC.\label{fig:Decomposition-results-mmc-m3cinit}}

\end{figure}

\begin{figure}
\begin{centering}
\includegraphics[width=0.45\textwidth]{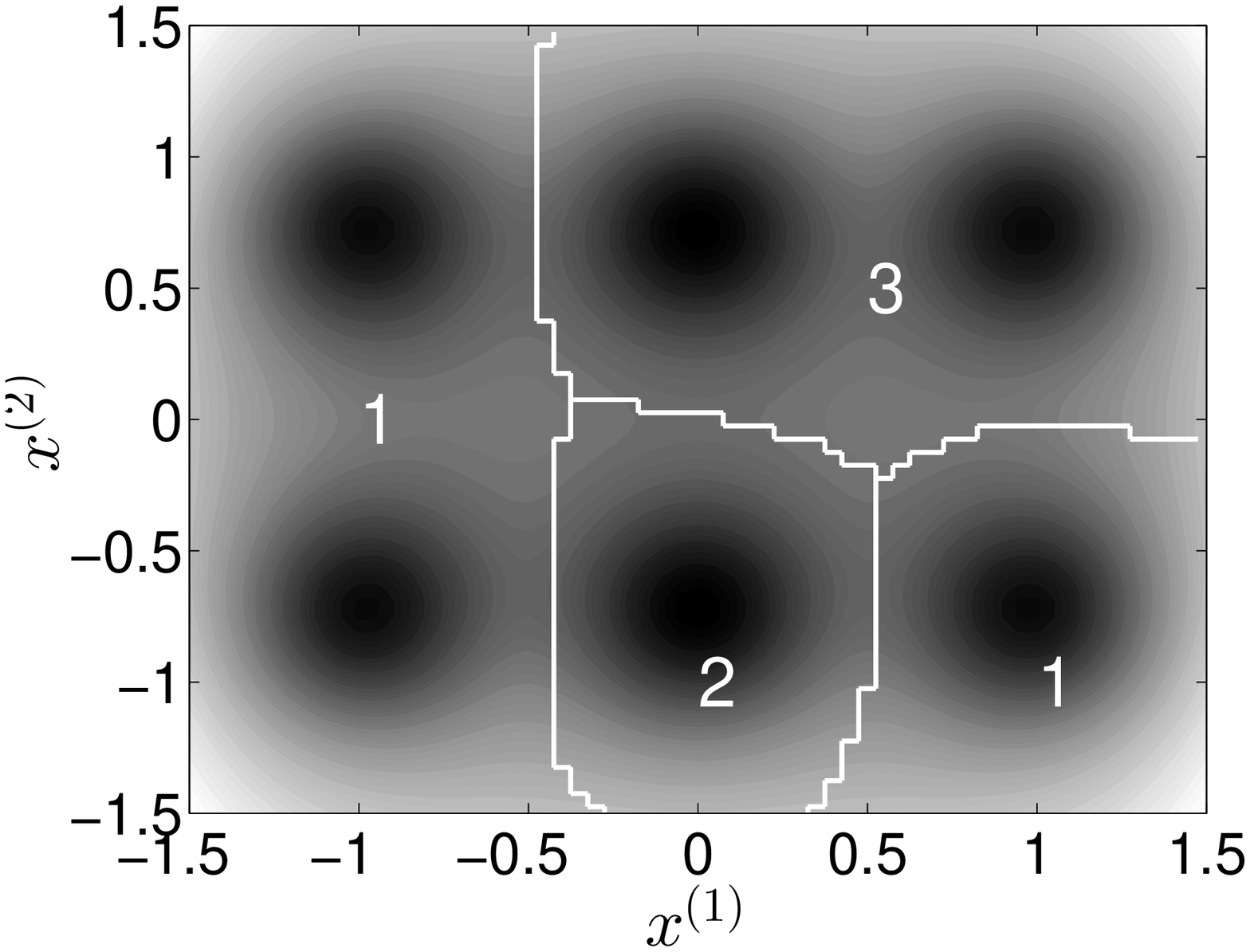}
\par\end{centering}

\protect\caption{Decomposition result of Model I obtained by the local search based
\MMMC with random initialization.\label{fig:Decomposition-results-m3c-randominit-Model-I}}
\end{figure}

The potential energy of Model II is shown in Fig.~\ref{fig:Model-II-potential}.
We generate $50$ trajectories in the state space of Model II in order
to test the metastable state decomposition methods (see Fig~\ref{fig:Model-II-trajectory}),
where the time length of each trajectory is $1$, the sample interval
is $\Delta t=0.02$ and the initial states are uniformly sampled from
$\left[-2,2\right]^{2}$. It is easy to see that Model II has three
metastable states formed by the three potential wells.

Fig.~\ref{fig:Decomposition-results-of-Model-II} summarizes the
decomposition results with $\kappa=3$. Figs.~\ref{fig:decomposition-kmedoids-modelII}
and \ref{fig:decomposition-mmc-modelII} show that $k$-medoids and
MMC fail to identify the three metastable states in Model II due to
the large difference between the empirical data distribution and the
equilibrium distribution. It is interesting to see from Fig.~\ref{fig:decomposition-pcca-modelII}
that PCCA+ (with $20$ discrete bins) also gives an undesired decomposition.
We now analyze why PCCA+ fails in this example. Note that there is
a low energy barrier at the center of the right potential well (see
the rectangular region with the dashed line in Fig.~\ref{fig:Illustration-of-Model-II}
and its magnified picture shown in Fig.~\ref{fig:pcca-detail-results-Model-II}),
which can be easily crossed for equilibrium simulations. But the trajectory
data used in the above is very far away from the global equilibrium
due to short simulation lengths. So only a small number of transitions
between discrete bins of PCCA+ are observed around this region, and
the PCCA+ assigns these bins to different metastable states. In all
the four methods, \MMMC is the only method that get correct metastable
state decomposition by utilizing both the geometric and the dynamical
information, and it avoids splitting the right potential well into
two metastable states as PCCA+ because such a decomposition leads
to a small margin between metastable boundaries.

\begin{figure}
\subfloat[Potential function\label{fig:Model-II-potential}]{\begin{centering}
\includegraphics[width=0.45\textwidth]{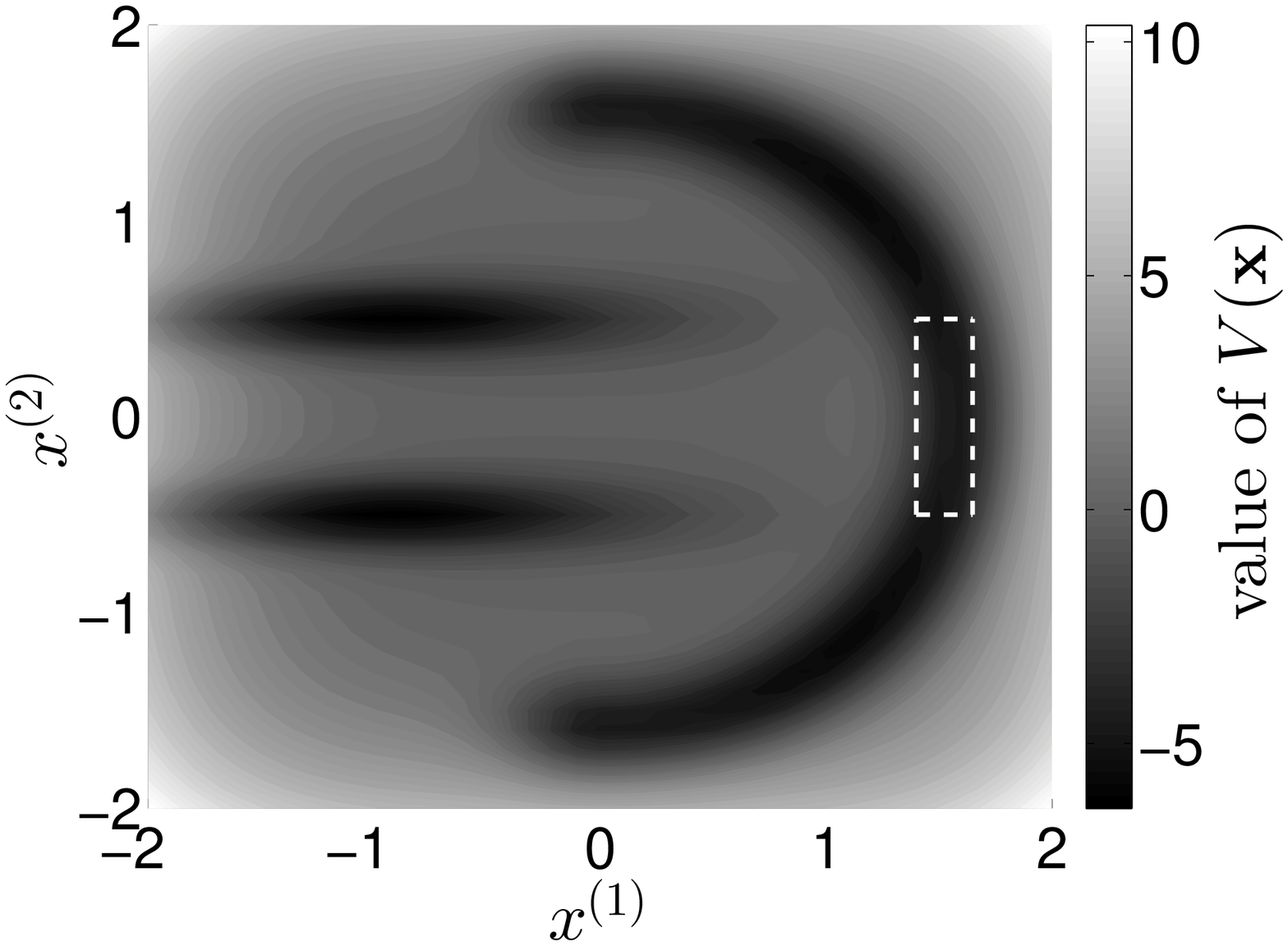}
\par\end{centering}

}\hfill{}\subfloat[Trajectory data, where dotted lines represent trajectories of $\mathbf{x}_{t}$
and circles denote data points sampled from trajectories.\label{fig:Model-II-trajectory}]{\begin{centering}
\includegraphics[width=0.45\textwidth]{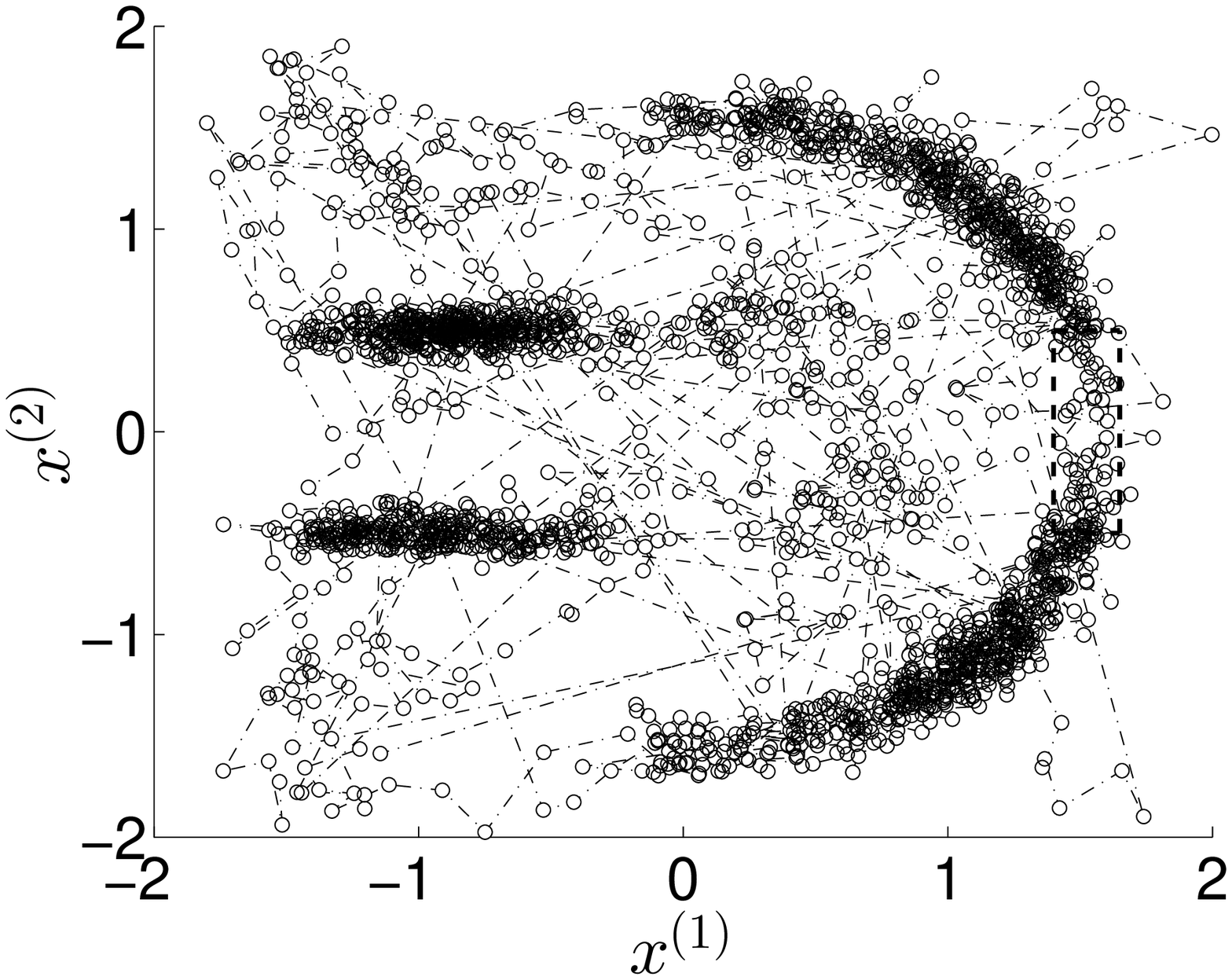}
\par\end{centering}

}

\protect\caption{Illustration of Model II, where the rectangular with dashed lines
shows the region $[1.4,1.65]\times[-0.5,0.5]$ of a low energy barrier
within the right potential well.\label{fig:Illustration-of-Model-II}}
\end{figure}

\begin{figure}
\subfloat[\MMMC\label{fig:decomposition-m3c-modelII}]{\begin{centering}
\includegraphics[width=0.45\textwidth]{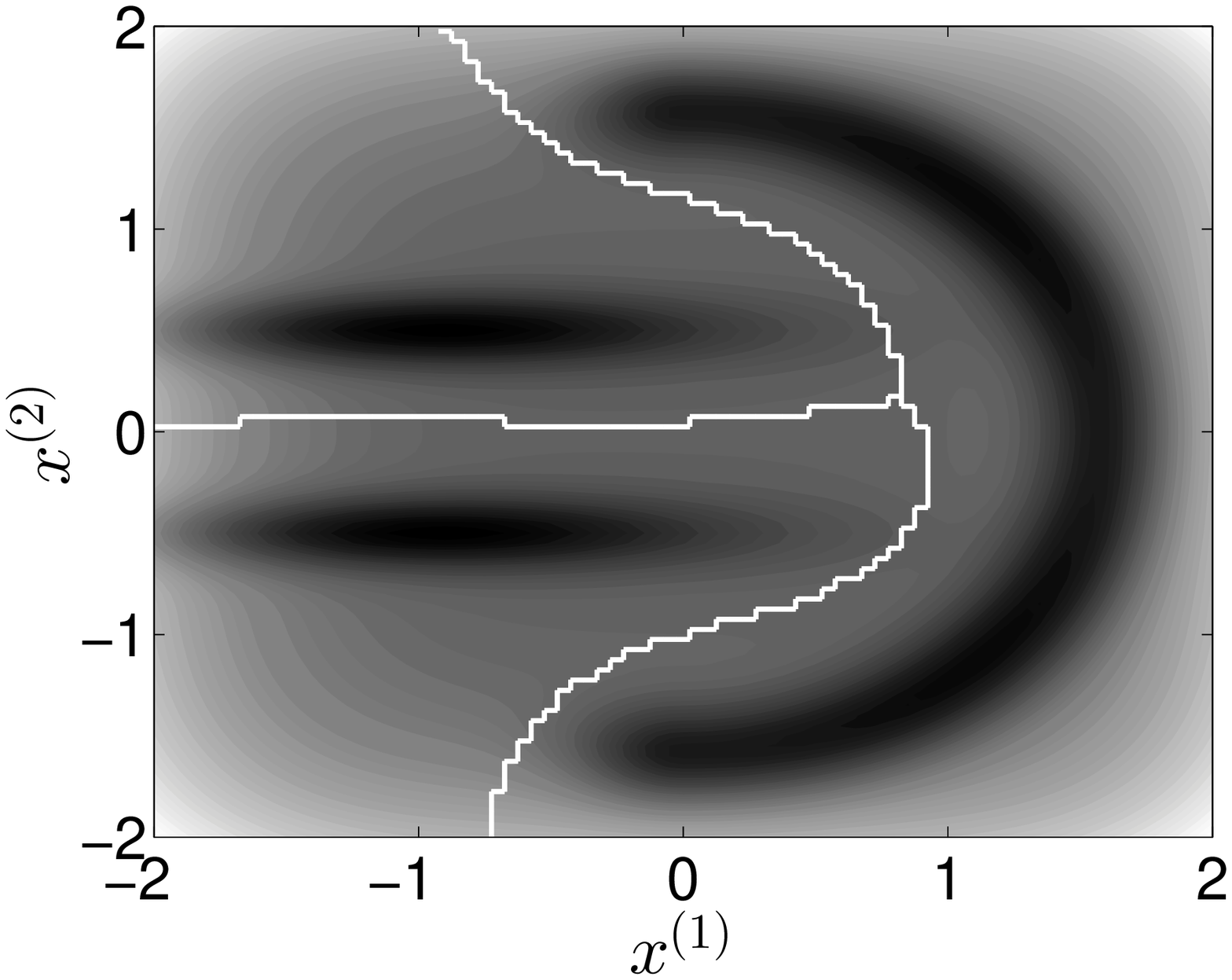}
\par\end{centering}

}\hfill{}\subfloat[$k$-medoids\label{fig:decomposition-kmedoids-modelII}]{\begin{centering}
\includegraphics[width=0.45\textwidth]{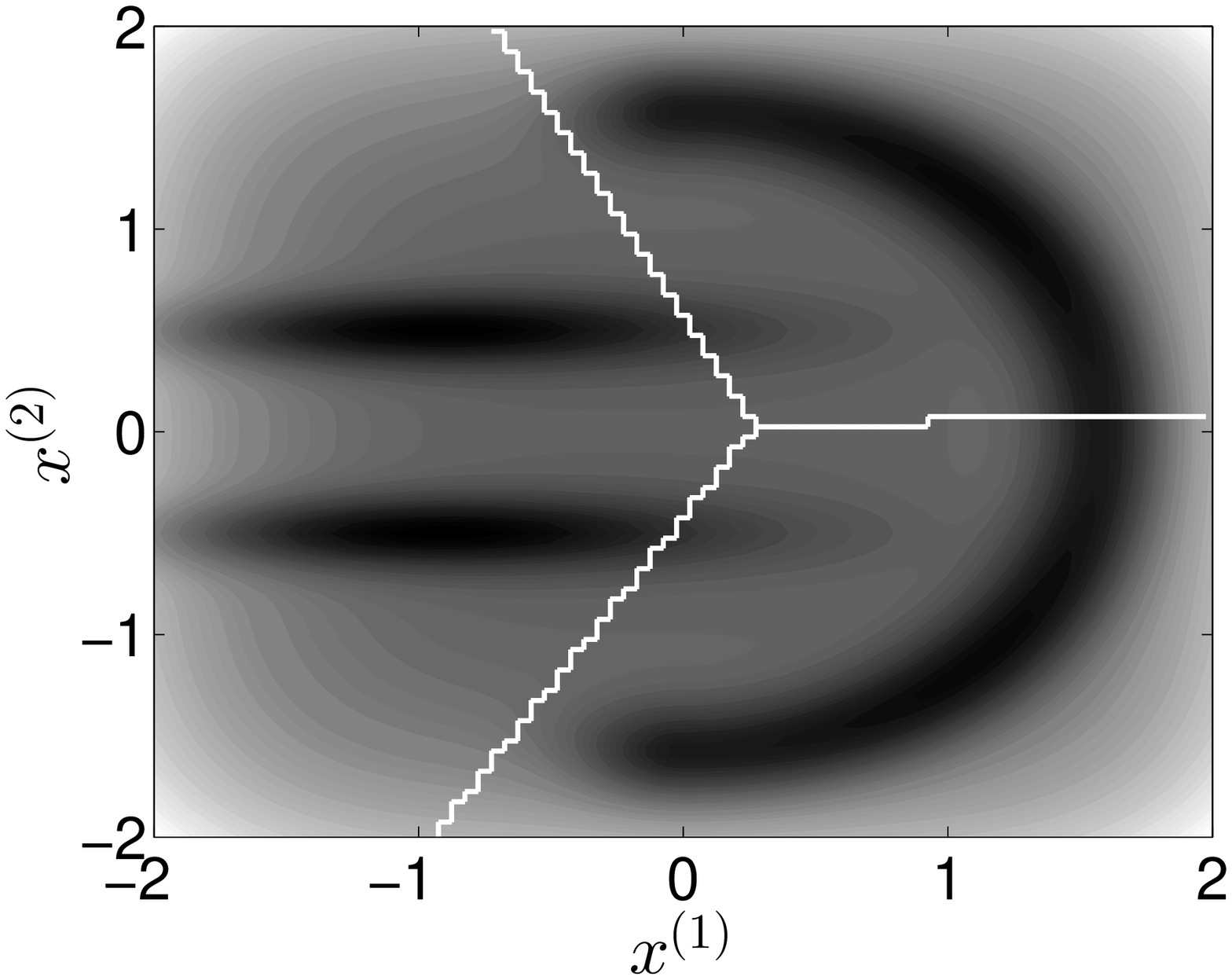}
\par\end{centering}

}

\subfloat[MMC\label{fig:decomposition-mmc-modelII}]{\begin{centering}
\includegraphics[width=0.45\textwidth]{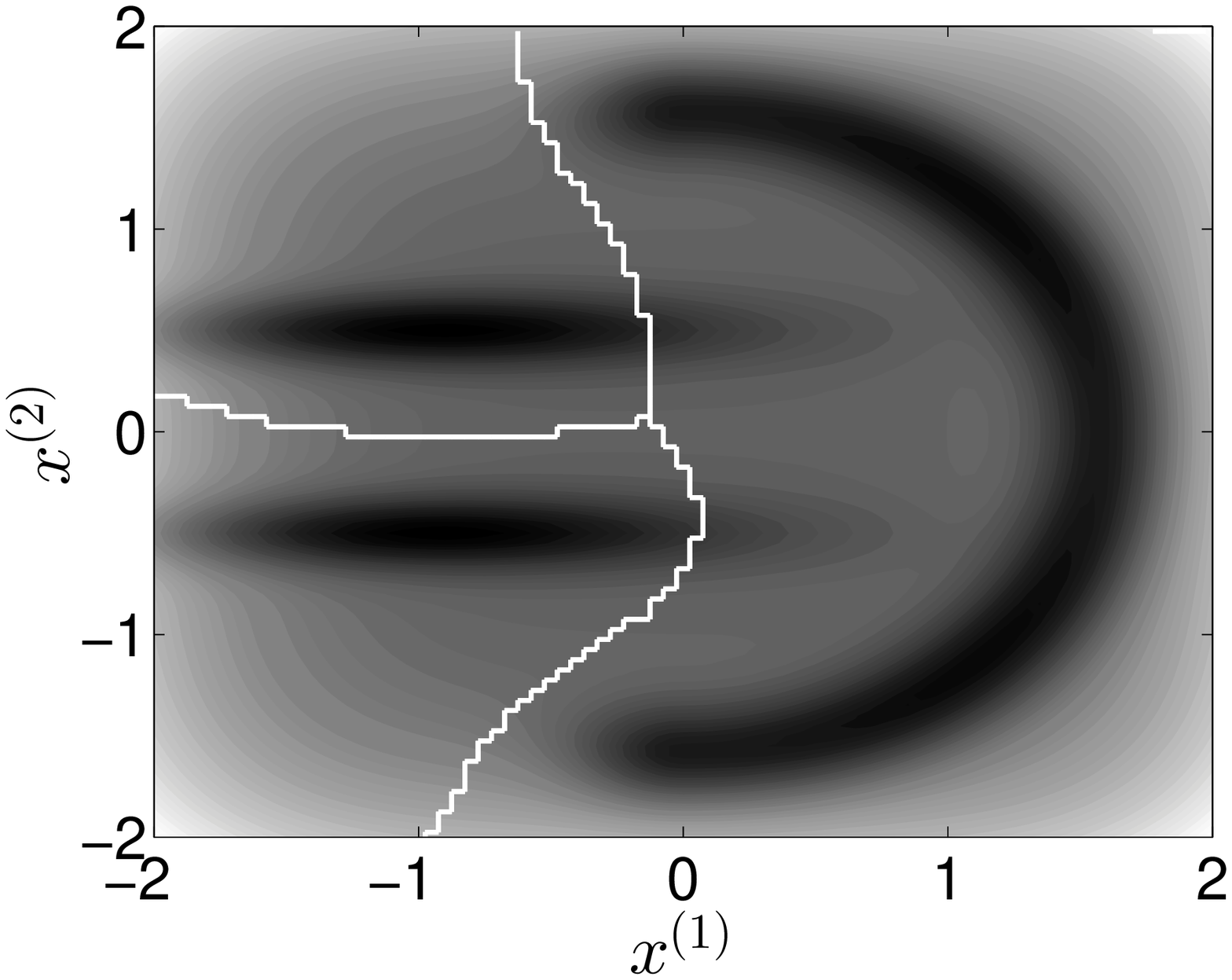}
\par\end{centering}

}\hfill{}\subfloat[PCCA+ (with 20 bins)\label{fig:decomposition-pcca-modelII}]{\begin{centering}
\includegraphics[width=0.45\textwidth]{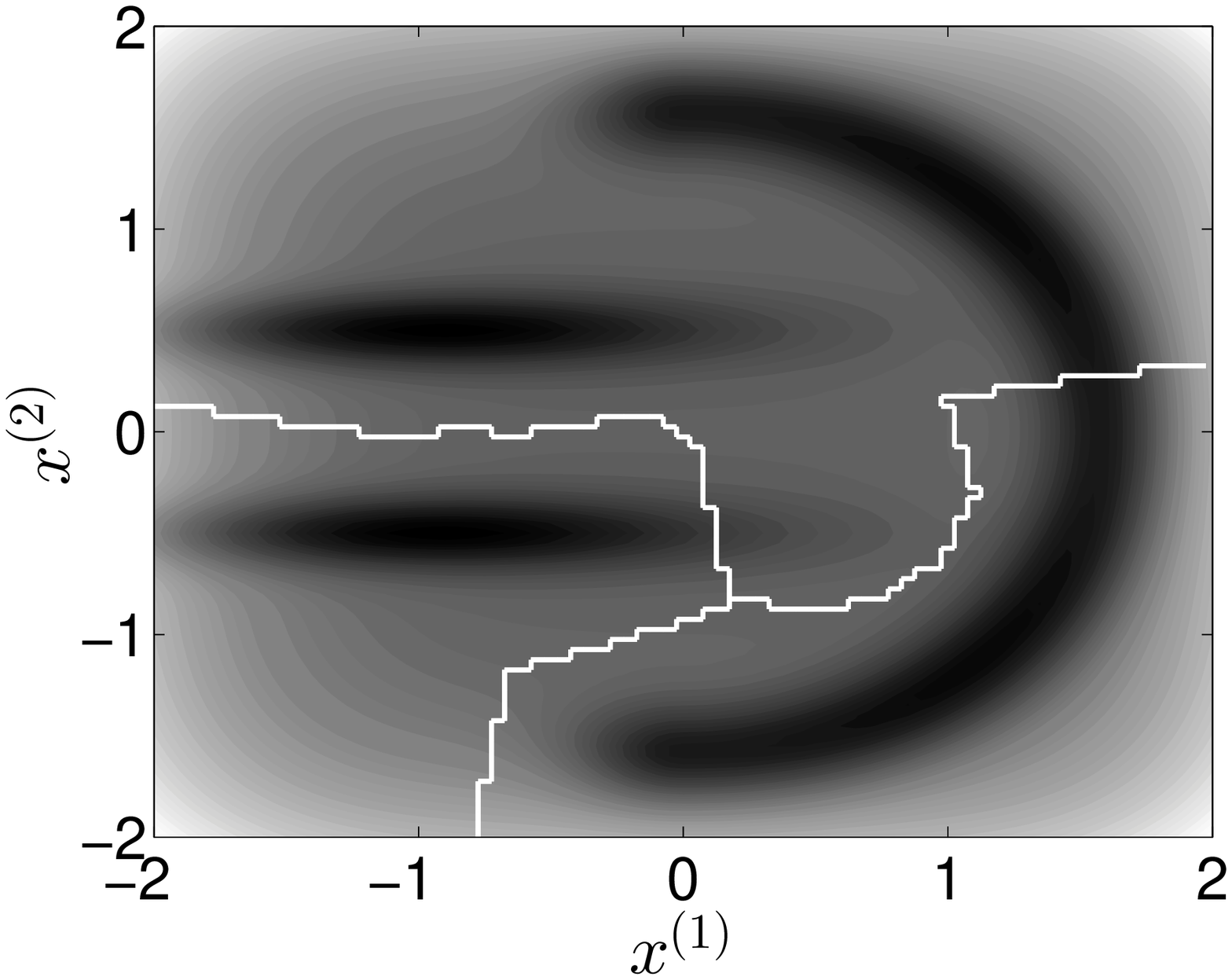}
\par\end{centering}

}

\protect\caption{Decomposition results of Model II, where white lines represent boundaries
between macrostates. The boundaries are computed by the finite element
method with mesh size $0.05\times0.05$.\label{fig:Decomposition-results-of-Model-II}}
\end{figure}

\begin{figure}
\begin{centering}
\includegraphics[width=0.45\textwidth]{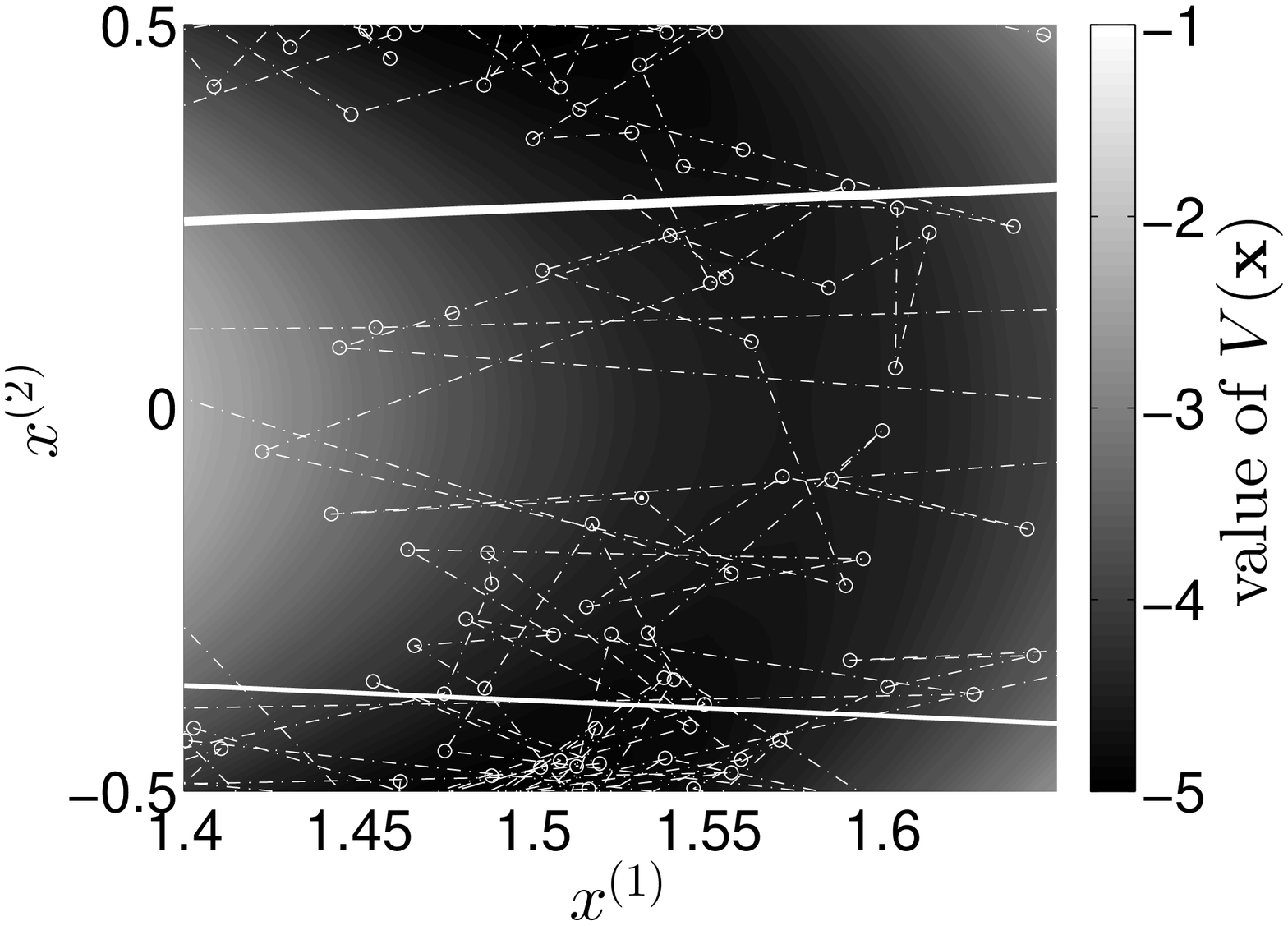}
\par\end{centering}

\protect\caption{PCCA+ decomposition result of Model II in the area $[1.4,1.65]\times[-0.5,0.5]$,
where where dotted lines represent trajectories, circles denote data
points sampled from trajectories, solid lines show boundaries of discrete
bins obtained by space discretization, and the upper boundary is chosen
to be the metastable state boundary by the PCCA+ algorithm.\label{fig:pcca-detail-results-Model-II}}
\end{figure}

Finally, we repeat the above numerical experiments of Model I and
Model II for $20$ times and utilize the following quantities to evaluate
and compare the performance of different decomposition methods quantitatively:
\begin{enumerate}
\item $Q=\sum_{k=1}^{\kappa}P_{kk}\left(\Delta t\right)$, where $\Delta t$
denotes the sample interval and
\begin{eqnarray}
P_{ij}\left(\tau\right) & = & \lim_{t\to\infty}\Pr(\mathbf{x}_{t+\tau}\in\text{metastable state }j|\mathbf{x}_{t}\in\nonumber \\
 &  & \text{metastable state }i)
\end{eqnarray}
denotes the transition probability between metastable states with
lagtime $\tau$. It is clear that $Q$ can measure the metastability
of a system and a decomposition with strongly metastablity will result
in a large $Q$ close to $\kappa$ \citep{chodera2007automatic}.
\item Implied timescale $\mathrm{ITS}_{i}\left(\tau\right)=-\tau/\ln\lambda_{i}\left(\tau\right)$
with $i>1$, where $\lambda_{i}\left(\tau\right)$ denotes the $i$-th
largest eigenvalue of transition probability matrix $\mathbf{P}\left(\tau\right)=[P_{ij}\left(\tau\right)]$.
(The first implied timescale $\mathrm{ITS}_{i}\left(\tau\right)\equiv\infty$)
It can be proved that the value of $\mathrm{ITS}_{i}\left(\tau\right)$
is a constant independent of $\tau$ and equal to the dominant relaxation
timescales of the original system if the transitions between metastable
states are exactly Markovian \citep{elmer2005foldamerII,noe2013projected}.
Thus, we can check if a Markov chain on metastable states can accurately
approximate the system dynamics through comparing implied timescales
with different $\tau$.\end{enumerate}
\begin{rem}
We run a long simulation with time length $10^{4}$ to estimate values
of $P_{ij}\left(\tau\right)$ for each model. Furthermore, for convenience
of comparison, we use the finely discretized Markov state models with
$50$ states to estimate the true relaxation timescales of Model I
and Model II. The detailed estimation algorithms are given in \citep{noe2007hierarchical}.
\end{rem}
Table \ref{tab:Q-diffusion-model} and Figs.~\ref{fig:its-Model-I}
and \ref{fig:its-Model-II} summarize the values of $Q$ and implied
timescales given by different decomposition methods, including the
PCCA+ method with different bin numbers. The table and figures also
demonstrate the superior performance of M\textsuperscript{3}C. It
is worth pointing out that in $9$ out of $20$ experiments of Model
II, PCCA+ wrongly decomposes the right potential well into two metastable
states (with any bin number), whereas M\textsuperscript{3}C gives
the ``ideal'' decomposition in all the $20$ experiments. Moreover,
as can be seen from the figures, the implied timescales obtained from
M\textsuperscript{3}C converge fast and are very close to the relaxation
timescales estimated by finely discretized Markov state models, which
implies that the essential dynamical properties of Model I and Model
II can be accurately captured by $3$-state Markov models using the
metastable states identified by M\textsuperscript{3}C.

\begin{table}
\protect\caption{Means and standard deviations of $Q$ values calculated over $20$
independent experiments of Models I and II\label{tab:Q-diffusion-model}}

\resizebox{\textwidth}{!}{

\begin{tabular}{|c|c|c|c|c|c|c|c|c|}
\hline 
\multicolumn{1}{|c|}{} & $k$-medoids & MMC & $\begin{array}{c}
\text{PCCA+}\\
\text{(6 bins)}
\end{array}$ & $\begin{array}{c}
\text{PCCA+}\\
\text{(10 bins)}
\end{array}$ & $\begin{array}{c}
\text{PCCA+}\\
\text{(20 bins)}
\end{array}$ & $\begin{array}{c}
\text{PCCA+}\\
\text{(30 bins)}
\end{array}$ & $\begin{array}{c}
\text{PCCA+}\\
\text{(40 bins)}
\end{array}$ & \multicolumn{1}{c|}{\MMMC}\tabularnewline
\hline 
\hline 
Model I & $2.8138\pm0.0317$ & $2.8142\pm0.0244$ & $2.9603\pm0.0170$ & $2.9637\pm0.0011$ & $2.9628\pm0.0017$ & $2.9619\pm0.0027$ & $2.9613\pm0.0021$ & $\mathbf{2.9641\pm0.0005}$\tabularnewline
\hline 
Model II & $2.9579\pm0.0031$ & $2.9832\pm0.0116$ & $2.9681\pm0.0211$ & $2.9666\pm0.0161$ & $2.9676\pm0.0173$ & $2.9655\pm0.0187$ & $2.9600\pm0.0285$ & $\mathbf{2.9882\pm0.0024}$\tabularnewline
\hline 
\end{tabular}

}
\end{table}

\begin{figure}
\subfloat[]{\begin{centering}
\includegraphics[width=0.45\textwidth]{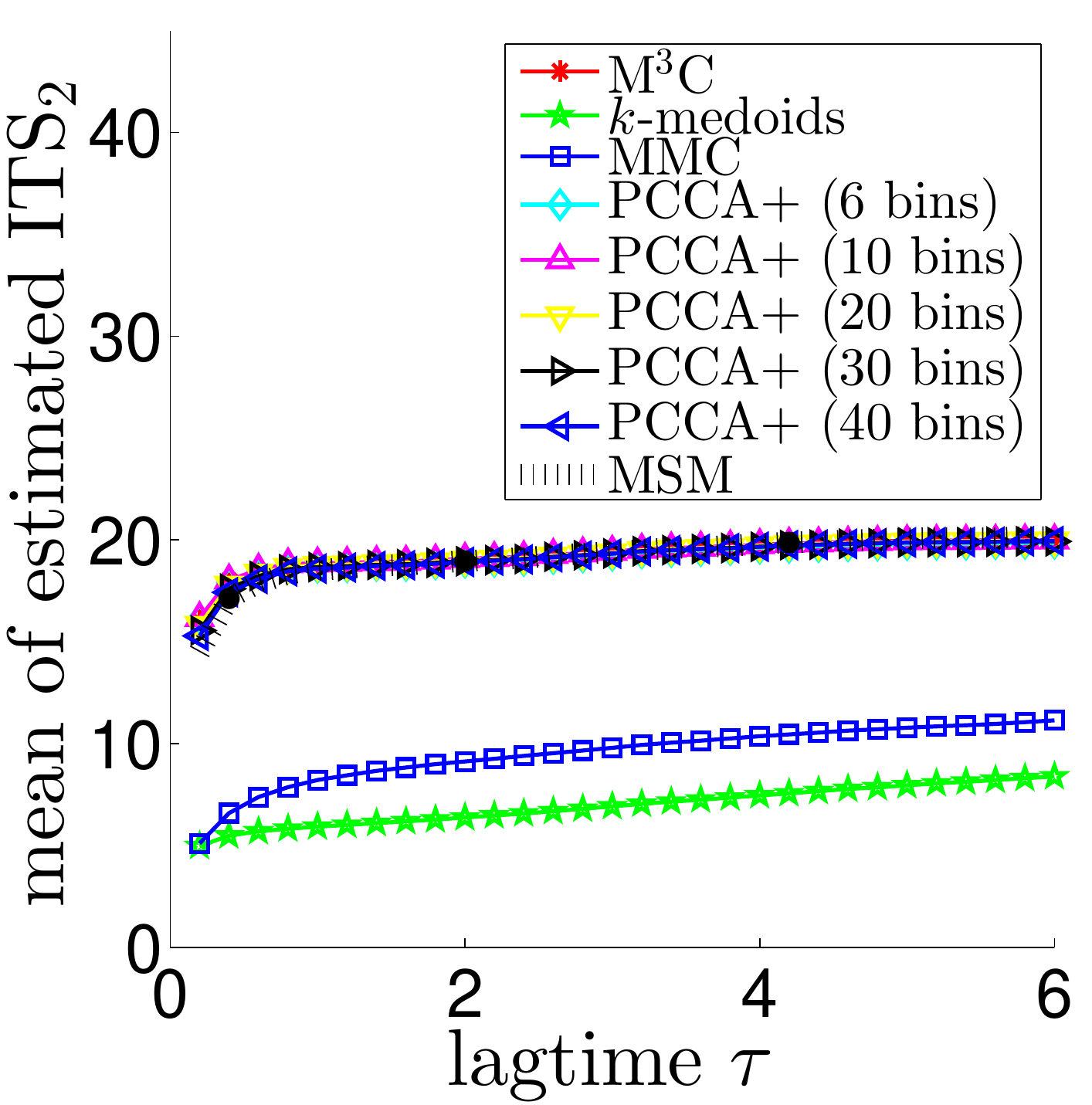}
\par\end{centering}

}\hfill{}\subfloat[]{\begin{centering}
\includegraphics[width=0.45\textwidth]{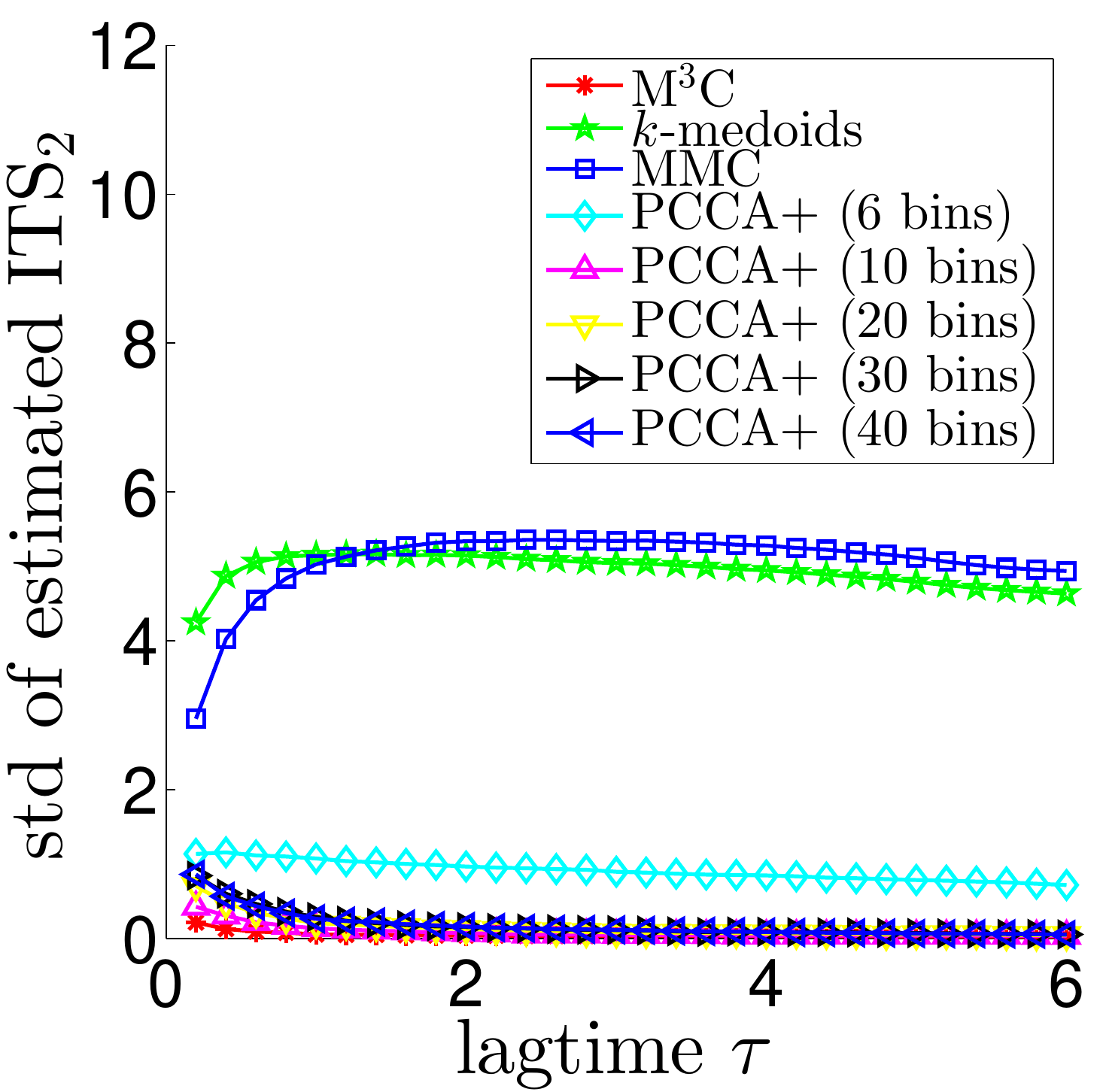}
\par\end{centering}

}

\subfloat[]{\begin{centering}
\includegraphics[width=0.45\textwidth]{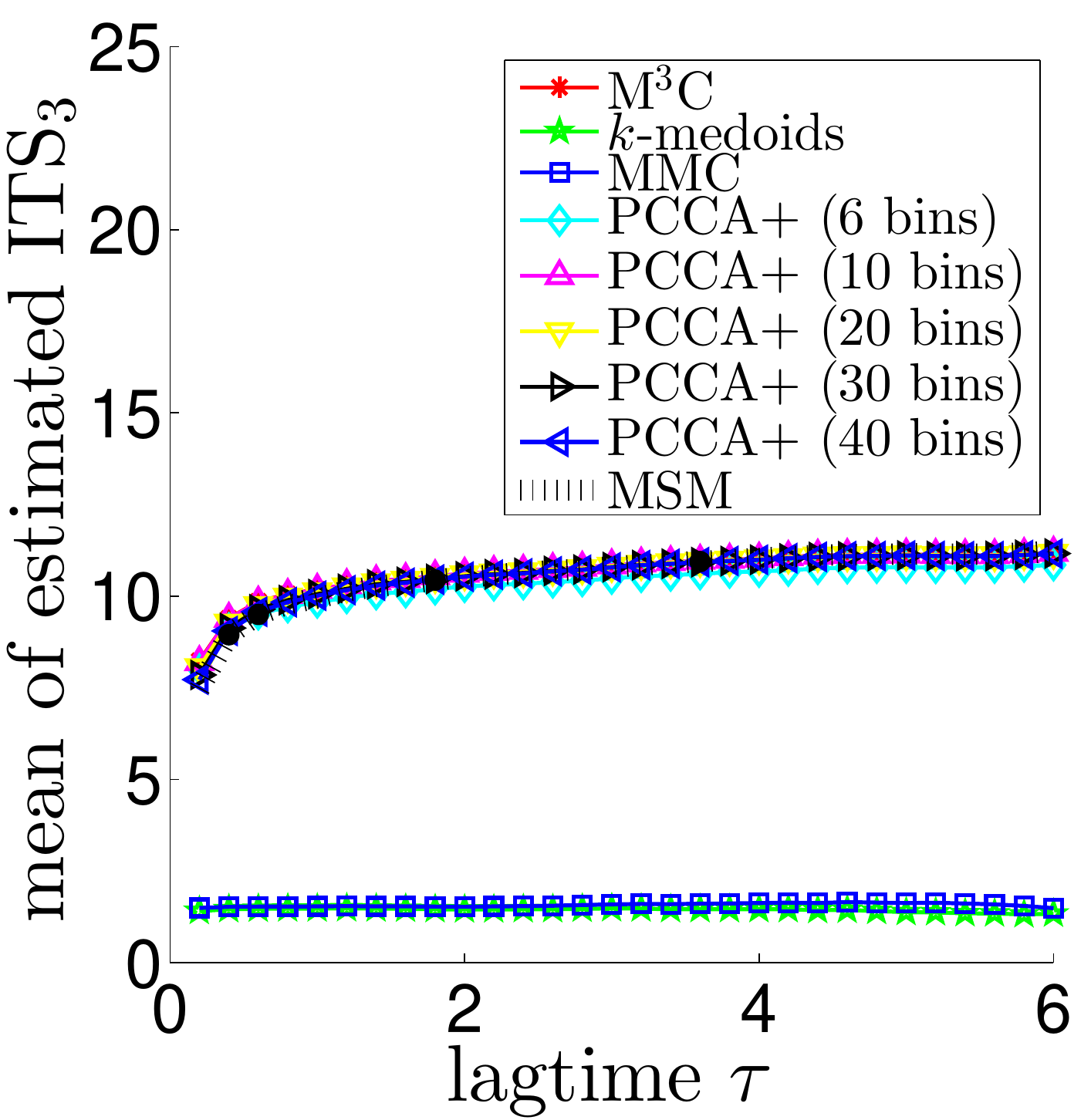}
\par\end{centering}

}\hfill{}\subfloat[]{\begin{centering}
\includegraphics[width=0.45\textwidth]{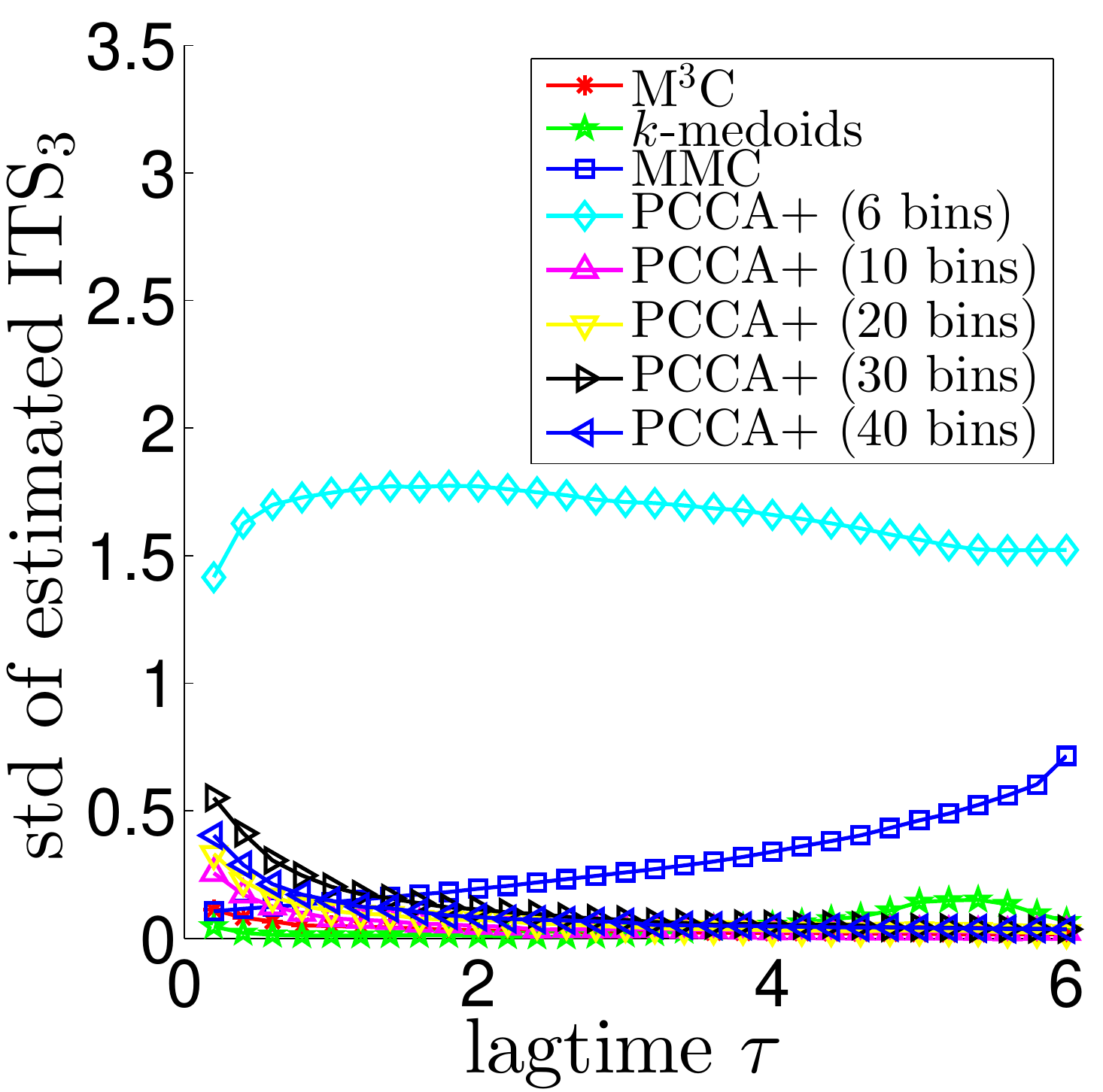}
\par\end{centering}

}

\protect\caption{Means and standard deviations of estimated implied timescales of Model
I obtained by different decomposition methods, where dotted lines
indicate estimates of the second and third relaxation timescales of
Model I computed by $50$-state Markov state models.\label{fig:its-Model-I}}
\end{figure}

\begin{figure}
\subfloat[]{\begin{centering}
\includegraphics[width=0.45\textwidth]{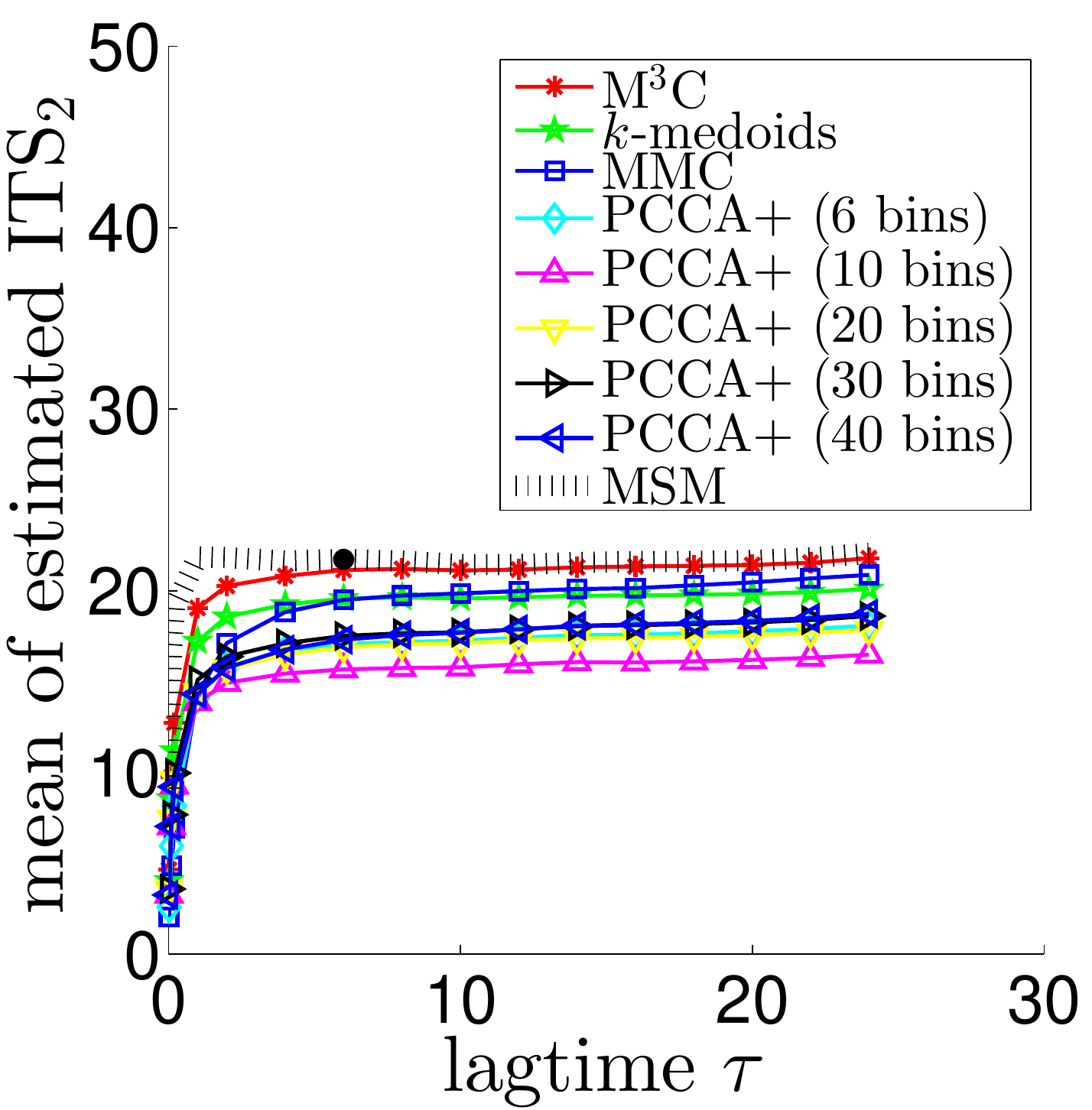}
\par\end{centering}

}\hfill{}\subfloat[]{\begin{centering}
\includegraphics[width=0.45\textwidth]{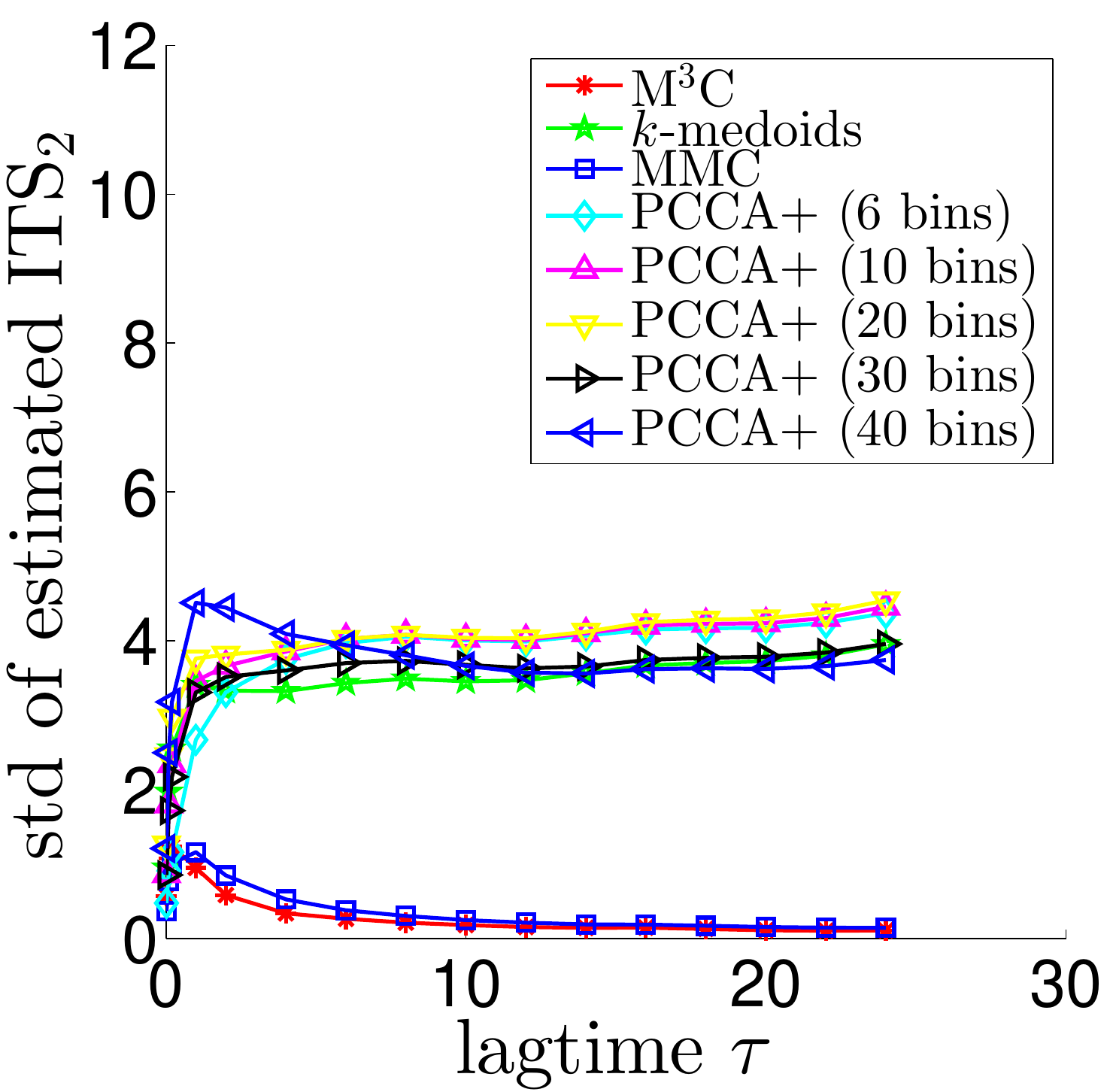}
\par\end{centering}

}

\subfloat[]{\begin{centering}
\includegraphics[width=0.45\textwidth]{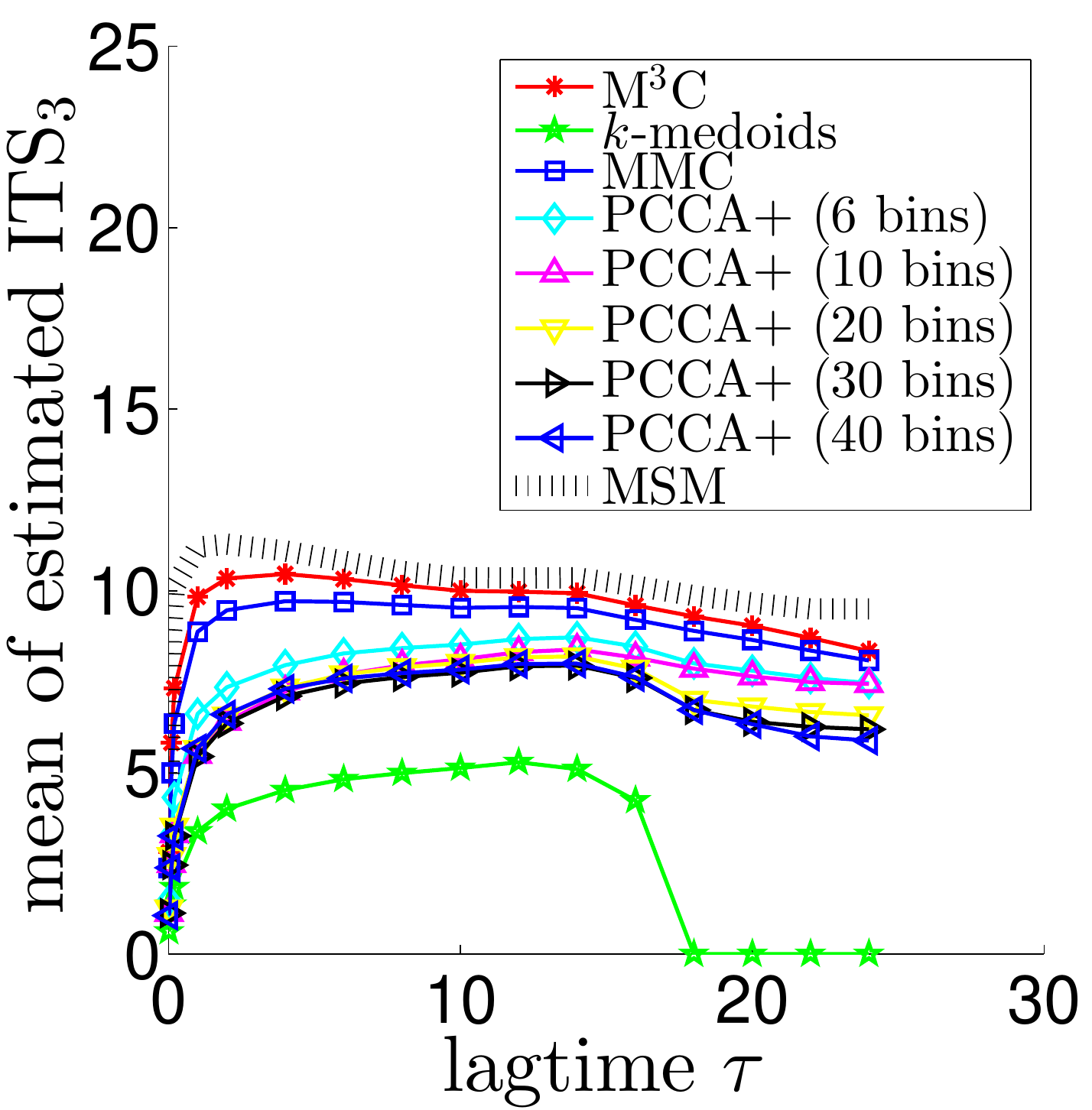}
\par\end{centering}

}\hfill{}\subfloat[]{\begin{centering}
\includegraphics[width=0.45\textwidth]{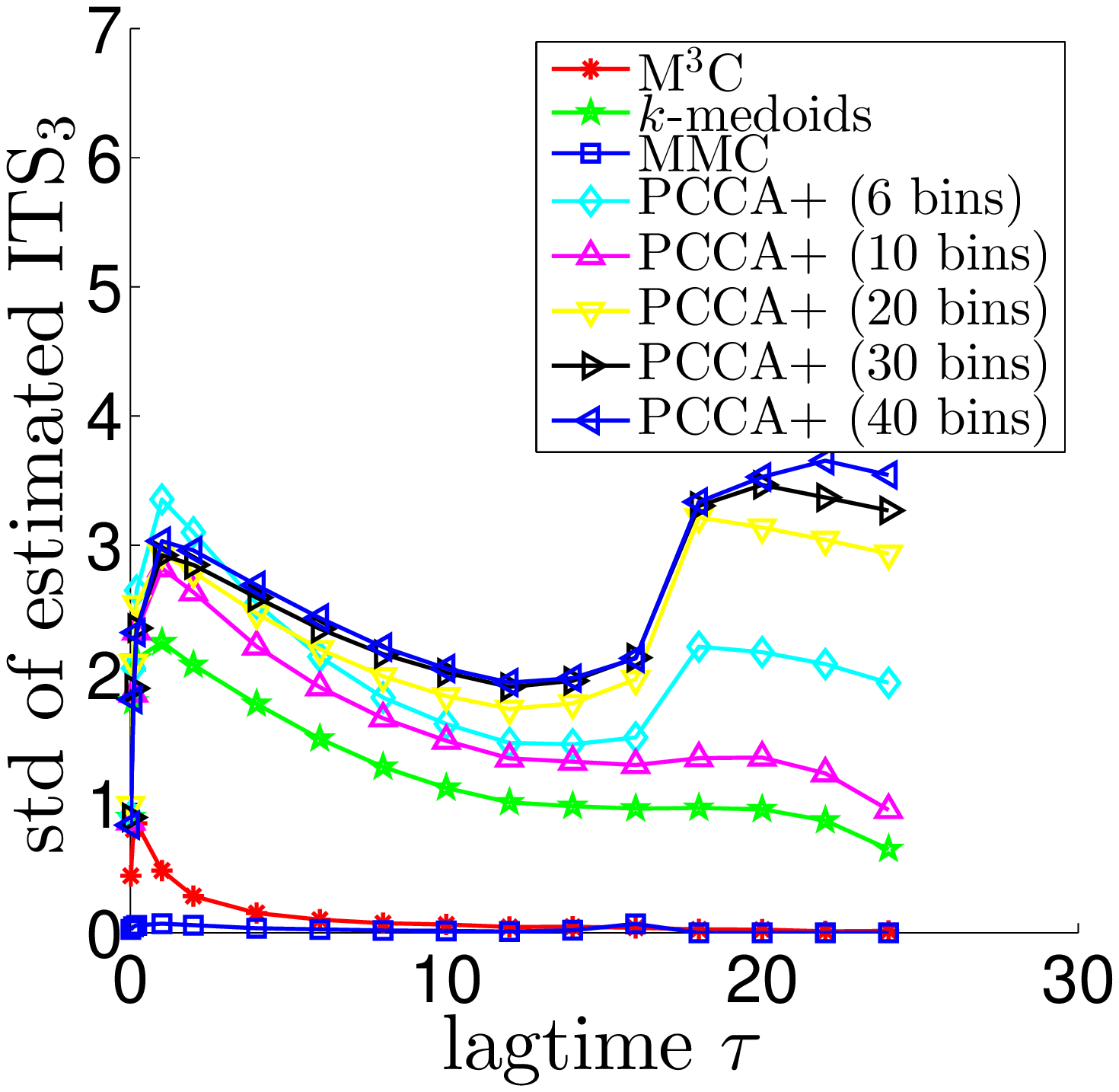}
\par\end{centering}

}

\protect\caption{Means and standard deviations of estimated implied timescales of Model
II obtained by different decomposition methods, where dotted lines
indicate estimates of the second and third relaxation timescales of
Model II computed by $50$-state Markov state models. (The implied
timescale value is set to be zero if the corresponding eigenvalue
of the transition probability matrix is zero or negative.)\label{fig:its-Model-II}}
\end{figure}

\subsection{Molecular dynamics simulations}

We consider in this section the metastable state decomposition problem
of molecular dynamics simulation models of alanine dipeptide and deca-alanine.
Alanine dipeptide (sequence acetyl-alanine-methylamide) is a small
molecule which consists of two alanine amino acid units. The structural
and dynamical properties of this molecule have been thoroughly studied,
and its conformation space (phase space) can be conveniently described
by two backbone dihedral angles $\varphi$ and $\psi$ (see Fig.~\ref{fig:alanine-dipeptide}).
Deca-alanine is a small peptide composed of $10$ alanine residues,
and its configuration can be described by $18$ backbone dihedral
angles. We perform twenty simulations of $200\mathrm{ns}$ of molecular
dynamics of alanine dipeptide with sample interval $20\mathrm{ps}$
and six $500\mathrm{ns}$ molecular dynamics simulations of deca-alanine
with sample interval $100\mathrm{ps}$ (The detailed simulation model
is given in \citep{Feliks2013variational}).

\begin{figure}
\begin{centering}
\includegraphics[width=0.45\textwidth]{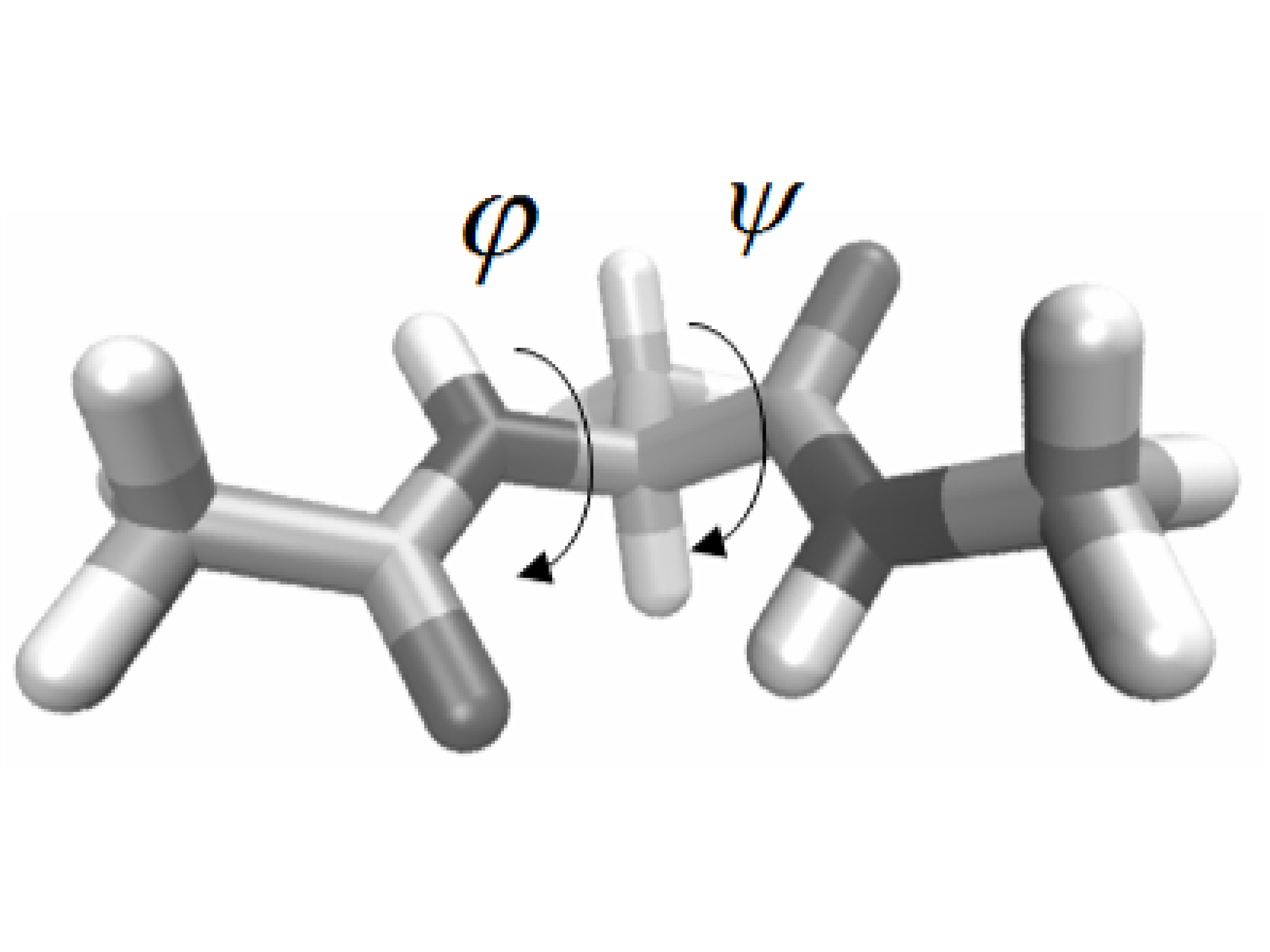}
\par\end{centering}

\protect\caption{Illustration of the structure of alanine dipeptide\label{fig:alanine-dipeptide}}
\end{figure}

The metastable state decomposition methods are applied to each simulation
trajectory ($\kappa=3$ for alanine dipeptide and $\kappa=2$ for
deca-alanine), and the corresponding $Q$ values and implied timescales
are calculated from all the other trajectories of the same molecule.
Moreover, considering the periodicity of angular data, we represent
the molecular state as a vector $\mathbf{x}$ consisting of sin/cos
of the dihedral angles in experiments.

Fig.~\ref{fig:Potential-function-ad} shows the potential function
of alanine dipeptide in the space of $(\varphi,\psi)$ and the three
metastable states which can be manually identified according to experience,
and data points sampled from one simulation trajectory are displayed
in Fig.~\ref{fig:simulation-ad}. The decomposition results of the
simulation trajectory are plotted in Fig.~\ref{fig:Decomposition-results-ad}.
As can be seen, $k$-medoids and MMC fail to indentify the metastable
structure of alanine dipeptide, and the decompositions obtained by
PCCA+ and M\textsuperscript{3}C are consistent with the manual decomposition.
Table \ref{tab:Q-md} and Fig.~\ref{fig:its-mdad} summarize $Q$
values and implied timescales given by decomposition results of simulation
trajectories of alanine dipeptide. It can be observed that PCCA+ and
M\textsuperscript{3}C performs significantly better than the geometric
clustering methods $k$-medoids and MMC. M\textsuperscript{3}C achieves
the similar average $Q$ values and implied timescales as PCCA+, but
the corresponding standard deviations of M\textsuperscript{3}C are
lower than that of PCCA+, which shows M\textsuperscript{3}C has more
stable performance for this molecular dynamics simulation model.

The decomposition results of deca-alanine are displayed in Table \ref{tab:Q-md}
and Fig.~\ref{fig:its-mddd}. The $Q$ values obtained by M\textsuperscript{3}C
are significantly smaller than that given by the other methods. In
contrast to alanine dipeptide, the kinetics of deca-alanine is much
more complicated and it is difficult to accurately estimate the relaxation
timescales. In \citep{Feliks2013variational}, a lower bound $6.5\mathrm{ns}$
for the second relaxation timescale is given. Moreover, according
to the variational principle \citep{noe2013variational}, the second
implied timescale obtained from a metastable state decomposition is
always smaller than the true one if there is no statistical noise.
So we can conclude from Fig.~\ref{fig:its-mddd} that M\textsuperscript{3}C
gives more accurate estimate of $\mathrm{ITS}_{2}$ than the other
methods for this molecular dynamics model.

\begin{figure}
\subfloat[Potential function estimated from simulation trajectories, where dashed
lines represent boundaries between $3$ metastable states which are
manually identified. \label{fig:Potential-function-ad}]{\begin{centering}
\includegraphics[width=0.45\textwidth]{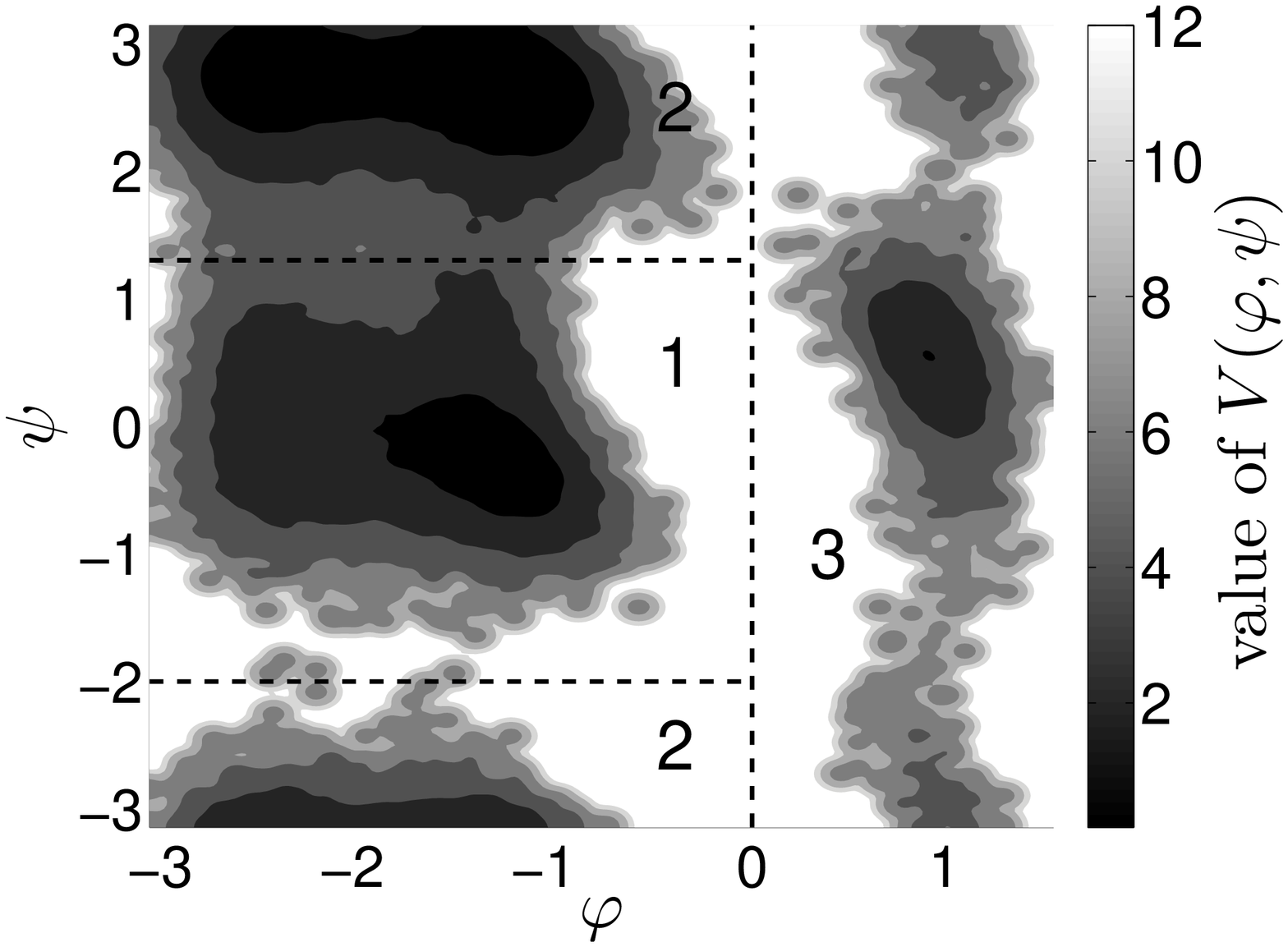}
\par\end{centering}

}\hfill{}\subfloat[Data points sampled from a simulation trajectory with sample interval
$\Delta t=20\mathrm{ps}$.\label{fig:simulation-ad}]{\begin{centering}
\includegraphics[width=0.45\textwidth]{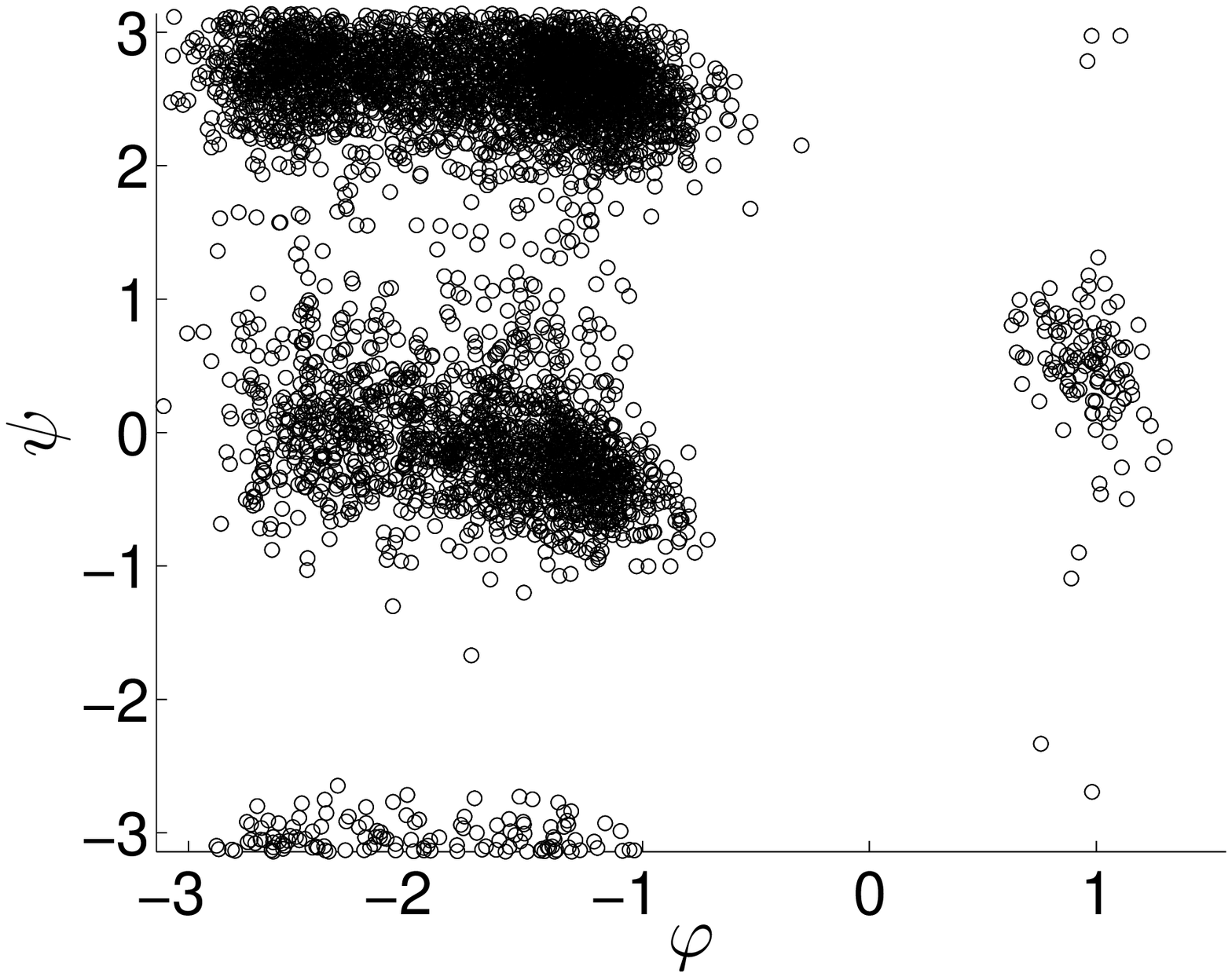}
\par\end{centering}

}

\protect\caption{Illustration of the molecular dynamics simulation of alanine dipetide\label{fig:Illustration-of-ad}}
\end{figure}

\begin{figure}
\subfloat[\MMMC]{\begin{centering}
\includegraphics[width=0.45\textwidth]{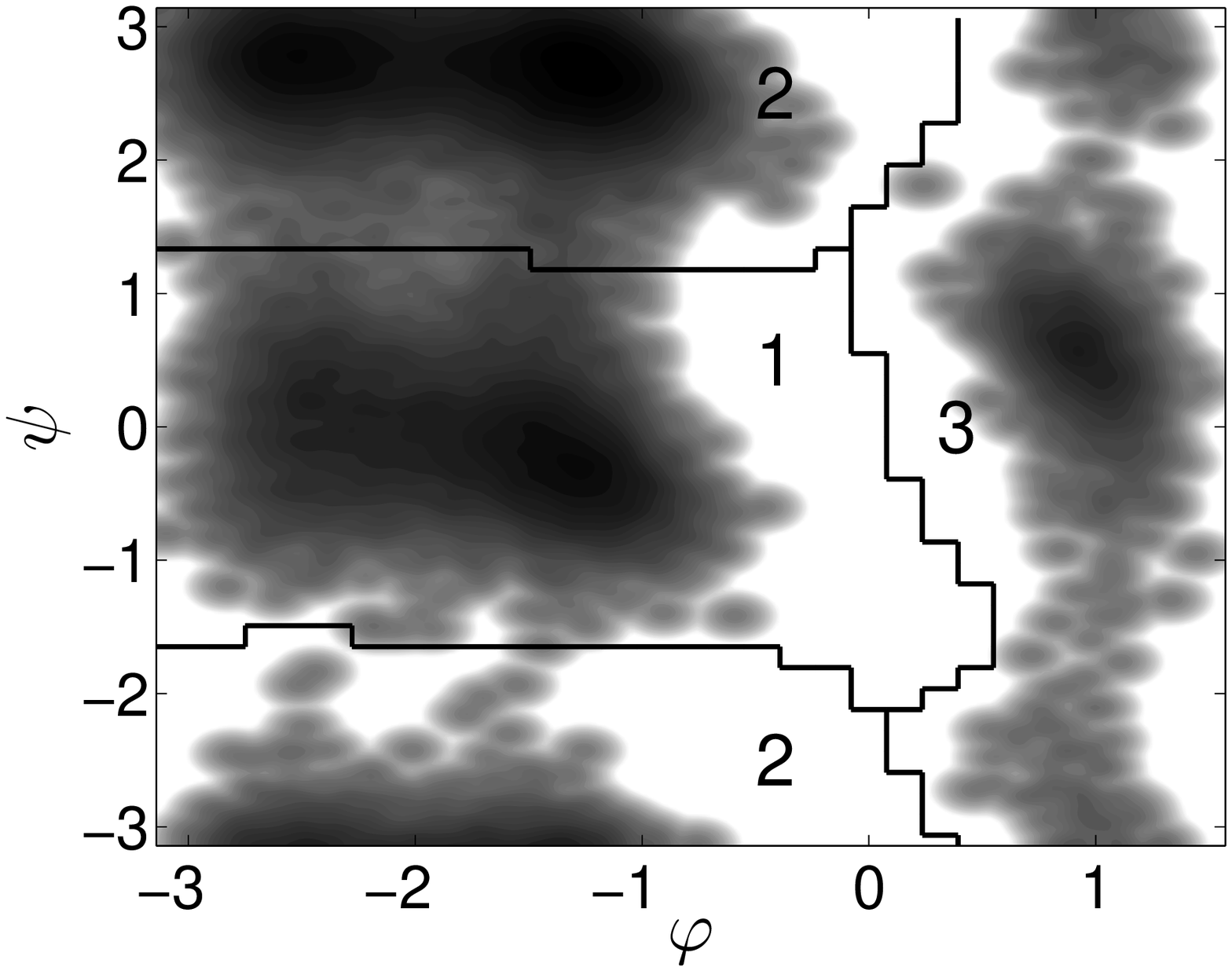}
\par\end{centering}

}\hfill{}\subfloat[$k$-medoids]{\begin{centering}
\includegraphics[width=0.45\textwidth]{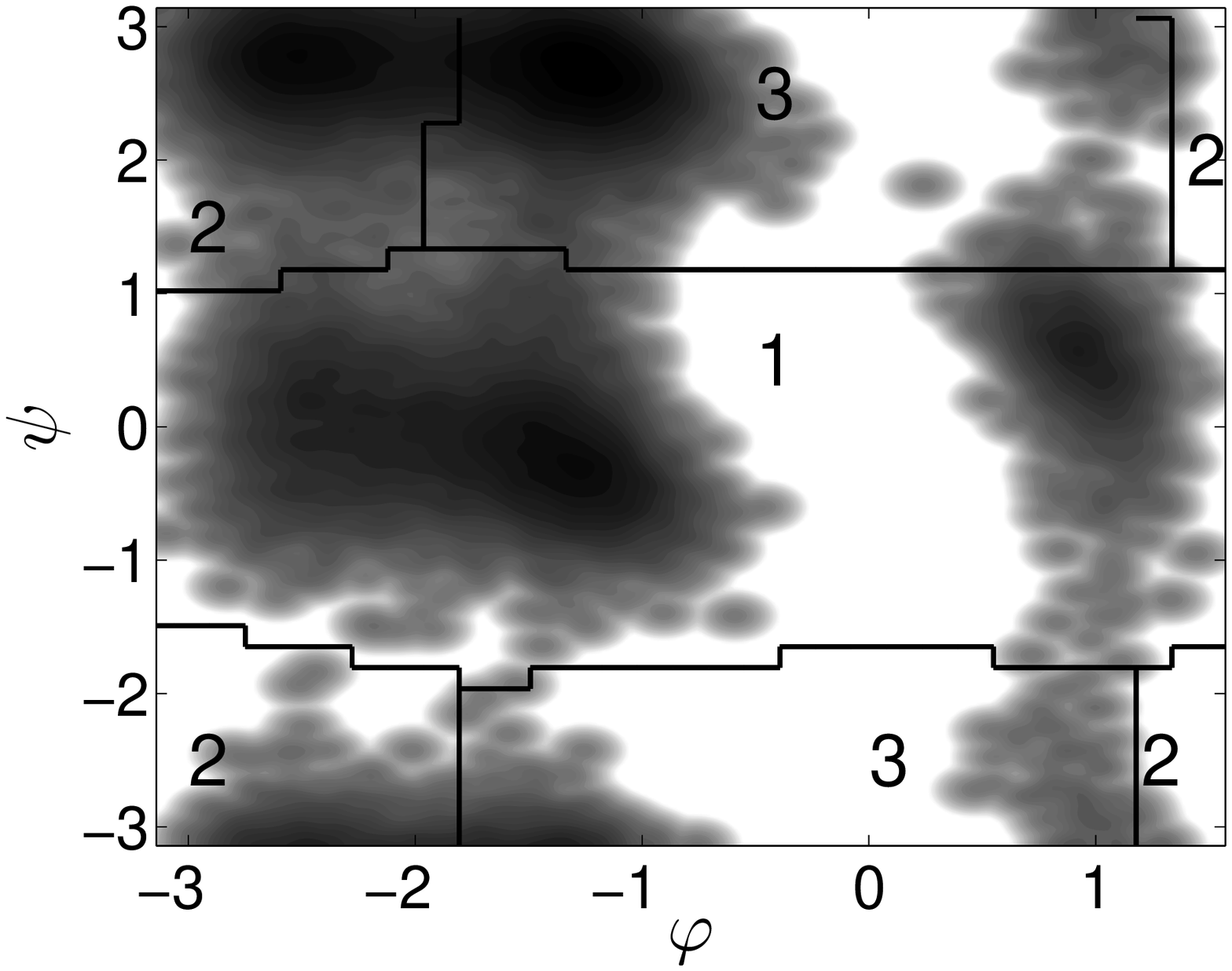}
\par\end{centering}

}

\subfloat[MMC]{\begin{centering}
\includegraphics[width=0.45\textwidth]{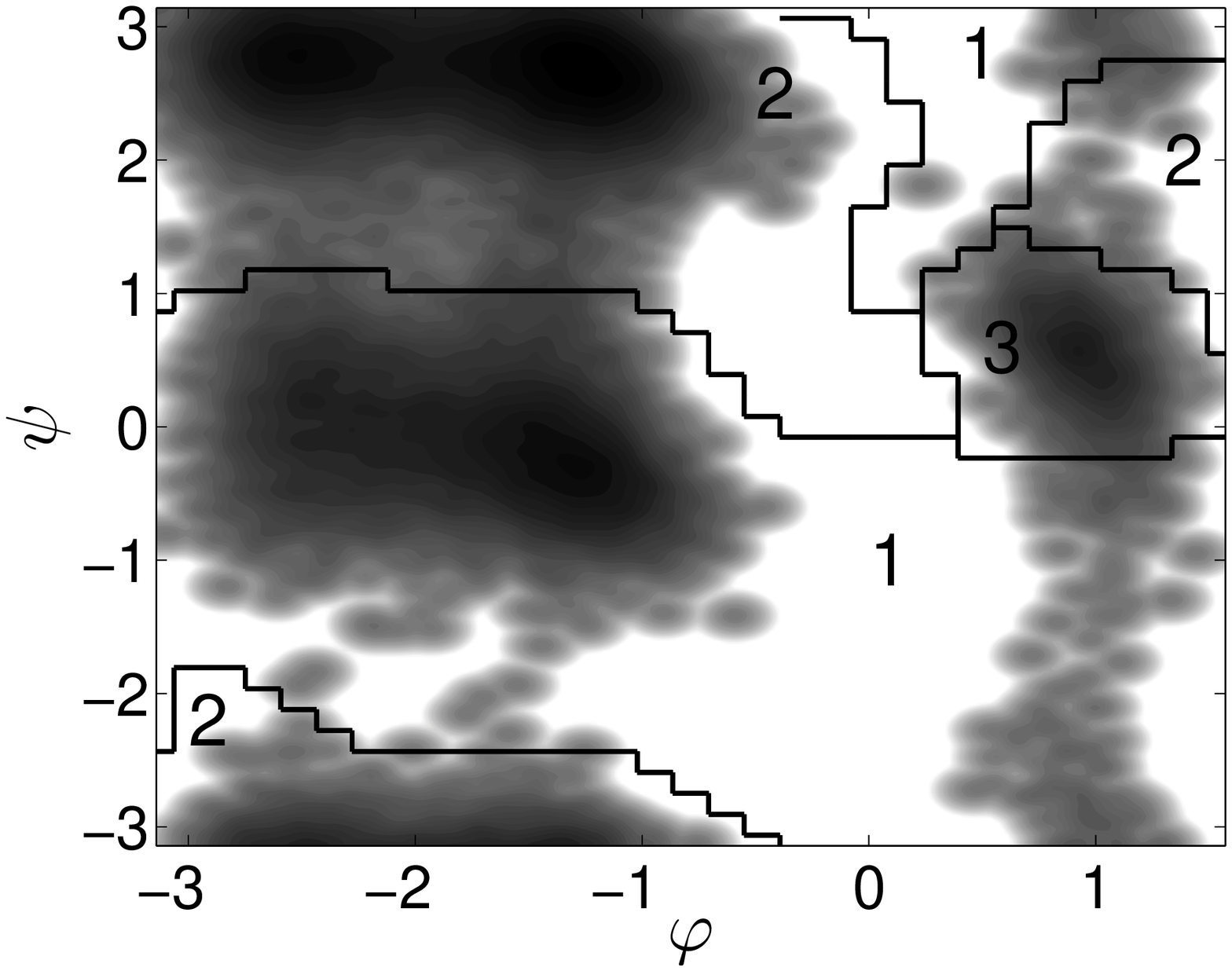}
\par\end{centering}

}\hfill{}\subfloat[PCCA+ (with $200$ bins)]{\begin{centering}
\includegraphics[width=0.45\textwidth]{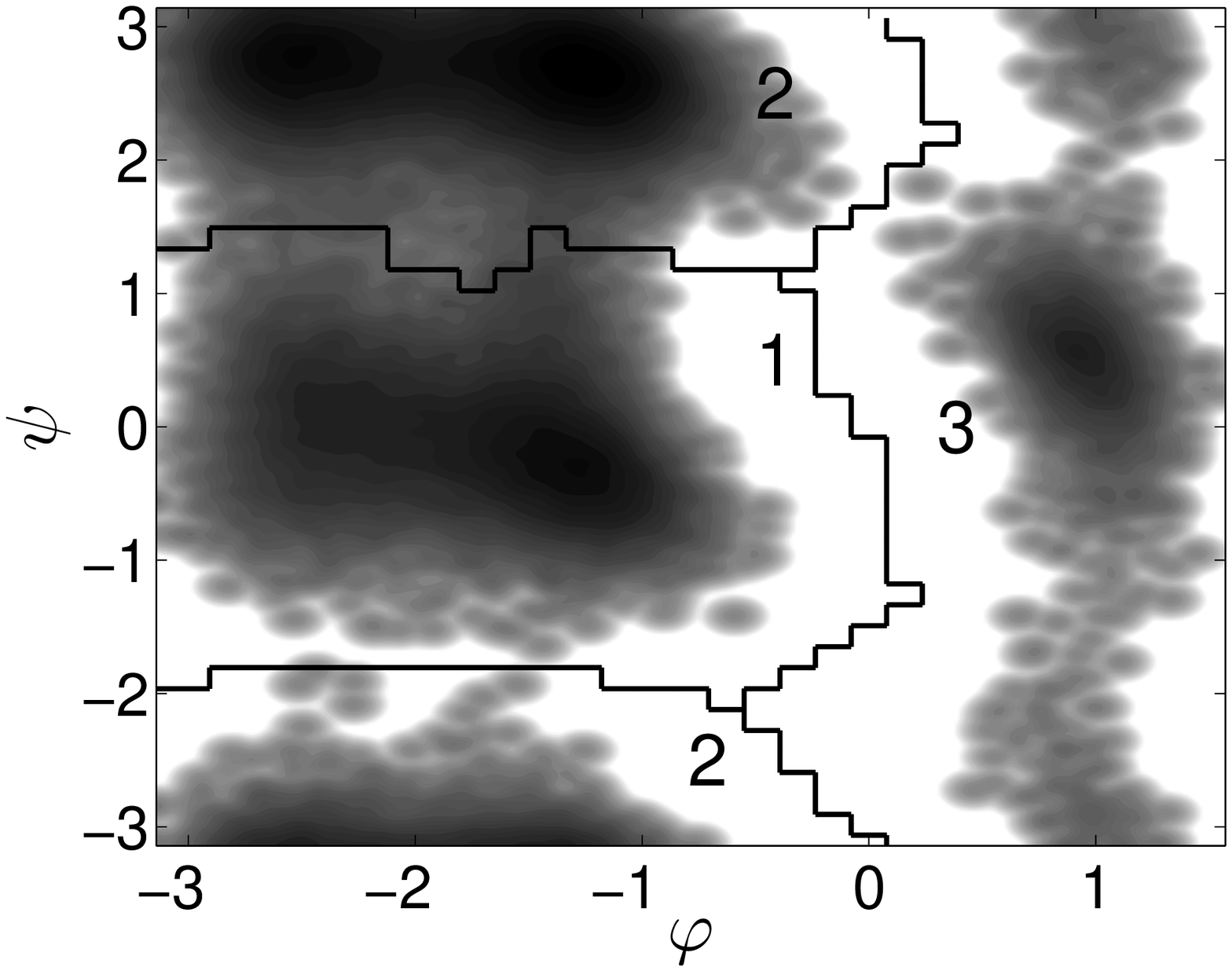}
\par\end{centering}

}

\protect\caption{Decomposition results of the simulation model of alanine dipetide,
where black lines represent boundaries between macrostates. The boundaries
are computed by the finite element method with mesh size $0.05\pi\times0.05\pi$.\label{fig:Decomposition-results-ad}}
\end{figure}

\begin{table}
\protect\caption{Means and standard deviations of $Q$ values calculated over $10$
independent experiments of alanine dipeptide and $6$ independent
experiments of deca-alanine\label{tab:Q-md}}

\resizebox{\textwidth}{!}{

\begin{tabular}{|c|c|c|c|c|c|c|c|c|}
\hline 
\multicolumn{1}{|c|}{} & $k$-medoids & MMC & $\begin{array}{c}
\text{PCCA+}\\
\text{(100 bins)}
\end{array}$ & $\begin{array}{c}
\text{PCCA+}\\
\text{(200 bins)}
\end{array}$ & $\begin{array}{c}
\text{PCCA+}\\
\text{(300 bins)}
\end{array}$ & $\begin{array}{c}
\text{PCCA+}\\
\text{(400 bins)}
\end{array}$ & $\begin{array}{c}
\text{PCCA+}\\
\text{(500 bins)}
\end{array}$ & \multicolumn{1}{c|}{\MMMC}\tabularnewline
\hline 
\hline 
alanine dipeptide & $1.7655\pm0.0027$ & $2.1521\pm0.5060$ & $2.7360\pm0.0077$ & $2.7381\pm0.0018$ & $2.7367\pm0.0059$ & $2.7340\pm0.0037$ & $2.7328\pm0.0052$ & $\mathbf{2.7397\pm0.0003}$\tabularnewline
\hline 
deca-alanine & $1.8113\pm0.0502$ & $1.8819\pm0.0454$ & $1.9033\pm0.0292$ & $1.8904\pm0.0402$ & $1.8894\pm0.0815$ & $1.9227\pm0.0238$ & $1.8896\pm0.0764$ & $\mathbf{1.9592\pm0.0038}$\tabularnewline
\hline 
\end{tabular}

}
\end{table}

\begin{figure}
\subfloat[]{\begin{centering}
\includegraphics[width=0.45\textwidth]{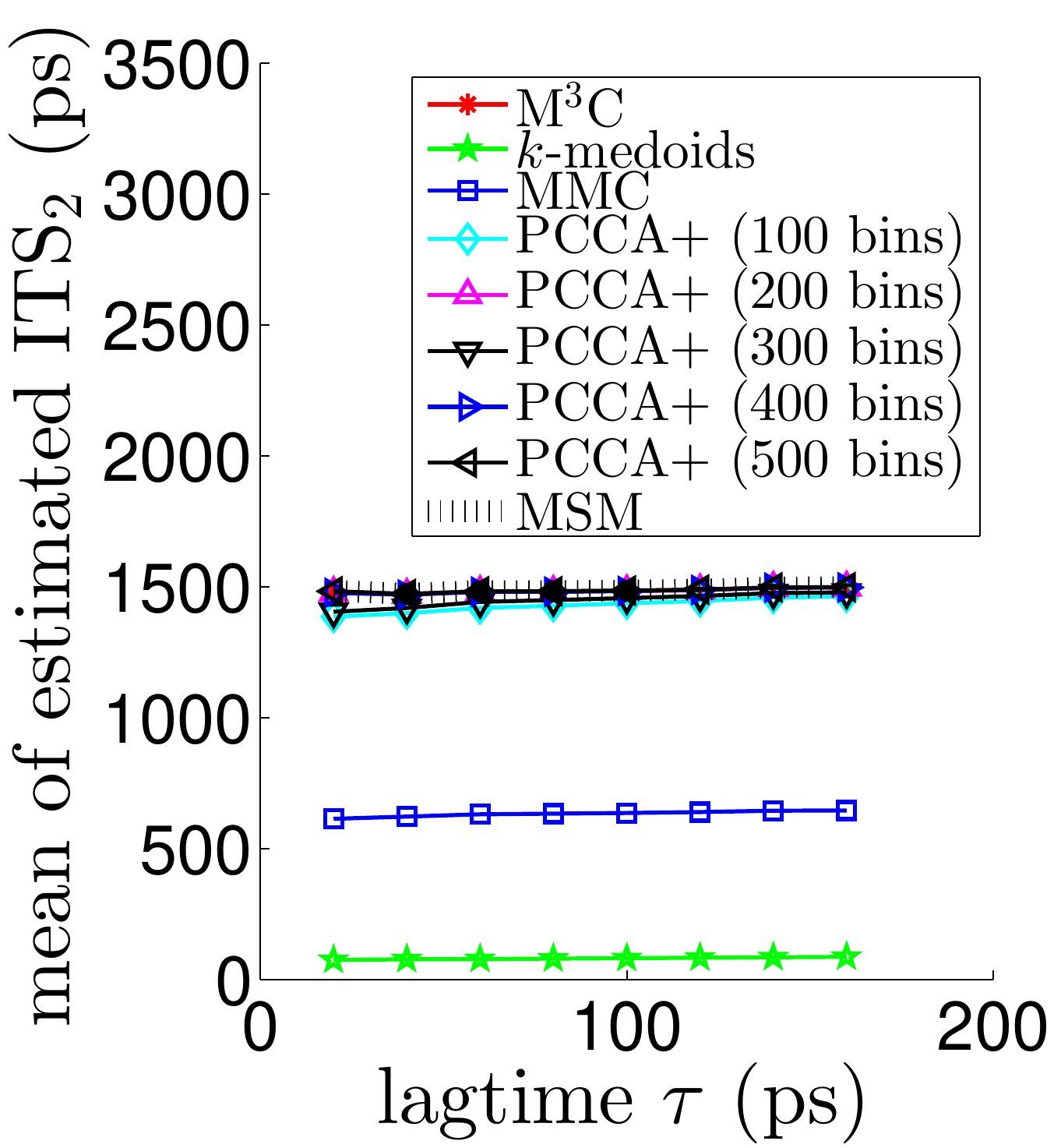}
\par\end{centering}

}\hfill{}\subfloat[]{\begin{centering}
\includegraphics[width=0.45\textwidth]{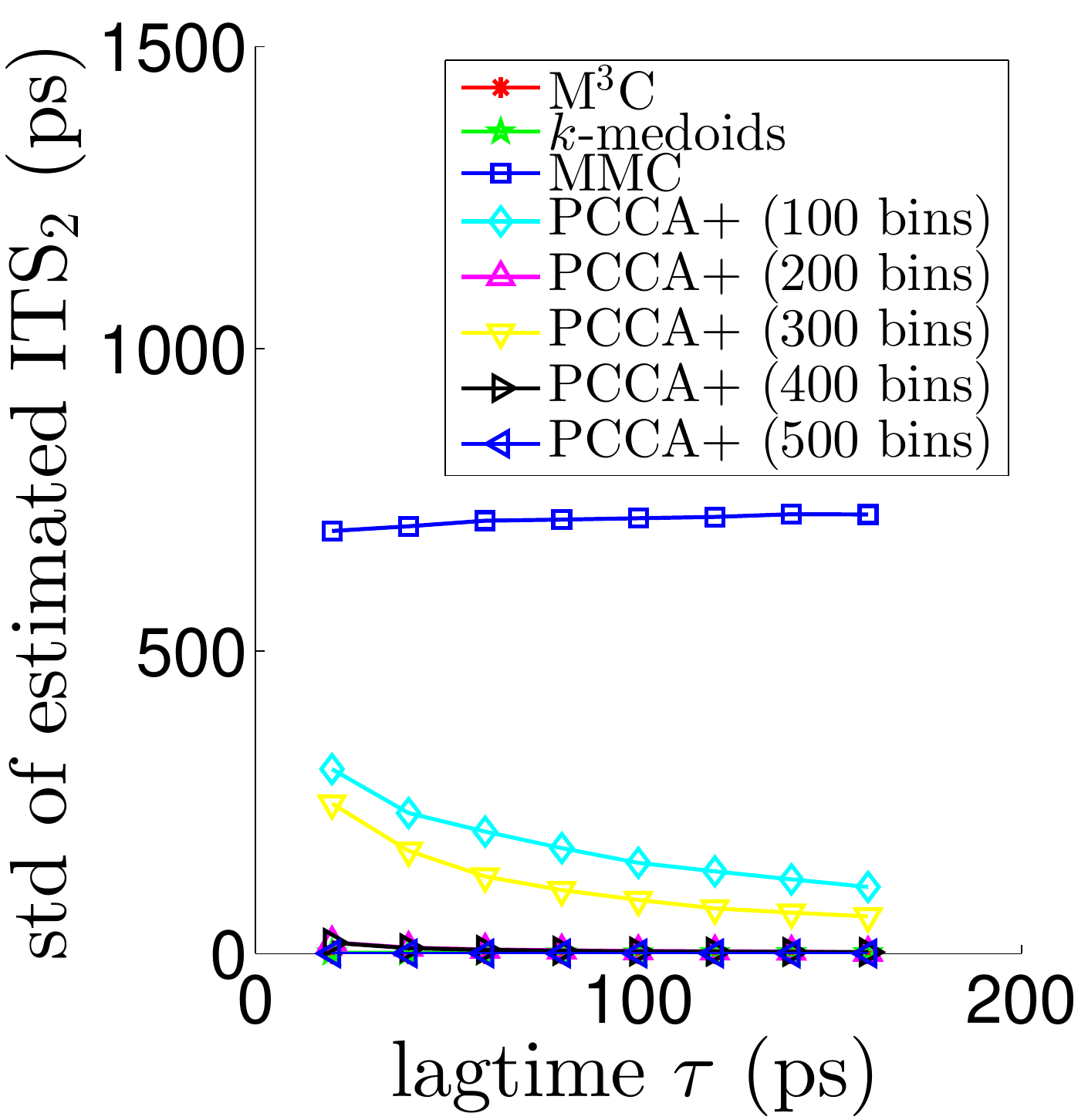}
\par\end{centering}

}

\subfloat[]{\begin{centering}
\includegraphics[width=0.45\textwidth]{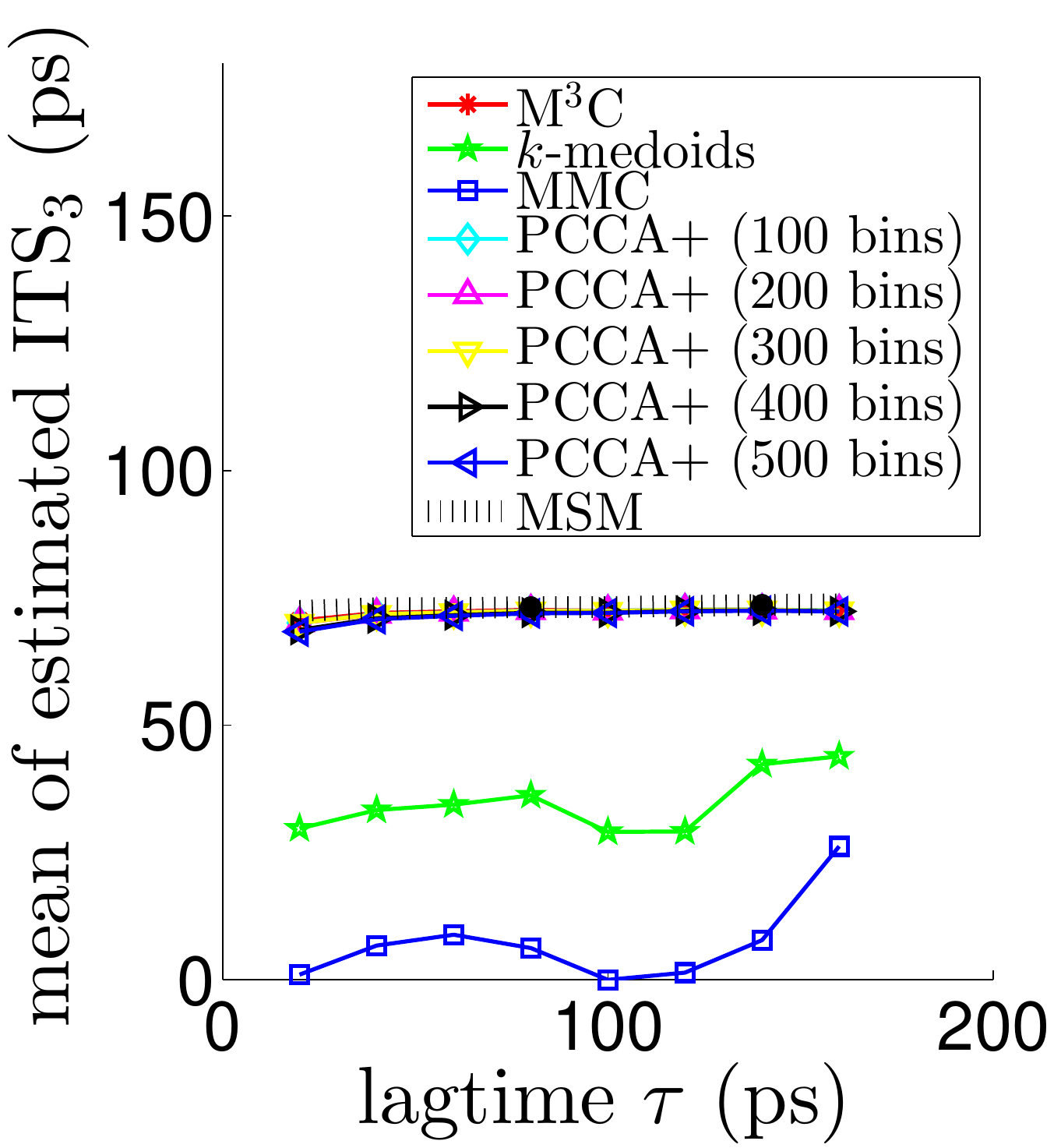}
\par\end{centering}

}\hfill{}\subfloat[]{\begin{centering}
\includegraphics[width=0.45\textwidth]{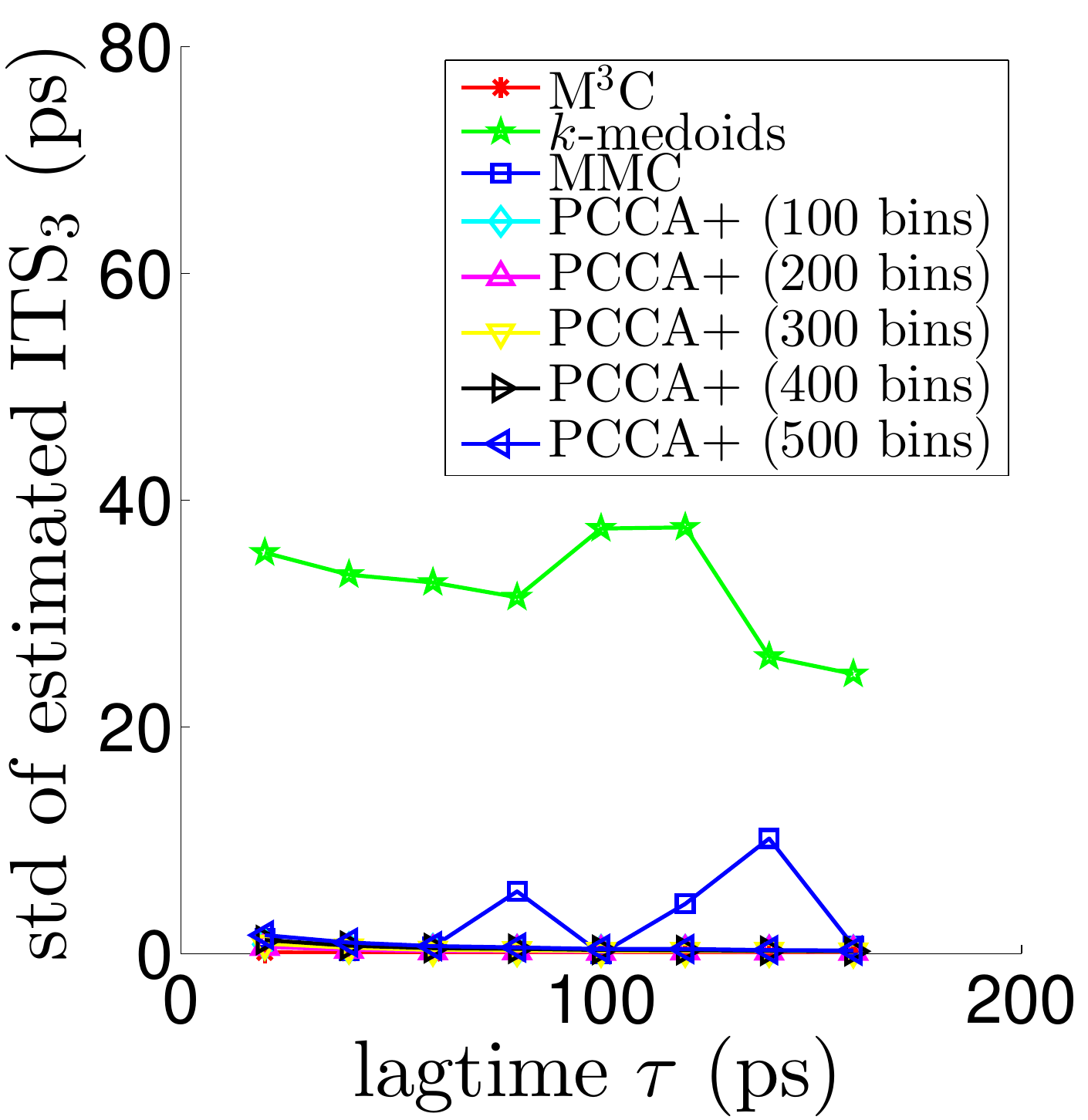}
\par\end{centering}

}

\protect\caption{Means and standard deviations of estimated implied timescales of alanine
dipeptide obtained by different decomposition methods, where dotted
lines indicate estimates of the second and third relaxation timescales
of alanine dipeptide computed by $500$-state Markov state models.\label{fig:its-mdad}}
\end{figure}

\begin{figure}
\subfloat[Model I]{\begin{centering}
\includegraphics[width=0.45\textwidth]{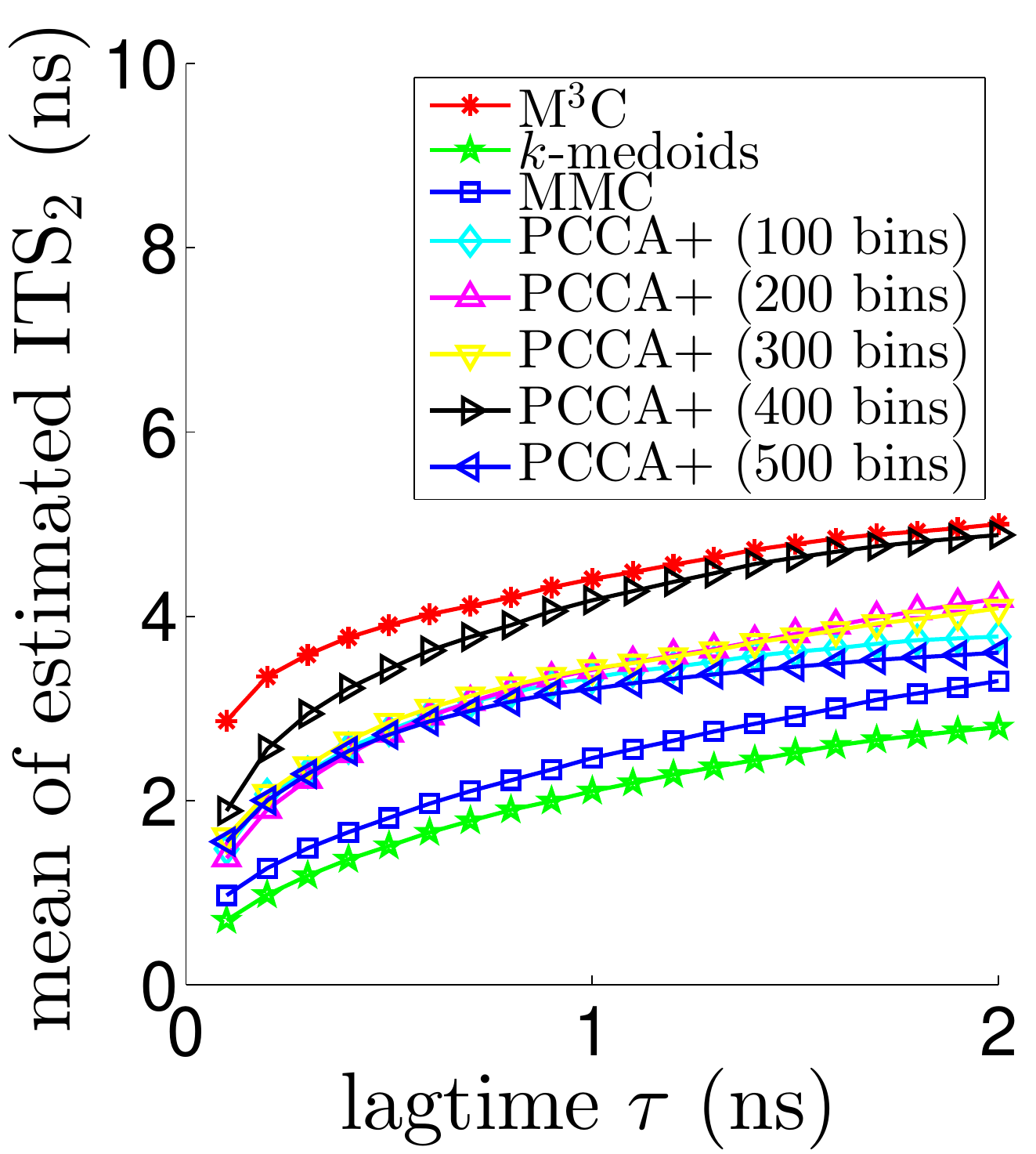}
\par\end{centering}

}\hfill{}\subfloat[Model II]{\begin{centering}
\includegraphics[width=0.45\textwidth]{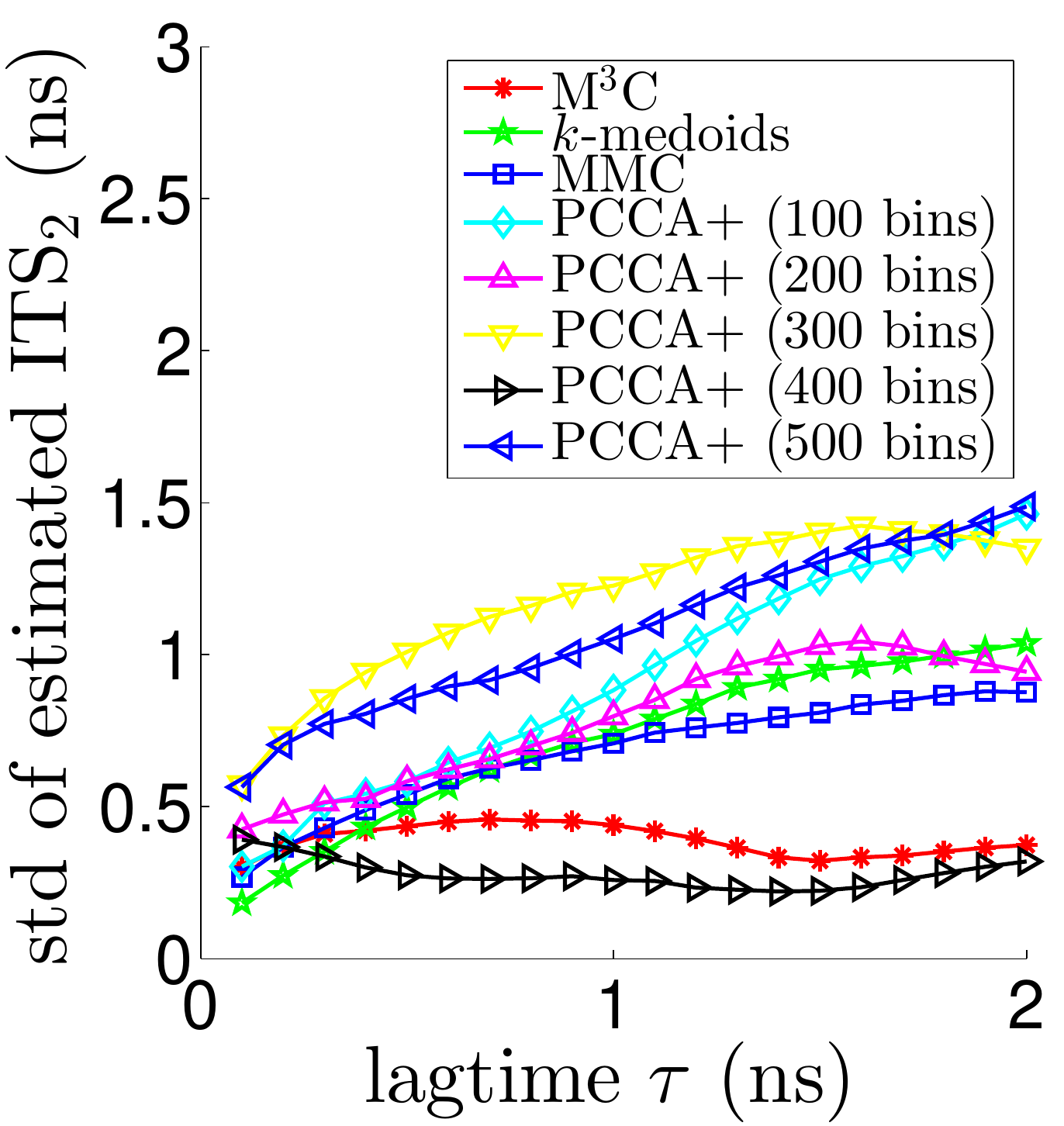}
\par\end{centering}

}

\protect\caption{Means and standard deviations of estimated implied timescales of deca-alanine
obtained by different decomposition methods.\label{fig:its-mddd}}
\end{figure}

\section{Conclusion}

Large margin methods have turned out to be an effective and robust
approach for supervised and unsupervised learning problems. In this
paper, we apply the large margin principle to the metastable state
decomposition problem, and propose a \emph{maximum margin metastable
clustering (M\textsuperscript{3}C) }method to identify metastable
states of complex stochastic systems. The key step is to design a
large margin metastable constraint \eqref{eq:metastable-largin-margin-constraint}
by combining the metastability criterion and large margin criterion,
where we assign a class label to each transition pair in trajectories
instead of a single data point. Then the error of metastable state
decomposition can be expressed by the misclassification loss function
in large margin learning, and a lot of well developed computational
techniques for large margin learning, such as kernel based feature
mapping and convex relaxation, can be utilized. Moreover, we present
a hybrid optimization algorithm which mixes global search and local
search strategies to solve M\textsuperscript{3}C problems with large-scale
data sets. In contrast to previous metastable state decomposition
methods including geometric clustering methods and kinetic clustering
methods, the M\textsuperscript{3}C method can effectively utilize
both the geometric information and the dynamical information provided
by trajectories without pre-discretization in space, and our experimental
analysis reveal that the M\textsuperscript{3}C method yields more
accurate and robust decomposition results than traditional geometric
and kinetics clustering methods in most cases.

The major drawback of M\textsuperscript{3}C is that the computing
burden will be heavy for very large data set, because it need to iteratively
solve an SVM-like problem. In the future, we will use some modern
SVM techniques such as Pegasos\citep{shalev2011pegasos} and core
vector machine\citep{tsang2005core} to improve the efficiency of
M\textsuperscript{3}C. Moreover, we will investigate how to extend
our method to the problem of slow process decomposition of metastable
systems \citep{noe2013variational,Guillermo2013identification} by
incorporating the distance metric learning technique \citep{DBLP:journals/corr/BelletHS13}
so that it can not only detect metastable states but also extract
dominant dynamical features from simulation and experimental data.

\section*{Acknowledgments}

This work is supported by the Deutsche Forschungsgemeinschaft (DFG)
under grant Number WU 744/1-1, and the author would like to thank
Feliks N\"uske (FU Berlin) for providing the molecular dynamics simulation
data.

\appendix

\section{Proof of equivalence between \eqref{eq:metastable-largin-margin-constraint}
and \eqref{eq:large-margin-constraint-pair}\label{sec:Proof-of-equivalence-margin-constraint}}

Suppose first that \eqref{eq:large-margin-constraint-pair} holds.
Substituting $(\bar{k},\ubar k)=(k,y_{n})$ and $(\bar{k},\ubar k)=(y_{n},k)$
into \eqref{eq:large-margin-constraint-pair}, we can get
\begin{equation}
\begin{array}{l}
\left(\mathbf{w}_{y_{n}y_{n}}^{\intercal}-\mathbf{w}_{\bar{k}\ubar k}^{\intercal}\right)\bm{\phi}\left(\bar{\mathbf{x}}_{n},\ubar{\mathbf{x}}_{n}\right)+\left(b_{y_{n}y_{n}}-b_{\bar{k}\ubar k}\right)+1_{y_{n}=\bar{k}=\ubar k}\\
\begin{array}{rl}
= & \left(\mathbf{w}_{y_{n}}^{\intercal}-\mathbf{w}_{k}^{\intercal}\right)\bm{\phi}\left(\bar{\mathbf{x}}_{n}\right)+\left(b_{y_{n}}-b_{k}\right)+1_{y_{n}=k}\\
\ge & 1
\end{array}
\end{array}
\end{equation}
and
\begin{equation}
\begin{array}{l}
\left(\mathbf{w}_{y_{n}y_{n}}^{\intercal}-\mathbf{w}_{\bar{k}\ubar k}^{\intercal}\right)\bm{\phi}\left(\bar{\mathbf{x}}_{n},\ubar{\mathbf{x}}_{n}\right)+\left(b_{y_{n}y_{n}}-b_{\bar{k}\ubar k}\right)+1_{y_{n}=\bar{k}=\ubar k}\\
\begin{array}{rl}
= & \left(\mathbf{w}_{y_{n}}^{\intercal}-\mathbf{w}_{k}^{\intercal}\right)\bm{\phi}\left(\ubar{\mathbf{x}}_{n}\right)+\left(b_{y_{n}}-b_{k}\right)+1_{y_{n}=k}\\
\ge & 1
\end{array}
\end{array}
\end{equation}
So \eqref{eq:large-margin-constraint-pair} is a sufficient condition
for \eqref{eq:metastable-largin-margin-constraint}.

We now show the necessity of \eqref{eq:large-margin-constraint-pair}
for \eqref{eq:metastable-largin-margin-constraint}. Substituting
$k=\bar{k}$ and $k=\ubar k$ into the two inequalities in \eqref{eq:metastable-largin-margin-constraint},
respectively, yields
\begin{equation}
\left(\mathbf{w}_{y_{n}y_{n}}^{\intercal}-\mathbf{w}_{\bar{k}\ubar k}^{\intercal}\right)\bm{\phi}\left(\bar{\mathbf{x}}_{n},\ubar{\mathbf{x}}_{n}\right)+\left(b_{y_{n}y_{n}}-b_{\bar{k}\ubar k}\right)+\left(1_{y_{n}=\bar{k}}+1_{y_{n}=\ubar k}-1\right)\ge1
\end{equation}
Note that
\begin{equation}
1_{y_{n}=\bar{k}}+1_{y_{n}=\ubar k}-1=\left\{ \begin{array}{ll}
1, & y_{n}=\bar{k}=\ubar k\\
0, & y_{n}\in\{\bar{k},\ubar k\},\bar{k}\neq\ubar k\\
-1, & \mathrm{otherwise}
\end{array}\right.
\end{equation}
and
\begin{equation}
1_{y_{n}=\bar{k}=\ubar k}=\left\{ \begin{array}{ll}
1, & y_{n}=\bar{k}=\ubar k\\
0, & \mathrm{otherwise}
\end{array}\right.
\end{equation}
Therefore,
\begin{equation}
\begin{array}{l}
\left(\mathbf{w}_{y_{n}y_{n}}^{\intercal}-\mathbf{w}_{\bar{k}\ubar k}^{\intercal}\right)\bm{\phi}\left(\bar{\mathbf{x}}_{n},\ubar{\mathbf{x}}_{n}\right)+\left(b_{y_{n}y_{n}}-b_{\bar{k}\ubar k}\right)+1_{y_{n}=\bar{k}=\ubar k}\\
\begin{array}{rl}
\ge & \left(\mathbf{w}_{y_{n}y_{n}}^{\intercal}-\mathbf{w}_{\bar{k}\ubar k}^{\intercal}\right)\bm{\phi}\left(\bar{\mathbf{x}}_{n},\ubar{\mathbf{x}}_{n}\right)+\left(b_{y_{n}y_{n}}-b_{\bar{k}\ubar k}\right)+\left(1_{y_{n}=\bar{k}}+1_{y_{n}=\ubar k}-1\right)\\
\ge & 1
\end{array}
\end{array}
\end{equation}

From the above, we can conclude that \eqref{eq:large-margin-constraint-pair}
is equivalent to \eqref{eq:metastable-largin-margin-constraint}.

\section{Proof of Theorem \ref{thm:cg-m3c-equivalence}\label{sec:Proof-of-Theorem-cg-m3c-equivalence}}

Let us start with the case that all labels $\mathbf{y}^{c}$ in \eqref{eq:m3c-coarse-grained}
are given. In this case, the coarse-grained M\textsuperscript{3}C
problem \eqref{eq:m3c-coarse-grained} is reduced to a simple quadratic
programming problem: 
\begin{equation}
\begin{array}{cl}
\min\limits _{\mathbf{W},\bm{\xi}^{c}} & \frac{1}{2}\beta\left\Vert \mathbf{W}\right\Vert ^{2}+\mathbf{c}^{\intercal}\bm{\xi}^{c}\\
\mathrm{s.t.} & \forall n=1,\ldots,N^{c},\quad\forall\bar{k},\ubar k=1,\ldots,\kappa,\\
 & \left(\mathbf{w}_{y_{n}^{c}y_{n}^{c}}^{\intercal}-\mathbf{w}_{\bar{k}\ubar k}^{\intercal}\right)\bm{\phi}\left(\bar{\mathbf{x}}_{n}^{c},\ubar{\mathbf{x}}_{n}\right)+1_{y_{n}^{c}=\bar{k}=\ubar k}\ge1-\xi_{n}^{c},
\end{array}\label{eq:m3c-coarse-grained-supervised}
\end{equation}
For the sake of convenience, here we let $\mathbf{e}_{k}$ denote
a $\kappa$ dimensional vector with only the $k$-th element being
$1$ and others $0$, and define a matrix $\mathbf{B}\in\mathbb{R}^{2\kappa\times\kappa^{2}}$
as
\begin{equation}
\mathbf{B}=\left[\begin{array}{c}
\bar{\mathbf{B}}\\
\ubar{\mathbf{B}}
\end{array}\right]=\left[\begin{array}{cccccccc}
\mathbf{e}_{1} & \mathbf{e}_{2} & \cdots & \mathbf{e}_{\kappa} & \mathbf{e}_{1} & \mathbf{e}_{1} & \cdots & \mathbf{e}_{\kappa}\\
\mathbf{e}_{1} & \mathbf{e}_{2} & \cdots & \mathbf{e}_{\kappa} & \mathbf{e}_{2} & \mathbf{e}_{3} & \cdots & \mathbf{e}_{\kappa-1}
\end{array}\right]\label{eq:B-definition}
\end{equation}
i.e., each column of $\mathbf{B}$ is an element of $\{(\mathbf{e}_{i}^{\intercal},\mathbf{e}_{j}^{\intercal})^{\intercal}|i,j\in[1,\kappa]\}$
and the first $\kappa$ columns of $\mathbf{B}$ is equal to $\left[\begin{array}{cc}
\mathbf{I} & \mathbf{I}\end{array}\right]^{\intercal}$, where $\bar{\mathbf{B}},\ubar{\mathbf{B}}\in\mathbb{R}^{\kappa\times\kappa^{2}}$
are two submatrices consisting of the first $\kappa$ and the last
$\kappa$ rows of $\mathbf{B}$.

By introducing a dual variable $\bm{\Lambda}\in\mathbb{R}^{N^{c}\times\kappa^{2}}$
and using \eqref{eq:MD-definition} and \eqref{eq:B-definition},
the Lagrangian of \eqref{eq:m3c-coarse-grained-supervised} can be
written as
\begin{eqnarray}
\mathcal{L} & = & \frac{1}{2}\beta\left\Vert \mathbf{W}\right\Vert ^{2}+\mathbf{c}^{\intercal}\bm{\xi}^{c}-\mathrm{tr}\left(\mathbf{D}^{\intercal}\mathrm{diag}\left(\bm{\Lambda}\mathbf{1}\right)\left(\bar{\mathbf{X}}+\ubar{\mathbf{X}}\right)\mathbf{W}^{\intercal}\right)\nonumber \\
 &  & -\mathrm{tr}\left(\left[\begin{array}{cc}
\bm{\Lambda}^{\intercal}\mathbf{D} & \mathbf{0}\mathbf{0}^{\intercal}\end{array}\right]\right)+\mathrm{tr}\left(\bm{\Lambda}^{\intercal}\left(\bar{\mathbf{X}}\mathbf{W}^{\intercal}\bar{\mathbf{B}}+\ubar{\mathbf{X}}\mathbf{W}^{\intercal}\ubar{\mathbf{B}}\right)\right)\nonumber \\
 &  & +\left(\mathbf{1}-\bm{\xi}^{c}\right)^{\intercal}\bm{\Lambda}\mathbf{1}\label{eq:Lagrangian}
\end{eqnarray}
where $\bar{\mathbf{X}}=\left(\bm{\phi}(\bar{\mathbf{x}}_{1}^{c}),\ldots,\bm{\phi}(\bar{\mathbf{x}}_{N^{c}}^{c})\right)^{\intercal}\in\mathbb{R}^{N^{c}\times d}$
and $\ubar{\mathbf{X}}=\left(\bm{\phi}(\ubar{\mathbf{x}}_{1}^{c}),\ldots,\bm{\phi}(\ubar{\mathbf{x}}_{N^{c}}^{c})\right)^{\intercal}\in\mathbb{R}^{N^{c}\times d}$.
Setting the derivatives of the Lagrangian \eqref{eq:Lagrangian} with
respect to $\mathbf{W}$ and $\bm{\xi}^{c}$ to zero, and adding the
constraint $\bm{\Lambda}\ge0$, we obtain the following dual problem
of \eqref{eq:m3c-coarse-grained-supervised}:

\begin{equation}
\begin{array}{cl}
\max\limits _{\bm{\Lambda}} & -\frac{1}{2\beta}\mathrm{tr}\left(\left[\begin{array}{cc}
\bar{\mathbf{B}}\bm{\Lambda}^{\intercal} & \ubar{\mathbf{B}}\bm{\Lambda}^{\intercal}\end{array}\right]\mathbf{K}\left[\begin{array}{cc}
\bar{\mathbf{B}}\bm{\Lambda}^{\intercal} & \ubar{\mathbf{B}}\bm{\Lambda}^{\intercal}\end{array}\right]^{\intercal}\right)\\
 & +\frac{1}{\beta}\mathrm{tr}\left(\mathbf{D}^{\intercal}\mathbf{C}\left[\begin{array}{cc}
\mathbf{I} & \mathbf{I}\end{array}\right]\mathbf{K}\left[\begin{array}{cc}
\bar{\mathbf{B}}\bm{\Lambda}^{\intercal} & \ubar{\mathbf{B}}\bm{\Lambda}^{\intercal}\end{array}\right]^{\intercal}\right)\\
 & -\mathrm{tr}\left(\left[\begin{array}{cc}
\bm{\Lambda}^{\intercal}\mathbf{D} & \mathbf{0}\mathbf{0}^{\intercal}\end{array}\right]\right)-\frac{1}{2\beta}\mathrm{tr}\left(\mathbf{M}\mathbf{C}\mathbf{K}^{s}\mathbf{C}\right)+\mathbf{1}^{\intercal}\mathbf{c}\\
\mathrm{s.t.} & \bm{\Lambda}\mathbf{1}=\mathbf{1},\\
 & \bm{\Lambda}\ge0.
\end{array}\label{eq:m3c-coarse-grained-supervised-dual}
\end{equation}
where
\begin{equation}
\mathbf{K}=\left[\begin{array}{cc}
\mathbf{K}^{(1,1)} & \mathbf{K}^{(1,2)}\\
\mathbf{K}^{(2,1)} & \mathbf{K}^{(2,2)}
\end{array}\right]
\end{equation}
\begin{equation}
\mathbf{K}^{s}=\mathbf{K}^{(1,1)}+\mathbf{K}^{(1,2)}+\mathbf{K}^{(2,1)}+\mathbf{K}^{(2,2)}\label{eq:Ks}
\end{equation}
and
\begin{equation}
\mathbf{K}^{(1,1)}=\bar{\mathbf{X}}^{\intercal}\bar{\mathbf{X}},\mathbf{K}^{(1,2)}=\bar{\mathbf{X}}^{\intercal}\ubar{\mathbf{X}},\mathbf{K}^{(1,1)}=\ubar{\mathbf{X}}^{\intercal}\bar{\mathbf{X}},\mathbf{K}^{(1,2)}=\ubar{\mathbf{X}}^{\intercal}\ubar{\mathbf{X}}
\end{equation}
Putting \eqref{eq:m3c-coarse-grained-supervised-dual} into the form
of a standard quadratic programming problem, we have
\begin{equation}
\begin{array}{cl}
\max\limits _{\bm{\Lambda}} & -\frac{1}{2\beta}\mathrm{vec}\left(\bm{\Lambda}\right)^{\intercal}\mathbf{P}\mathrm{vec}\left(\bm{\Lambda}\right)+\mathbf{q}^{\intercal}\mathrm{vec}\left(\bm{\Lambda}\right)-\frac{1}{2\beta}\mathrm{tr}\left(\mathbf{M}\mathbf{C}\mathbf{K}^{s}\mathbf{C}\right)+\mathbf{1}^{\intercal}\mathbf{c}\\
\mathrm{s.t.} & \left(\mathbf{1}^{\intercal}\otimes\mathbf{I}\right)\mathrm{vec}\left(\bm{\Lambda}\right)=\mathbf{1},\\
 & \mathrm{vec}\left(\bm{\Lambda}\right)\ge0.
\end{array}\label{eq:m3c-coarse-grained-supervised-dual-qp}
\end{equation}
where
\begin{eqnarray}
\mathbf{P} & = & \left(\bar{\mathbf{B}}^{\intercal}\bar{\mathbf{B}}\right)\otimes\mathbf{K}^{(1,1)}+\left(\bar{\mathbf{B}}^{\intercal}\ubar{\mathbf{B}}\right)\otimes\mathbf{K}^{(1,2)}\nonumber \\
 &  & +\left(\ubar{\mathbf{B}}^{\intercal}\bar{\mathbf{B}}\right)\otimes\mathbf{K}^{(2,1)}+\left(\ubar{\mathbf{B}}^{\intercal}\ubar{\mathbf{B}}\right)\otimes\mathbf{K}^{(2,2)}
\end{eqnarray}
and
\begin{eqnarray}
\mathbf{q} & = & \mathrm{vec}\bigg(\frac{1}{\beta}\left(\mathbf{K}^{(1,1)}+\mathbf{K}^{(2,1)}\right)^{\intercal}\mathbf{C}^{\intercal}\mathbf{D}\bar{\mathbf{B}}\nonumber \\
 &  & +\frac{1}{\beta}\left(\mathbf{K}^{(1,2)}+\mathbf{K}^{(2,2)}\right)^{\intercal}\mathbf{C}^{\intercal}\mathbf{D}\ubar{\mathbf{B}}-\left[\begin{array}{cc}
\mathbf{D} & \mathbf{0}\mathbf{0}^{\intercal}\end{array}\right]\bigg)\label{eq:q}
\end{eqnarray}
(The definition of $\mathrm{vec}\left(\cdot\right)$ is given in the
list of notation.) According to Lemma \ref{lem:qp} and the strong
duality theorem \citep{Boyd2004convex}, we can conclude that \eqref{eq:m3c-coarse-grained-supervised}
has the same minimum with the following optimization problem:
\begin{equation}
\begin{array}{cl}
\min\limits _{\bm{\alpha},\mathbf{v},\bm{\theta}} & \frac{1}{2}\bm{\theta}^{\intercal}\bm{\theta}-\mathbf{c}^{\intercal}\bm{\alpha}-\frac{1}{2\beta}\mathrm{tr}\left(\mathbf{M}\mathbf{C}\mathbf{K}^{s}\mathbf{C}\right)+\mathbf{1}^{\intercal}\mathbf{c}\\
\mathrm{s.t.} & \mathbf{q}+\left(\mathbf{1}\otimes\mathbf{I}\right)\bm{\alpha}+\mathbf{v}+\mathbf{R}\bm{\theta}=0,\\
 & \mathbf{v}\ge0.
\end{array}\label{eq:m3c-coarse-grained-supervised-dual-dual}
\end{equation}
where $\mathbf{R}$ is a full column rank matrix satisfying
\begin{equation}
\frac{1}{\beta}\mathbf{P}=\mathbf{R}\mathbf{R}^{\intercal}\label{eq:R}
\end{equation}
(Note that $\mathbf{R}$ may not be a square matrix if $\mathbf{P}$
is not full rank.)

Combining the equivalence between \eqref{eq:m3c-coarse-grained-supervised}
and \eqref{eq:m3c-coarse-grained-supervised-dual-dual} and the proposition
mentioned in Remark \ref{rem:M-D}, the theorem is proved.
\begin{lem}
\label{lem:qp}For a quadratic programming problem defined by
\begin{equation}
\begin{array}{cl}
\min\limits _{\mathbf{x}} & \frac{1}{2}\mathbf{x}^{\intercal}\mathbf{A}\mathbf{x}+\mathbf{b}^{\intercal}\mathbf{x}\\
\mathrm{s.t.} & \mathbf{E}\mathbf{x}=\mathbf{c},\\
 & \mathbf{x}\ge0.
\end{array}\label{eq:qp}
\end{equation}
if $\mathbf{A}$ can be decomposed as $\mathbf{A}=\mathbf{R}\mathbf{R}^{\intercal}$
with $\mathbf{R}$ being full column rank and there is an optimal
solution to \eqref{eq:qp}, then the minimum of \eqref{eq:qp} is
equal to the maximum of the following problem:
\begin{equation}
\begin{array}{cl}
\max\limits _{\bm{\alpha},\mathbf{v},\bm{\theta}} & -\frac{1}{2}\bm{\theta}^{\intercal}\bm{\theta}+\mathbf{c}^{\intercal}\bm{\alpha}\\
\mathrm{s.t.} & \mathbf{b}^{\intercal}-\bm{\alpha}^{\intercal}\mathbf{E}-\mathbf{v}^{\intercal}=\bm{\theta}^{\intercal}\mathbf{R}^{\intercal},\\
 & \mathbf{v}\ge0.
\end{array}\label{eq:qp-dual}
\end{equation}
\end{lem}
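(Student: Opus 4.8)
The plan is to recognize \eqref{eq:qp-dual} as the Lagrangian dual of a lifted version of \eqref{eq:qp}, to establish weak duality by completing the square, and then to produce a dual point attaining the primal value via the KKT conditions, which are available here because the constraints of \eqref{eq:qp} are all affine. First I would reformulate \eqref{eq:qp} by introducing the auxiliary vector $\mathbf{u}=\mathbf{R}^{\intercal}\mathbf{x}$: since $\mathbf{A}=\mathbf{R}\mathbf{R}^{\intercal}$, the objective becomes $\tfrac12\mathbf{u}^{\intercal}\mathbf{u}+\mathbf{b}^{\intercal}\mathbf{x}$ once the affine coupling constraint $\mathbf{R}^{\intercal}\mathbf{x}=\mathbf{u}$ is imposed, so the lifted problem has the same optimal value as \eqref{eq:qp}. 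Attaching the multiplier $\bm\alpha$ to $\mathbf{E}\mathbf{x}=\mathbf{c}$, $\mathbf{v}\ge0$ to $\mathbf{x}\ge0$, and $\bm\theta$ to the coupling constraint, the Lagrangian separates: minimizing over the free variable $\mathbf{u}$ gives $\mathbf{u}=-\bm\theta$ and the term $-\tfrac12\bm\theta^{\intercal}\bm\theta$, while minimizing over the now-unconstrained $\mathbf{x}$ forces the coefficient row of $\mathbf{x}$ to vanish, which is exactly the affine equality $\mathbf{b}^{\intercal}-\bm\alpha^{\intercal}\mathbf{E}-\mathbf{v}^{\intercal}=\bm\theta^{\intercal}\mathbf{R}^{\intercal}$ of \eqref{eq:qp-dual}; the remaining constant is $\mathbf{c}^{\intercal}\bm\alpha$. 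This motivates the precise form of \eqref{eq:qp-dual}.

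Next I would verify weak duality directly: for any $\mathbf{x}$ feasible for \eqref{eq:qp} and any $(\bm\alpha,\mathbf{v},\bm\theta)$ feasible for \eqref{eq:qp-dual}, substituting $\mathbf{b}=\mathbf{E}^{\intercal}\bm\alpha+\mathbf{v}+\mathbf{R}\bm\theta$ into $\mathbf{b}^{\intercal}\mathbf{x}$ and using $\mathbf{E}\mathbf{x}=\mathbf{c}$ yields
\[
\Big(\tfrac12\mathbf{x}^{\intercal}\mathbf{A}\mathbf{x}+\mathbf{b}^{\intercal}\mathbf{x}\Big)-\Big(-\tfrac12\bm\theta^{\intercal}\bm\theta+\mathbf{c}^{\intercal}\bm\alpha\Big)=\tfrac12\big\|\mathbf{R}^{\intercal}\mathbf{x}+\bm\theta\big\|^{2}+\mathbf{v}^{\intercal}\mathbf{x}\ \ge\ 0.
\]
Now let $\mathbf{x}^{\star}$ be an optimal solution of \eqref{eq:qp}, which exists by hypothesis. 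Since \eqref{eq:qp} is convex with only affine constraints, the KKT conditions hold at $\mathbf{x}^{\star}$: there exist $\bm\alpha$ and $\mathbf{v}\ge0$ with $\mathbf{A}\mathbf{x}^{\star}+\mathbf{b}=\mathbf{E}^{\intercal}\bm\alpha+\mathbf{v}$ and $\mathbf{v}^{\intercal}\mathbf{x}^{\star}=0$. Taking $\bm\theta^{\star}=-\mathbf{R}^{\intercal}\mathbf{x}^{\star}$, the stationarity equation reads $\mathbf{b}-\mathbf{E}^{\intercal}\bm\alpha-\mathbf{v}=-\mathbf{R}\mathbf{R}^{\intercal}\mathbf{x}^{\star}=\mathbf{R}\bm\theta^{\star}$, so $(\bm\alpha,\mathbf{v},\bm\theta^{\star})$ is feasible for \eqref{eq:qp-dual}, and in the displayed identity both terms on the right-hand side vanish (complementary slackness kills $\mathbf{v}^{\intercal}\mathbf{x}^{\star}$ and the choice of $\bm\theta^{\star}$ kills the squared norm). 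Hence the two objective values coincide, and by weak duality this common value is simultaneously the minimum of \eqref{eq:qp} and the maximum of \eqref{eq:qp-dual}.

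The step that needs the most care is the appeal to the KKT conditions at $\mathbf{x}^{\star}$ without any Slater-type interior-point assumption: this is legitimate precisely because all constraints of \eqref{eq:qp} are affine (the linear-constraint qualification), so the only standing hypothesis required is that \eqref{eq:qp} be solvable, exactly as in the statement. Alternatively one may just invoke the strong duality theorem for convex programs with affine constraints \citep{Boyd2004convex} to obtain the zero gap and read off the dual optimizer, but the square-completion route above is self-contained. A minor but easily mishandled point is the sign bookkeeping when eliminating $\mathbf{u}$ and the coupling multiplier $\bm\theta$, which must leave the dual equality in the form $\mathbf{b}^{\intercal}-\bm\alpha^{\intercal}\mathbf{E}-\mathbf{v}^{\intercal}=\bm\theta^{\intercal}\mathbf{R}^{\intercal}$ rather than with a spurious minus sign; it helps to observe that the $\bm\theta$ of \eqref{eq:qp-dual} is literally the multiplier of $\mathbf{R}^{\intercal}\mathbf{x}=\mathbf{u}$, equal to $-\mathbf{R}^{\intercal}\mathbf{x}^{\star}$ at optimum.
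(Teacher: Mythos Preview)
Your argument is correct and reaches the same conclusion as the paper, but the route differs in a useful way. The paper's proof works directly with the Lagrangian $\mathcal{L}(\mathbf{x},\bm{\alpha},\mathbf{v})=\tfrac12\mathbf{x}^{\intercal}\mathbf{A}\mathbf{x}+(\mathbf{b}^{\intercal}-\bm{\alpha}^{\intercal}\mathbf{E}-\mathbf{v}^{\intercal})\mathbf{x}+\mathbf{c}^{\intercal}\bm{\alpha}$ and computes the dual function $g(\bm{\alpha},\mathbf{v})=\inf_{\mathbf{x}}\mathcal{L}$ by a case split: if $\mathbf{b}-\mathbf{E}^{\intercal}\bm{\alpha}-\mathbf{v}$ lies in the range of $\mathbf{R}$, say equal to $\mathbf{R}\bm{\theta}$, then the infimum over $\mathbf{x}$ (using that $\mathbf{R}^{\intercal}$ is onto because $\mathbf{R}$ has full column rank) is $-\tfrac12\bm{\theta}^{\intercal}\bm{\theta}+\mathbf{c}^{\intercal}\bm{\alpha}$; otherwise $g=-\infty$. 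This identifies \eqref{eq:qp-dual} as the Lagrangian dual, after which the zero duality gap is obtained by appeal to the strong duality theorem. Your proof instead gives a self-contained strong-duality argument: you verify weak duality by the square-completion identity and then exhibit a dual feasible point attaining the primal optimum by reading it off the KKT system at $\mathbf{x}^{\star}$ (valid here under the linear-constraint qualification). The paper's version is shorter and makes the role of the full-column-rank hypothesis transparent (it guarantees the surjectivity of $\mathbf{R}^{\intercal}$ needed to evaluate the infimum and the uniqueness of $\bm{\theta}$); your version is more explicit about \emph{why} no duality gap occurs and does not rely on an external strong-duality theorem. Your ``alternatively one may just invoke strong duality'' remark is in fact exactly what the paper does.
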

\begin{proof}
The Lagrangian of \eqref{eq:qp} is
\begin{equation}
\mathcal{L}\left(\mathbf{x},\bm{\alpha},\mathbf{v}\right)=\frac{1}{2}\mathbf{x}^{\intercal}\mathbf{A}\mathbf{x}+\left(\mathbf{b}^{\intercal}-\bm{\alpha}^{\intercal}\mathbf{E}-\mathbf{v}^{\intercal}\right)\mathbf{x}+\mathbf{c}^{\intercal}\bm{\alpha}
\end{equation}
Then the dual problem of \eqref{eq:qp} can be written as
\begin{equation}
\begin{array}{cl}
\max\limits _{\bm{\alpha},\mathbf{v}} & g\left(\bm{\alpha},\mathbf{v}\right)\\
\mathrm{s.t.} & \mathbf{v}\ge0.
\end{array}
\end{equation}
with
\begin{equation}
g\left(\bm{\alpha},\mathbf{v}\right)=\inf_{\mathbf{x}}\mathcal{L}\left(\mathbf{x},\bm{\alpha},\mathbf{v}\right)
\end{equation}
We now analyze the value of $g\left(\bm{\alpha},\mathbf{v}\right)$
in different cases.
\begin{description}
\item [{Case (i)}] There is a $\bm{\theta}$ such that
\begin{equation}
\mathbf{b}^{\intercal}-\bm{\alpha}^{\intercal}\mathbf{E}-\mathbf{v}^{\intercal}=\bm{\theta}^{\intercal}\mathbf{R}^{\intercal}\label{eq:case-1}
\end{equation}
 We have $\mathcal{L}\left(\mathbf{x},\bm{\alpha},\mathbf{v}\right)=\frac{1}{2}\left(\mathbf{R}^{\intercal}\mathbf{x}\right)^{\intercal}\left(\mathbf{R}^{\intercal}\mathbf{x}\right)+\bm{\theta}^{\intercal}\left(\mathbf{R}^{\intercal}\mathbf{x}\right)+\mathbf{c}^{\intercal}\bm{\alpha}$
and $g\left(\bm{\alpha},\mathbf{v}\right)=-\frac{1}{2}\bm{\theta}^{\intercal}\bm{\theta}+\mathbf{c}^{\intercal}\bm{\alpha}$.
\item [{Case (ii)}] There is no $\bm{\theta}$ satisfying \eqref{eq:case-1}.
It is easy to see that we can find an $\mathbf{x}$ such that $\mathbf{R}^{\intercal}\mathbf{x}=0$
and $\left(\mathbf{b}^{\intercal}-\bm{\alpha}^{\intercal}\mathbf{E}-\mathbf{v}^{\intercal}\right)\mathbf{x}\neq0$.
Then $g\left(\bm{\alpha},\mathbf{v}\right)=-\infty$.
\end{description}

Combining the above results yields the conclusion of the lemma.

\end{proof}

\section{Optimization procedure for MMC\label{sec:Optimization-procedure-for-mmc}}

The optimization algorithm for solving MMC problems in our experiments
is described by Algorithm \ref{alg:mmc}. It can be seen that this
algorithm also combines global search and local search techniques
through coarse graining like the algorithm for \MMMC proposed in
this paper.

\begin{algorithm}
\begin{algorithmic}[1]

\STATE generate a coarse-grained set $S^{c}$ with cardinality $N^{c}$
and normalized weights $\mathbf{c}=(c_{1},\ldots,c_{N^{c}})^{\intercal}$
from $\mathcal{S}$ by the $k$-medoids algorithm

\STATE solve the MMC problem on $S^{c}$ by the SDP relaxation algorithm
\citep{xu2005unsupervised} to get class labels $\mathbf{y}^{c}=(y_{1}^{c},\ldots,y_{N^{c}}^{c})$
of $\mathcal{S}^{c}$

\STATE calculate class labels $\mathbf{y}^{(0)}=(y_{1},\ldots,y_{\left|\mathcal{S}\right|})$
of data points in $\mathcal{S}$ from $\mathbf{y}^{c}$

\STATE solve the MMC problem on $\mathcal{S}$ by the local search
algorithm proposed in \citep{zhang2009maximum} starting from $\mathbf{y}=\mathbf{y}^{(0)}$.

\end{algorithmic}

\protect\caption{Optimization procedure for MMC\label{alg:mmc}}
\end{algorithm}

\section{Implementation procedure of PCCA+\label{sec:Implementation-procedure-of-PCCA}}

In this paper, we perform the PCCA+ clustering as shown in Algorithm
\ref{alg:pcca}.

\begin{algorithm}
\begin{algorithmic}[1]

\STATE partition all the data into $N^{c}$ bins $\mathbf{x}_{1}^{c},\ldots,\mathbf{x}_{N^{c}}^{c}$
by the $k$-medoids algorithm

\STATE estimate the transition matrix $\mathbf{P}=[P_{ij}]=[\Pr(\mathbf{x}_{t+\Delta t}\in\mathbf{x}_{j}^{c}|\mathbf{x}_{t+\Delta t}\in\mathbf{x}_{i}^{c})]$
by the maximum likelihood algorithm in \citep{prinz2011markov}

\STATE apply the Markov compression algorithm in \citep{deuflhard2005robust}
to lump the $N^{c}$ bins into $\kappa$ metastable state.

\end{algorithmic}

\protect\caption{Optimization procedure for PCCA+\label{alg:pcca}}
\end{algorithm}

\section{Description of Model I and Model II\label{sec:Description-of-ModelI-II}}

Model I is governed by the Fokker-Planck equation
\begin{equation}
\mathrm{d}\mathbf{x}_{t}=-\left[\begin{array}{c}
\frac{1}{4}\frac{\partial}{\partial x^{(1)}}\\
9\frac{\partial}{\partial x^{(2)}}
\end{array}\right]U_{\mathrm{I}}\left(\mathbf{x}\right)\mathrm{d}t+\left[\begin{array}{cc}
\frac{\sqrt{2}}{2}\\
 & 3\sqrt{2}
\end{array}\right]\mathrm{d}\mathbf{u}_{t}\label{eq:Model-I}
\end{equation}
where $\mathbf{x}_{t}=(x_{t}^{(1)},x_{t}^{(2)})$ denotes the system
state at time $t$, $\mathbf{u}_{t}$ is a two-dimensional Wiener
process, and
\begin{eqnarray}
U_{\mathrm{I}}\left(\mathbf{x}\right) & = & -8\sum_{(\mu_{1},\mu_{2})\in\{-1,0,1\}\times\{-\frac{1}{8},\frac{1}{8}\}}\exp\Bigg(-8\left(x^{(1)}-\mu_{1}\right)^{2}\nonumber \\
 &  & -200\left(\frac{x^{(2)}}{6}-\mu_{2}\right)^{2}\Bigg)\nonumber \\
 &  & +\frac{4}{5}\left(x^{(1)}\right)^{4}+\frac{16}{9}\left(x^{(2)}\right)^{2}\label{eq:Model-II}
\end{eqnarray}
The stochastic differential equation of Model II is
\begin{equation}
\mathrm{d}\mathbf{x}_{t}=-\frac{1}{\gamma}\left[\begin{array}{c}
\frac{\partial}{\partial x^{(1)}}\\
\frac{\partial}{\partial x^{(2)}}
\end{array}\right]U_{\mathrm{II}}\left(\mathbf{x}\right)\mathrm{d}t+\sqrt{\frac{2}{\gamma}}\mathrm{d}\mathbf{u}_{t}
\end{equation}
with
\begin{eqnarray}
U_{\mathrm{II}}\left(\mathbf{x}\right) & = & -4\gamma\exp\left(-16\left(\left(\left\Vert \mathbf{x}\right\Vert -1.6\right)^{2}+\max\left\{ \theta-\frac{\pi}{2},0\right\} ^{2}\right)\right)\nonumber \\
 &  & -4\gamma\exp\left(-0.8\left(x^{(1)}+1\right)^{2}-32\left(x^{(2)}-0.5\right)^{2}\right)\nonumber \\
 &  & -4\gamma\exp\left(-0.8\left(x^{(1)}+1\right)^{2}-32\left(x^{(2)}+0.5\right)^{2}\right)\nonumber \\
 &  & +\frac{1}{5}\gamma\left(\left(x^{(1)}\right)^{4}+\left(x^{(2)}\right)^{4}\right)
\end{eqnarray}
$\theta=\mathrm{atan2}\left(x^{(2)},x^{(1)}\right)$ and $\gamma=1.67$.
It is easy to verify that Model I and Model II are time-reversible
diffusion processes, and their equilibrium distributions are $\pi\left(\mathbf{x}\right)\propto\exp\left(-U_{\mathrm{I}}\left(\mathbf{x}\right)\right)$
and $\pi\left(\mathbf{x}\right)\propto\exp\left(-U_{\mathrm{II}}\left(\mathbf{x}\right)\right)$
separately.

Moreover, we utilize the Euler-Maruyama method \citep{Kloeden1992Numerical}
to solve \eqref{eq:Model-I} and \eqref{eq:Model-II} in this paper.

\bibliographystyle{elsarticle-num}
\bibliography{mmc_metastable}

\begin{thebibliography}{10}
\expandafter\ifx\csname url\endcsname\relax
  \def\url#1{\texttt{#1}}\fi
\expandafter\ifx\csname urlprefix\endcsname\relax\def\urlprefix{URL }\fi
\expandafter\ifx\csname href\endcsname\relax
  \def\href#1#2{#2} \def\path#1{#1}\fi

\bibitem{wu2013iwann}
H.~Wu, Maximum margin clustering for state decomposition of metastable systems,
  in: I.~Rojas, G.~Joya, J.~Cabestany (Eds.), Proceedings of the 12th
  International Work-Conference on Artificial Neural Networks, Vol. 7902 of
  Lecture Notes in Computer Science, Springer, Tenerife, Spain, 2013, pp.
  556--565.

\bibitem{noe2008transition}
F.~No{\'e}, S.~Fischer, Transition networks for modeling the kinetics of
  conformational change in macromolecules, Current opinion in structural
  biology 18~(2) (2008) 154--162.

\bibitem{biancalani2012noise}
T.~Biancalani, T.~Rogers, A.~McKane, Noise-induced metastability in biochemical
  networks, Physical Review E 86~(1) (2012) 010106.

\bibitem{berglund2002metastability}
N.~Berglund, B.~Gentz, Metastability in simple climate models: Pathwise
  analysis of slowly driven langevin equations, Stochastics and Dynamics 2~(03)
  (2002) 327--356.

\bibitem{noe2007hierarchical}
F.~No{\'e}, I.~Horenko, C.~Sch{\"u}tte, J.~Smith, Hierarchical analysis of
  conformational dynamics in biomolecules: Transition networks of metastable
  states, Journal of Chemical Physics 126 (2007) 155102.

\bibitem{schwantes2013improvements}
C.~R. Schwantes, V.~S. Pande, Improvements in markov state model construction
  reveal many non-native interactions in the folding of ntl9, Journal of
  chemical theory and computation 9~(4) (2013) 2000--2009.

\bibitem{prinz2011markov}
J.~Prinz, H.~Wu, M.~Sarich, B.~Keller, M.~Senne, M.~Held, J.~Chodera,
  C.~Sch{\"u}tte, F.~No{\'e}, Markov models of molecular kinetics: Generation
  and validation, Journal of Chemical Physics 134 (2011) 174105.

\bibitem{aldhaheri1989aggregation}
R.~Aldhaheri, H.~Khalil, Aggregation and optimal control of nearly completely
  decomposable {M}arkov chains, in: Proceedings of the 28th IEEE Conference on
  Decision and Control, IEEE, 1989, pp. 1277--1282.

\bibitem{chodera2006long}
J.~Chodera, W.~Swope, J.~Pitera, K.~Dill, Long-time protein folding dynamics
  from short-time molecular dynamics simulations, Multiscale Modeling \&
  Simulation 5~(4) (2006) 1214--1226.

\bibitem{sarich2010approximation}
M.~Sarich, F.~No{\'e}, C.~Sch{\"u}tte, On the approximation quality of {M}arkov
  state models, SIAM Multiscale Model. Simul. 8~(4) (2010) 1154--1177.

\bibitem{chodera2014markov}
J.~D. Chodera, F.~No{\'e}, Markov state models of biomolecular conformational
  dynamics, Current Opinion in Structural Biology 25 (2014) 135--144.

\bibitem{noe2013projected}
F.~No{\'e}, H.~Wu, J.-H. Prinz, N.~Plattner, Projected and hidden markov models
  for calculating kinetics and metastable states of complex molecules, The
  Journal of chemical physics 139~(18) (2013) 184114.

\bibitem{groningen2001essential}
N.~Groningen, Essential dynamics of reversible peptide folding: memory-free
  conformational dynamics governed by internal hydrogen bonds, Journal of
  Molecular Biology 309~(1) (2001) 299--313.

\bibitem{swope2004describing}
W.~Swope, J.~Pitera, F.~Suits, M.~Pitman, M.~Eleftheriou, B.~Fitch, R.~Germain,
  A.~Rayshubski, T.~Ward, Y.~Zhestkov, R.~Zhou, Describing protein folding
  kinetics by molecular dynamics simulations. 2. example applications to
  alanine dipeptide and a $\beta$-hairpin peptide, Journal of Physical
  Chemistry B 108~(21) (2004) 6582--6594.

\bibitem{sorin2005exploring}
E.~Sorin, V.~Pande, Exploring the helix-coil transition via all-atom
  equilibrium ensemble simulations, Biophysical Journal 88~(4) (2005)
  2472--2493.

\bibitem{elmer2005foldamerII}
S.~Elmer, S.~Park, V.~Pande, Foldamer dynamics expressed via markov state
  models. {II}. state space decomposition, Journal of chemical physics 123
  (2005) 114903.

\bibitem{becker1997geometric}
O.~M. Becker, Geometric versus topological clustering: an insight into
  conformation mapping, Proteins: Structure, Function, and Bioinformatics
  27~(2) (1997) 213--226.

\bibitem{daura1999folding}
X.~Daura, W.~F. van Gunsteren, A.~E. Mark, Folding--unfolding thermodynamics of
  a $\beta$-heptapeptide from equilibrium simulations, Proteins: structure,
  function, and bioinformatics 34~(3) (1999) 269--280.

\bibitem{chema2003nearest}
D.~Chema, A.~Goldblum, The ``nearest single neighbor'' method finding families
  of conformations within a sample, Journal of chemical information and
  computer sciences 43~(1) (2003) 208--217.

\bibitem{glattli2004valine}
A.~Gl{\"a}ttli, D.~Seebach, W.~F. van Gunsteren, Do valine side chains have an
  influence on the folding behavior of $\beta$-substituted $\beta$-peptides?,
  Helvetica chimica acta 87~(10) (2004) 2487--2506.

\bibitem{shao2007clustering}
J.~Shao, S.~W. Tanner, N.~Thompson, T.~E. Cheatham, Clustering molecular
  dynamics trajectories: 1. characterizing the performance of different
  clustering algorithms, Journal of Chemical Theory and Computation 3~(6)
  (2007) 2312--2334.

\bibitem{yao2009topological}
Y.~Yao, J.~Sun, X.~Huang, G.~R. Bowman, G.~Singh, M.~Lesnick, L.~J. Guibas,
  V.~S. Pande, G.~Carlsson, Topological methods for exploring low-density
  states in biomolecular folding pathways, Journal of chemical physics 130
  (2009) 144115.

\bibitem{keller2010comparing}
B.~Keller, X.~Daura, W.~F. van Gunsteren, Comparing geometric and kinetic
  cluster algorithms for molecular simulation data, Journal of chemical physics
  132~(7) (2010) 074110.

\bibitem{noe2009constructing}
F.~No{\'e}, C.~Sch{\"u}tte, E.~Vanden-Eijnden, L.~Reich, T.~R. Weikl,
  Constructing the equilibrium ensemble of folding pathways from short
  off-equilibrium simulations, Proceedings of the National Academy of Sciences
  106~(45) (2009) 19011--19016.

\bibitem{deuflhard2000identification}
P.~Deuflhard, W.~Huisinga, A.~Fischer, C.~Sch{\"u}tte, Identification of almost
  invariant aggregates in reversible nearly uncoupled markov chains, Linear
  Algebra and its Applications 315~(1) (2000) 39--59.

\bibitem{deuflhard2005robust}
P.~Deuflhard, M.~Weber, Robust perron cluster analysis in conformation
  dynamics, Linear algebra and its applications 398 (2005) 161--184.

\bibitem{mehrmann2008svd}
V.~Mehrmann, D.~Szyld, E.~Virnik, An {SVD} approach to identifying metastable
  states of {M}arkov chains, Electronic Transactions on Numerical Analysis 29
  (2008) 46--69.

\bibitem{jain2012identifying}
A.~Jain, G.~Stock, Identifying metastable states of folding proteins, Journal
  of Chemical Theory and Computation 8~(10).

\bibitem{bowman2012improved}
G.~R. Bowman, Improved coarse-graining of markov state models via explicit
  consideration of statistical uncertainty, Journal of chemical physics 137
  (2012) 134111.

\bibitem{kellogg2012evaluation}
E.~H. Kellogg, O.~F. Lange, D.~Baker, Evaluation and optimization of discrete
  state models of protein folding, Journal of Physical Chemistry B 116~(37)
  (2012) 11405--11413.

\bibitem{mcgibbon2014statistical}
R.~T. McGibbon, C.~R. Schwantes, V.~S. Pande, Statistical model selection for
  markov models of biomolecular dynamics, Journal of Physical Chemistry B.

\bibitem{chodera2007automatic}
J.~Chodera, N.~Singhal, V.~Pande, K.~Dill, W.~Swope, Automatic discovery of
  metastable states for the construction of markov models of macromolecular
  conformational dynamics, Journal of Chemical Physics 126 (2007) 155101.

\bibitem{crammer2001algorithmic}
K.~Crammer, Y.~Singer, On the algorithmic implementation of multiclass
  kernel-based vector machines, Journal of Machine Learning Research 2 (2001)
  265--292.

\bibitem{xu2007convex}
L.~Xu, Convex large margin training techniques: Unsupervised, semi-supervised,
  and robust support vector machines, Ph.D. thesis, University of Waterloo,
  Waterloo, Ontario, Canada (2007).

\bibitem{vapnik1998statistical}
V.~Vapnik, Statistical Learning Theory, Wiley, New York, 1998.

\bibitem{bottou1994comparison}
L.~Bottou, C.~Cortes, J.~S. Denker, H.~Drucker, I.~Guyon, L.~D. Jackel,
  Y.~LeCun, U.~A. Muller, E.~Sackinger, P.~Simard, V.~Vapnik, Comparison of
  classifier methods: a case study in handwriting digit recognition, in:
  Proceedings of the 12th International Conference on Pattern Recognition,
  Vol.~2, IEEE, IEEE Computer Society Press, 1994, pp. 77--82.

\bibitem{friedman1996another}
J.~Friedman, Another approach to polychotomous classifcation, Tech. rep.,
  Department of Statistics, Stanford University (1996).

\bibitem{allwein2001reducing}
E.~L. Allwein, R.~E. Schapire, Y.~Singer, Reducing multiclass to binary: A
  unifying approach for margin classifiers, Journal of Machine Learning
  Research 1 (2001) 113--141.

\bibitem{xu2004maximum}
L.~Xu, J.~Neufeld, B.~Larson, D.~Schuurmans, Maximum margin clustering,
  Advances in Neural Information Processing Systems 17 (2004) 1537--1544.

\bibitem{xu2005unsupervised}
L.~Xu, D.~Schuurmans, Unsupervised and semi-supervised multi-class support
  vector machines, in: Proceedings of the National Conference on Artificial
  Intelligence, Vol.~20, AAAI, 2005, p. 904.

\bibitem{valizadegan2006generalized}
H.~Valizadegan, R.~Jin, Generalized maximum margin clustering and unsupervised
  kernel learning, in: Advances in Neural Information Processing Systems,
  Vol.~19, 2006, pp. 1417--1424.

\bibitem{zhao2008efficient}
B.~Zhao, F.~Wang, C.~Zhang, Efficient multiclass maximum margin clustering, in:
  Proceedings of the 25th International Conference on Machine learning, ACM,
  2008, pp. 1248--1255.

\bibitem{zhang2009maximum}
K.~Zhang, I.~Tsang, J.~Kwok, Maximum margin clustering made practical, IEEE
  Transactions on Neural Networks 20~(4) (2009) 583--596.

\bibitem{Rahimi2007Random}
A.~Rahimi, B.~Recht, Random features for large-scale kernel machines, in:
  J.~Platt, D.~Koller, Y.~Singer, S.~Roweis (Eds.), Advances in Neural
  Information Processing Systems, Vol.~20, MIT Press, Cambridge, MA, 2008, pp.
  1177--1184.

\bibitem{Pham2013fast}
N.~Pham, R.~Pagh, Fast and scalable polynomial kernels via explicit feature
  maps, in: Proceedings of the 19th ACM SIGKDD International Conference on
  Knowledge Discovery and Data Mining, ACM, New York, 2013, pp. 239--247.

\bibitem{Kwak2013nonlinear}
N.~Kwak, Nonlinear projection trick in kernel methods: An alternative to the
  kernel trick, IEEE Transactions on Neural Networks and Learning Systems
  24~(12) (2013) 2113--2119.

\bibitem{horn1988matrix}
R.~Horn, C.~Johnson, Matrix analysis, Cambridge university press, New York,
  1988.

\bibitem{genova2011linear}
K.~Genova, V.~Guliashki, Linear integer programming methods and approaches--a
  survey, Cybernetics And Information Technologies 11~(1).

\bibitem{mosek}
{MOSEK ApS}, {Mosek}: {High performance software for large-scale LP, QP, SOCP,
  SDP and MIP including interfaces to C, Java, MATLAB, .NET, R and Python},
  version 7.0, \url{http://www.mosek.com} (2012).

\bibitem{gurobi}
{Gurobi Optimization Inc.}, Gurobi optimizer: State-of-the-art mathematical
  programming solver, version 5.6, \url{http://www.gurobi.com/} (2014).

\bibitem{pryor2011faster}
J.~Pryor, J.~W. Chinneck, Faster integer-feasibility in mixed-integer linear
  programs by branching to force change, Computers \& Operations Research
  38~(8) (2011) 1143--1152.

\bibitem{trevor2001elements}
T.~Hastie, R.~Tibshirani, J.~J.~H. Friedman, The elements of statistical
  learning, Springer, New York, 2001.

\bibitem{cvx}
M.~Grant, S.~Boyd, {CVX}: Matlab software for disciplined convex programming,
  version 2.0 beta, \url{http://cvxr.com/cvx} (2013).

\bibitem{weber2003improved}
M.~Weber, Improved {P}erron cluster analysis, Tech. Rep. ZIB-Report 03-04,
  Konrad-Zuse-Zentrum f{\"u}r Informationstechnik Berlin (2003).

\bibitem{Asuncion+Newman:2007}
A.~Asuncion, D.~Newman, {UCI} machine learning repository,
  \url{http://www.ics.uci.edu/$\sim$mlearn/{MLR}epository.html} (2007).

\bibitem{Feliks2013variational}
F.~N{\"u}ske, B.~G. Keller, A.~S.~M. Guillermo P{\'e}rez-Hern{\'a}ndez,
  F.~No{\'e}, Variational approach to molecular kinetics, submitted to {Journal
  of Chemical Theory and Computation} (2013).

\bibitem{noe2013variational}
F.~No{\'e}, F.~N{\"u}ske, A variational approach to modeling slow processes in
  stochastic dynamical systems, SIAM Multiscale Model. Simul. 11~(2) (2013)
  635--655.

\bibitem{shalev2011pegasos}
S.~Shalev-Shwartz, Y.~Singer, N.~Srebro, A.~Cotter, Pegasos: Primal estimated
  sub-gradient solver for svm, Mathematical programming 127~(1) (2011) 3--30.

\bibitem{tsang2005core}
I.~W. Tsang, J.~T. Kwok, P.-M. Cheung, Core vector machines: Fast svm training
  on very large data sets, in: Journal of Machine Learning Research, 2005, pp.
  363--392.

\bibitem{Guillermo2013identification}
G.~P{\'e}rez-Hern{\'a}ndez, F.~Paul, T.~Giogino, G.~de~Fabritiis, F.~No{\'e},
  Identification of slow molecular order parameters for markov model
  construction, Journal of Chemical Physics 139 (2013) 015102.

\bibitem{DBLP:journals/corr/BelletHS13}
A.~Bellet, A.~Habrard, M.~Sebban, A survey on metric learning for feature
  vectors and structured data, CoRR:abs/1306.6709,
  \url{http://arxiv.org/abs/1306.6709} (2013).

\bibitem{Boyd2004convex}
S.~Boyd, L.~Vandenberghe, Convex Optimization, Cambridge University Press, New
  York, 2004.

\bibitem{Kloeden1992Numerical}
P.~E. Kloeden, R.~Pearson, Numerical Solution of Stochastic Differential
  Equations, Springer, Berlin, 1992.

\end{thebibliography}

\end{document}